\icmltitlerunning{Message Passing Simplicial Networks}
\newtheorem{theorem}{Theorem}
\newtheorem{proposition}[theorem]{Proposition}
\newtheorem{definition}[theorem]{Definition}
\def\eqref#1{equation~\ref{#1}}
\def\1{\bm{1}}
\def\eps{{\epsilon}}
\def\rmB{{\mathbf{B}}}
\def\rmH{{\mathbf{H}}}
\def\rmP{{\mathbf{P}}}
\def\rmT{{\mathbf{T}}}
\def\mA{A}
\def\mB{B}
\def\mD{D}
\DeclareMathAlphabet{\mathsfit}{\encodingdefault}{\sfdefault}{m}{sl}
\SetMathAlphabet{\mathsfit}{bold}{\encodingdefault}{\sfdefault}{bx}{n}
\def\gB{{\mathcal{B}}}
\def\gC{{\mathcal{C}}}
\def\gK{{\mathcal{K}}}
\def\gN{{\mathcal{N}}}
\def\gO{{\mathcal{O}}}
\def\gS{{\mathcal{S}}}
\def\gV{{\mathcal{V}}}
\def\sC{{\mathbb{C}}}
\def\sR{{\mathbb{R}}}
\newcommand{\R}{\mathbb{R}}
\newcommand{\nup}{\gN_{\uparrow}}
\newcommand{\ndown}{\gN_{\downarrow}}
\newcommand{\da}{\downarrow}
\newcommand{\ua}{\uparrow}
\newcommand{\rank}{\operatorname{rank}}
\newtheorem{lemma}[theorem]{Lemma}
\newtheorem{corollary}[theorem]{Corollary}
\newtheorem{remark}[theorem]{Remark}
\renewcommand{\eqref}[1]{(\ref{#1})} 
\let\expandafter\oldproof\csname\string\proof\endcsname
\let\oldendproof\endproof
\renewenvironment{proof}[1][\proofname]{%
  \oldproof[\bf #1]%
}{\oldendproof}
\newcommand*{\ldblbrace}{\{\mskip-5mu\{}
\newcommand*{\rdblbrace}{\}\mskip-5mu\}}
\begin{document}

\twocolumn[
\icmltitle{Weisfeiler and Lehman Go Topological: Message Passing Simplicial Networks}



\icmlsetsymbol{equal}{*}

\begin{icmlauthorlist}
\icmlauthor{Cristian Bodnar}{equal,cambridge}
\icmlauthor{Fabrizio Frasca}{equal,twitter,imperial}
\icmlauthor{Yu Guang Wang}{equal,mpi,sjtu,unsw} \\
\icmlauthor{Nina Otter}{ucla} 
\icmlauthor{Guido Mont\'{u}far}{equal,mpi,ucla}
\icmlauthor{Pietro Li\`{o}}{cambridge}
\icmlauthor{Michael M. Bronstein}{twitter,imperial}
\end{icmlauthorlist}

%

\icmlaffiliation{unsw}{School of Mathematics and Statistics,
 University of New South Wales, Sydney, Australia}
\icmlaffiliation{ucla}{Department of Mathematics and Department of Statistics, University of California, Los Angeles, USA}
\icmlaffiliation{mpi}{Max Planck Institute for Mathematics in the Sciences, Leipzig, Germany}
\icmlaffiliation{sjtu}{Institute of Natural Sciences and School of Mathematical Sciences,
Shanghai Jiao Tong University, China}
\icmlaffiliation{cambridge}{Department of Computer Science and Technology, University of Cambridge, UK}
\icmlaffiliation{twitter}{Twitter, UK}
\icmlaffiliation{imperial}{Department of Computing, Imperial College London, UK}

\icmlcorrespondingauthor{Cristian Bodnar}{cb2015@cam.ac.uk}
\icmlcorrespondingauthor{Fabrizio Frasca}{ffrasca@twitter.com}
\icmlcorrespondingauthor{Yu Guang Wang}{yuguang.wang@mis.mpg.de}

\icmlkeywords{Simplicial Complexes, Message Passing, WL, Topology, Higher-Order Structures, Hodge Laplacian}

\vskip 0.3in
]



\printAffiliationsAndNotice{\icmlEqualContribution}


\begin{abstract}
The pairwise interaction paradigm of graph machine learning has predominantly governed the modelling of relational systems. However, graphs alone cannot capture the multi-level interactions present in many complex systems and the expressive power of such schemes was proven to be limited. To overcome these limitations, we propose Message Passing Simplicial Networks (MPSNs), a class of models that perform message passing on simplicial complexes (SCs). To theoretically analyse the expressivity of our model we introduce a Simplicial Weisfeiler-Lehman (SWL) colouring procedure for distinguishing non-isomorphic SCs. We relate the power of SWL to the problem of distinguishing non-isomorphic graphs and show that SWL and MPSNs are strictly more powerful than the WL test and not less powerful than the 3-WL test. We deepen the analysis by comparing our model with traditional graph neural networks (GNNs) with ReLU activations in terms of the number of linear regions of the functions they can represent. We empirically support our theoretical claims by showing that MPSNs can distinguish challenging strongly regular graphs for which GNNs fail and, when equipped with orientation equivariant layers, they can improve classification accuracy in oriented SCs compared to a GNN baseline. 

\end{abstract}

\vspace{-20pt}
\section{Introduction}\label{sec:intro}

Graphs are among the most common abstractions for complex systems of relations and interactions, arising in a broad range of fields from social science to high energy physics. %
Graph neural networks (GNNs), which trace their origins to the 1990s \cite{sperduti1994encoding, goller1996learning,gori2005new,scarselli2009graph,BrZaSzLe2013,li2015gated}, have recently achieved great success in learning tasks with graph-structured data. 

\begin{figure}[t]
    \centering
    \includegraphics[clip=true, trim=0cm 0cm 0cm .3cm,width=\columnwidth]{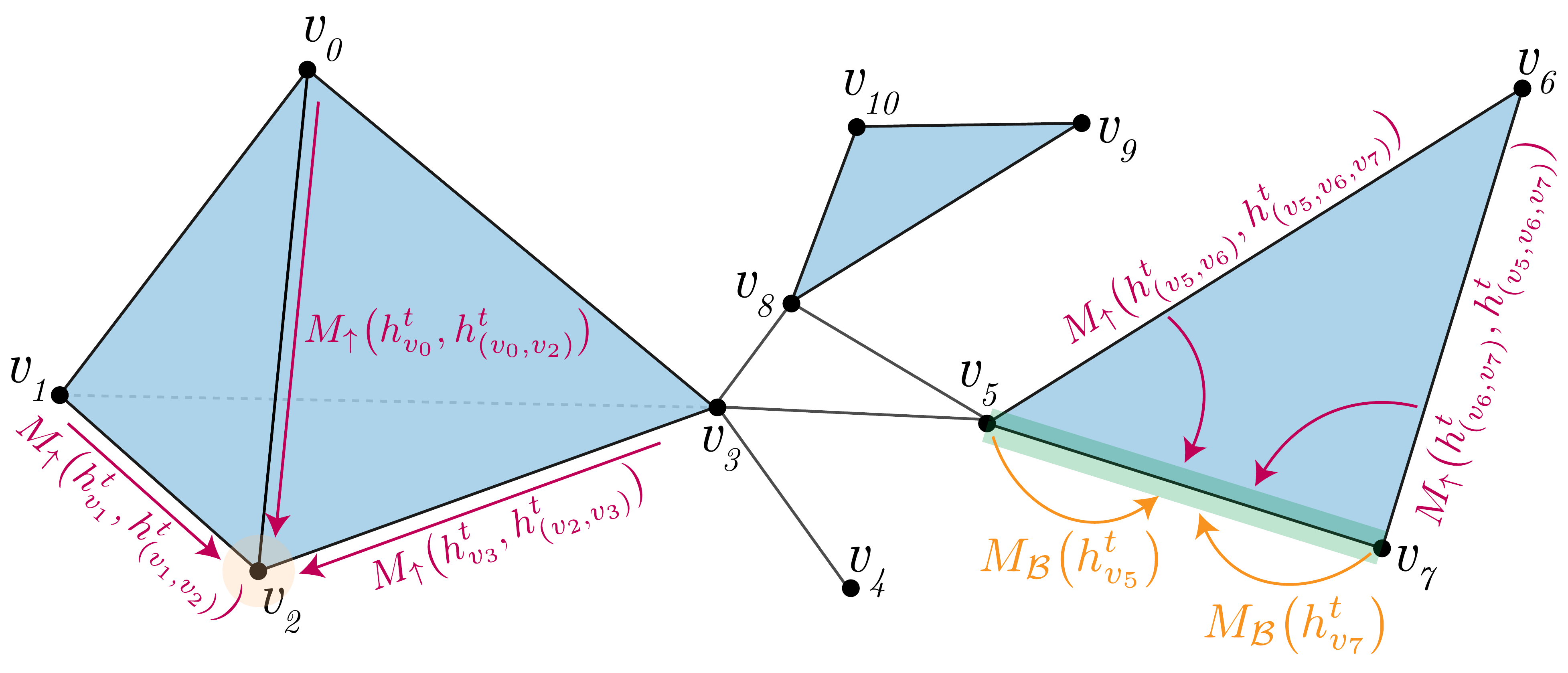} 
    \vspace{-17pt}
    \caption{Message Passing with upper and boundary adjacencies illustrated for vertex $v_2$ and edge $(v_5, v_7)$ in the complex.}
    \label{fig:sin}\vspace{-12pt}
\end{figure}

%
GNNs typically apply a local permutation-invariant  functions aggregating the neighbour features for each node, resulting in a permutation equivariant function on the graph \cite{maron2018invariant}. 
%
%
%
The design of the local aggregator is important: if chosen to be an injective function, the resulting GNN is equivalent in its expressive power to the Weisfeiler-Lehman (WL) graph  isomorphism test \cite{weisfeiler1968reduction,GIN,morris2019weisfeiler}. Due to their equivalence to the WL algorithm, message-passing type GNNs are unable to learn certain tasks on graphs. In particular, they are limited in their capability of detecting graph structure such as triangles or cliques \cite{chen2020can}. 
While other architectures equivalent to higher-dimensional $k$-WL tests have been proposed \cite{maron2019provably}, they suffer from high computational and memory complexity, and lack the key property of GNNs: Locality.


%
We tackle this problem by considering local higher-order interactions. Among many modelling frameworks that have been proposed to describe complex systems with higher-order relations \citep{battiston2020networks}, we specifically focus on simplicial complexes, a convenient middle ground between graphs (which are a particular case of simplicial complexes) and more general hypergraphs. Importantly, they offer strong mathematical connections to algebraic and differential topology and geometry. The simplicial Hodge Laplacian \citep{Barbarossa2020TopologicalSP, schaub2020random}, a discrete counterpart of the Laplacian operator in Hodge–de Rham theory \citep{rosenberg_1997}, provides a connection with the theory of spectral analysis and signal processing on these higher-dimensional  domains. 




We start by constructing a Simplicial Weisfeiler-Lehman (SWL) test for distinguishing non-isomorphic simplicial complexes. 
Motivated by this theoretical construction, we propose Message Passing Simplicial Networks (MPSNs), a message passing neural architecture for simplicial complexes that extends previous approaches such as GNNs and spectral simplicial convolutions \cite{bunch2020simplicial, ebli2020simplicial}. We then show that the proposed MPSN is as powerful as SWL. Strictly better than the conventional WL test \cite{weisfeiler1968reduction}, MPSN can be used to distinguish non-isomorphic graphs. We also show that the SWL test and MPSN are not less powerful than the 3-WL test \citep{cai1992optimal,morris2019weisfeiler}. 

Moreover, we explore the expressive power of GNNs and MPSNs  in terms of the number of linear regions of the functions they can represent \cite{Pascanu2013OnTN,montufar2014number}. 
We obtain bounds for the maximal number of linear regions of MPSNs and show a higher functional complexity than GNNs and simplicial  convolutional neural networks (SCNNs) \cite{ebli2020simplicial}, for which we provide optimal bounds that might be of independent interest. 
Proofs are presented in the Appendix. 

%


\vspace{-8pt}
\section{Background}

In this section, we focus on introducing the required background on (oriented) simplicial complexes. We assume basic familiarity with GNNs and the WL test.  

\begin{definition}[\citet{Nanda2021}]
Let $V$ be a non-empty vertex set. A simplicial complex $\gK$ is a collection of nonempty subsets of $V$ that contains all the singleton subsets of $V$ and is closed under the operation of taking subsets. 
\end{definition}

A member $\sigma = \{v_0, \ldots, v_k \} \in \gK$ with cardinality $k+1$ is called a \emph{$k$-dimensional simplex} or simply a $k$-simplex. Geometrically, one can see 0-simplices as \emph{vertices}, 1-simplices as \emph{edges}, 2-simplices as \emph{triangles}, and so on (see Figure \ref{fig:sin}). 

\begin{definition}[Boundary incidence relation]
\label{def:incidence_rel}
We say $\sigma \prec \tau$ iff $\sigma \subset \tau$ and there is no $\delta$ such that $\sigma \subset \delta \subset \tau$. 
\end{definition}

This relation describes what simplices are on the boundary of another simplex. For instance vertices $\{v_1\}, \{v_2\}$ are on the boundary of edge $\{ v_1, v_2 \}$ and edge $\{v_5, v_7 \}$ is on the boundary of triangle $\{v_5, v_6, v_7 \}$. 

In algebraic topology~\citep{Nanda2021} and discrete differential geometry~\citep{Crane:2013:DGP}, it is common to equip the simplices in a complex with an additional structure called an \emph{orientation}. An \emph{oriented $k$-simplex} is a $k$-simplex with a chosen order for its vertices. They can be uniquely represented as a tuple of vertices $(v_0, \ldots, v_k)$. A simplicial complex with a chosen orientation for all of its simplicies is called oriented. We note that mathematically, the choice of orientation is completely arbitrary. 

Intuitively, orientations describe a walk over the vertices of a simplex. For instance an edge $\{v_1, v_2 \}$ has two orientations $(v_1, v_2)$ and $(v_2, v_1)$, conveying a movement from $v_1$ to $v_2$ and from $v_2$ to $v_1$, respectively. Similarly, triangle $\{v_0, v_1, v_2 \}$ has six possible orientations given by all the permutations of its vertices. However, some of these, like $(v_0, v_1, v_2)$ and $(v_2, v_0, v_1)$, or $(v_0, v_2, v_1)$ and $(v_2, v_1, v_0)$, are equivalent, because (ignoring the starting point) they describe the same clockwise or counter-clockwise movement over the triangle. These two equivalence classes can be generalised to simplices of any dimension based on how the vertices are permuted (see \citet{hatcher_book} for details).   

\begin{definition}
\label{def:pos_neg_orient}
An oriented $k$-simplex is positively oriented if its  vertices form an even permutation and negatively oriented otherwise.  
\end{definition}

If $\gK$ is oriented, we can equip $\prec$ with this additional information. Consider two oriented simplices with $\tau \prec \sigma$ and $\mathrm{dim}(\sigma) > 1$. We say $\tau$ and $\sigma$ have the same orientation $\tau \prec_{+} \sigma$ if $\tau$ shows up in some even permutation of the vertices of $\sigma$. Otherwise, we have $\tau \prec_{-} \sigma$. Edges $(v_i, v_j)$ are a special case and we have $v_j \prec_{+} (v_i, v_j)$ and $v_i \prec_{-} (v_i, v_j)$. The oriented boundary relations $\prec_{+}, \prec_{-}$ can be encoded by the \emph{signed boundary matrices}. The $k$-th boundary matrix $B_k \in \sR^{S_{k-1} \times S_{k}}$ has entries
$B_k(i, j) = 1$ if $\tau_i \prec_{+} \sigma_j$, $-1$ if $\tau_i \prec_{-} \sigma_j$ and $0$, otherwise,
where $\mathrm{dim}(\sigma_j) = k$, $\mathrm{dim}(\tau_i) = k - 1$, and $S_{k}$ denotes the number of simplices of dimension $k$. Similarly, the boundary relation $\prec$ is encoded by $\vert B_k \vert$, the unsigned version of the matrix $B_k$. The $k$-th \emph{Hodge Laplacian} of the simplicial complex \citep{lim2015hodge, Barbarossa2020TopologicalSP, schaub2020random}, a diffusion operator for signals defined over the oriented $k$-simplices is defined as 
\begin{equation}
    L_k = B_k^\top B_k + B_{k+1} B_{k+1}^\top.
\end{equation}
We note that $L_0$ gives the well-known graph Laplacian.

\section{Simplicial WL Test}\label{S:SWL}
The deep theoretical link between the WL graph isomorphism test and message passing GNNs are well known \citep{GIN}. We exploit this connection to develop a simplicial version of the WL test with the ultimate goal of deriving a message passing procedure that can retain the expressive power of the test. We call this simplicial colouring algorithm \emph{Simplicial WL} (SWL). 
We outline 
below the steps of SWL in the most general sense.
\begin{enumerate}[leftmargin=*, topsep=0pt,itemsep=-0.5ex,partopsep=1ex,parsep=1ex]
    \item Given a complex $\mathcal{K}$, all the simplices $\sigma \in \mathcal{K}$ are initialised with the same colour. 
    \item Given the colour $c_\sigma^t$ of simplex $\sigma$ at iteration $t$, we compute the colour of simplex $\sigma$ at the next iteration $c_\sigma^{t+1}$, by perfectly hashing the multi-sets of colours belonging to the adjacent simplices of $\sigma$. 
    \item The algorithm stops once a stable colouring is reached. Two simplicial complexes are considered non-isomorphic if the colour histograms at any level of the complex are different. 
\end{enumerate}

A crucial choice has to be made about what simplices are considered to be adjacent in step two. The incidence relation from Definition \ref{def:incidence_rel} can be used to construct four types of (local) adjacencies. While all these adjacencies show up in graphs in some form, only one of them is typically used due to the lack of simplices of dimension above one. 

\begin{definition}
Consider a simplex $\sigma \in \gK$. Four types of adjacent simplices can be defined:
\begin{enumerate}[leftmargin=*, topsep=0pt,itemsep=-0.5ex,partopsep=1ex,parsep=1ex]
    \item Boundary adjacencies $\gB(\sigma) = \{ \tau \mid \tau \prec \sigma \}$.
    \item Co-boundary adjacencies $\gC(\sigma) = \{ \tau \mid \sigma \prec \tau \}$.
    \item Lower-adjacencies $\ndown(\sigma) = \{ \tau \mid \exists \delta,\ \delta \prec \tau \wedge \delta \prec \sigma \}$
    \item Upper-adjacencies $\nup(\sigma) = \{ \tau \mid \exists \delta,\ \tau \prec \delta \wedge \sigma \prec \delta \}$
\end{enumerate}
\end{definition}
To see how these adjacencies are already present in graphs, we provide a few examples. The boundary simplices of an edge are given by its vertices. The co-boundary simplices of a vertex are given by the edges they are part of. The lower-adjacent edges are given by the common line-graph adjacencies. Finally, upper adjacencies between vertices give the regular graph adjacencies. 

Let us use these adjacencies to precisely define the multi-sets of colours used in the update rule of SWL in step two. 

\begin{definition}
Let $c^t$ be the colouring of SWL for the simplices in $\gK$ at iteration $t$. We define the following multi-sets of colours, corresponding to each type of adjacency:
\begin{enumerate}[leftmargin=*, topsep=0pt,itemsep=-0.5ex,partopsep=1ex,parsep=1ex]
    \item $c_\gB^t(\sigma) = \ldblbrace c_\tau^t \mid \tau \in \gB(\sigma) \rdblbrace$.
    \item $c_\gC^t(\sigma) = \ldblbrace c_\tau^t \mid \tau \in \gC(\sigma) \rdblbrace$.
    \item $c_\da^t(\sigma) = \ldblbrace (c_\tau^t, c_{\sigma \cap \tau}^t) \mid \tau \in \ndown(\sigma) \rdblbrace$ .
    \item $c_\ua^t(\sigma) = \ldblbrace (c_\tau^t, c_{\sigma \cup \tau}^t) \mid \tau \in \nup(\sigma) \rdblbrace$.
\end{enumerate}
\end{definition}
Note that in the last two types of adjacencies, for adjacent $k$-simplices $\sigma$ and $\tau$, we also include the colour of the common $(k-1)$-simplex $\sigma \cap \tau$ on their boundaries and the $(k+1)$-simplex $\sigma \cup \tau$ they are both on the boundary of, respectively. 

Finally, we obtain the following update rule, which contains the complete set of adjacencies:
$$
c_\sigma^{t+1} = \mathrm{HASH}\{c_\sigma^t, c_\gB^t(\sigma), c_\gC^t(\sigma), c_\da^t(\sigma), c_\ua^t(\sigma)\}.
$$
Starting from this formulation, we will now show that certain adjacencies can be removed without sacrificing the expressive power of the SWL test in terms of simplicial complexes that can be distinguished.  




\begin{theorem}
\label{thm:sparse swl}
SWL with $\mathrm{HASH}\bigl(c_\sigma^t, c_{\gB}^t(\sigma), c_{\ua}^t(\sigma)\bigr)$ is as powerful as SWL with the generalised update rule $\mathrm{HASH}\bigl(c_\sigma^t, c_{\gB}^t(\sigma), c_{\gC}^t(\sigma), c_{\da}^t(\sigma), c_{\ua}^t(\sigma)\bigr)$. 
\end{theorem}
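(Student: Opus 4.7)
I would prove Theorem~\ref{thm:sparse swl} by showing that the two SWL variants induce the same stable partition on the simplices of $\gK$. Denote the sparse (two-argument) colouring by $\xi^t$ and the full (five-argument) colouring by $\eta^t$. It suffices to establish two refinement directions: (i) for every $t$, $\eta^t$ refines $\xi^t$; and (ii) for every $t$ there exists $t'=t'(t)$ such that $\xi^{t'}$ refines $\eta^t$. The two stable partitions must then coincide, so the two tests distinguish exactly the same simplicial complexes.

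Direction (i) follows by a straightforward induction on $t$. If $\eta^{t+1}(\sigma) = \eta^{t+1}(\sigma')$, the injectivity of $\mathrm{HASH}$ makes all five full-hash arguments agree as multisets of $\eta^t$-values; the inductive hypothesis, lifted pointwise through matching bijections, then makes the three sparse-hash arguments agree as multisets of $\xi^t$-values, giving $\xi^{t+1}(\sigma) = \xi^{t+1}(\sigma')$.

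Direction (ii) rests on two recovery lemmas, valid for any colouring $c$. \textbf{(R1)} For any $k$-simplex $\sigma$, the multiset $c_{\gC}(\sigma)$ is a function of $c_{\ua}(\sigma)$: each $(k+1)$-cofacet $\delta \in \gC(\sigma)$ has $k+2$ many $k$-faces (all in $\gK$ by closure under subsets), so the $k+1$ faces other than $\sigma$ all lie in $\nup(\sigma)$ via $\delta$ and each contribute a pair whose second component is $c(\delta)$; thus the multiset of second components of $c_{\ua}(\sigma)$ equals $c_{\gC}(\sigma)$ with every multiplicity scaled by $k+1$, and dividing by $k+1$ recovers $c_{\gC}(\sigma)$. \textbf{(R2)} The multiset $c_{\da}(\sigma)$ is a function of $c(\sigma)$ together with the family $\{(c(\delta), c_{\gC}(\delta)) : \delta \in \gB(\sigma)\}$: because two distinct $k$-simplices share at most one $(k-1)$-face, $\ndown(\sigma)$ is the disjoint union $\bigsqcup_{\delta \in \gB(\sigma)} \bigl(\gC(\delta)\setminus\{\sigma\}\bigr)$, and $c_{\da}(\sigma)$ is assembled by, for each $\delta \in \gB(\sigma)$, removing one copy of $c(\sigma)$ from $c_{\gC}(\delta)$ and pairing each remaining element with $c(\delta)$.

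Using (R1) and (R2), I prove (ii) by induction on $t$ with the claim that $\xi^{t'+2}$ refines $\eta^{t+1}$ whenever $\xi^{t'}$ refines $\eta^t$. The colour $\xi^{t'+2}(\sigma)$ encodes $\xi^{t'+1}(\sigma)$ (hence $\xi^{t'}(\sigma)$ and $c_{\ua}^{t'}(\sigma)$ under $\xi$) together with the multiset $\ldblbrace \xi^{t'+1}(\delta) : \delta \in \gB(\sigma) \rdblbrace$, and each $\xi^{t'+1}(\delta)$ in turn encodes $\xi^{t'}(\delta)$ and $c_{\ua}^{t'}(\delta)$. The inductive hypothesis converts every $\xi^{t'}$-value into the corresponding $\eta^t$-value, so all inputs required by (R1) and (R2) are available under $\eta^t$: (R1) yields $c_{\gC}^t(\sigma)$ and each $c_{\gC}^t(\delta)$, and (R2) then assembles $c_{\da}^t(\sigma)$. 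The other three full-hash arguments ($\eta^t(\sigma)$, $c_{\gB}^t(\sigma)$, $c_{\ua}^t(\sigma)$) are read off directly, so all five arguments of $\eta^{t+1}(\sigma)$ are determined by $\xi^{t'+2}(\sigma)$, closing the induction. The main obstacle is exactly the two-iteration delay forced by (R2): $c_{\da}^t(\sigma)$ genuinely requires co-boundary information of the boundaries of $\sigma$, which sparse SWL has to route through one extra $\nup$-aggregation at each $\delta \in \gB(\sigma)$ followed by a $\gB$-aggregation back at $\sigma$.
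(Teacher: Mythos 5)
Your proposal is correct and follows essentially the same route as the paper: your recovery lemma (R1) is exactly the paper's argument for dropping co-boundaries (each cofacet's colour appears $k+1$ times among the second components of $c_\ua(\sigma)$), and (R2) together with the two-iteration delay is precisely the mechanism behind the paper's claim that the three-argument colouring at step $2t$ refines the richer colouring at step $t$. The only difference is presentational: you phrase the lower-adjacency step as a direct functional determination of $c_\da^t(\sigma)$ from $\xi^{t'+2}(\sigma)$, whereas the paper argues by contradiction via an auxiliary counting colouring, but the underlying content is identical.
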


We note that other possible combinations of adjacencies might also fully preserve the expressive power of the general SWL test. Additionally, this result comes from a (theoretical) colour-refinement perspective and it does not imply that the pruned adjacencies cannot be  useful in practice. 

However, this particular choice presents two important properties that make it preferable over other potential ones. First, by not considering lower adjacencies, the test has a computational complexity that is linear in the total number of simplicies in the complex. These computational aspects are discussed in more detail in Section \ref{sec:mpsns}. Second, when the test is applied to $1$-simplicial complexes, i.e. graphs, and only vertex colours are considered, SWL corresponds to the WL test. This is due to the fact that vertices have no boundary simplices and upper adjacencies are the usual graph adjacencies between nodes. 

\begin{figure}[h]
    \centering
    \includegraphics[width=\columnwidth]{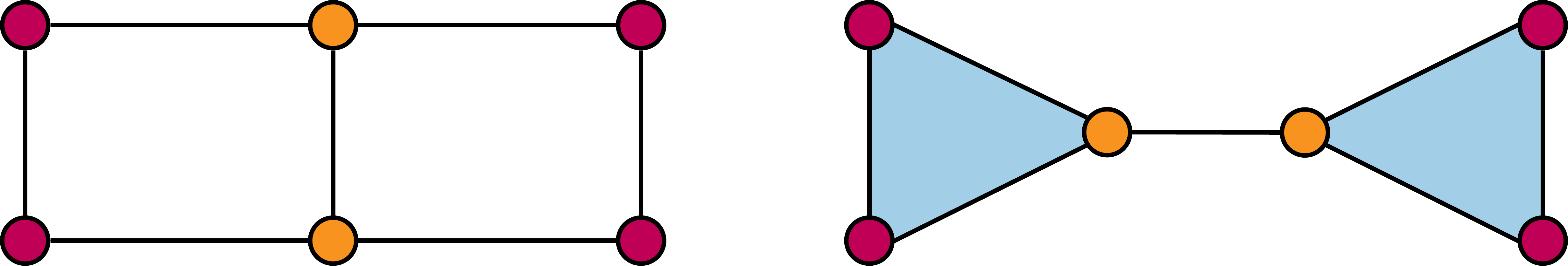}
    \vspace{-15pt}
    \caption{Two graphs that cannot be distinguished by 1-WL, but have distinct clique complexes (the second contains triangles).}
    \label{fig:cc_expresiveness}
    \vspace{-5pt}
\end{figure}

\paragraph{Clique Complexes and WL} The \emph{clique complex} of a graph $G$ is the simplicial complex $\gK$ with the property that if nodes $\{v_0, \ldots v_k\}$ form a clique in $G$, then  simplex $\{v_0, \ldots v_k\} \in \gK$. In other words, every $(k+1)$-clique in $G$ becomes a $k$-simplex in $\gK$. Evidently, this is an injective transformation from the space of non-isomorphic graphs to the space of non-isomorphic simplicial complexes. Therefore, the SWL procedure can be used to distinguish a pair of non-isomorphic graphs, by comparing their clique complexes. We call this a \emph{lifting transformation}. By taking this pre-processing step, we can link the expressive powers of WL and SWL: 

\begin{theorem}
\label{theo:swl_more_powerful_than_wl}
SWL with a clique complex lifting is strictly more powerful than WL. 
\end{theorem}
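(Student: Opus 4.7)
The statement has two parts: (i) SWL with clique complex lifting is at least as powerful as WL, and (ii) there exist non-isomorphic graphs indistinguishable by WL whose clique complexes are distinguished by SWL. I will handle these in turn.

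For (i), the plan is to show by induction on the iteration count $t$ that the SWL colour of a vertex (a $0$-simplex) is at least as fine a partition as the WL colour of the same vertex. Concretely, I would argue that if two vertices $u, v$ (possibly in different graphs) receive the same SWL colour at iteration $t$, then they receive the same WL colour at iteration $t$. The base case is immediate since both tests start from a constant colouring. For the inductive step I use that, for a $0$-simplex, the boundary set $\gB(\sigma)$ is empty, while the upper-adjacency multi-set $c_{\ua}^t(\sigma)$ contains pairs $(c_\tau^t, c_{\sigma \cup \tau}^t)$ where $\tau$ ranges over the neighbouring vertices and $\sigma \cup \tau$ is the connecting edge. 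Marginalising away the edge colour recovers exactly the multi-set of neighbour colours used by WL, so an injective hash on the SWL multi-set is at least as discriminative as the WL hash on the WL multi-set. This gives the refinement relation, and hence SWL with clique lifting distinguishes every pair of graphs distinguished by WL.

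For (ii), I would exhibit the standard pair $G_1 = C_6$ (the $6$-cycle) and $G_2 = K_3 \sqcup K_3$ (two disjoint triangles), as depicted in the paper's Figure~\ref{fig:cc_expresiveness}. Both graphs are $2$-regular on $6$ vertices, so WL produces the constant colouring at every iteration and cannot distinguish them. However, the clique complex of $C_6$ contains no $2$-simplices, whereas the clique complex of $K_3 \sqcup K_3$ contains exactly two $2$-simplices (one per triangle). Since the SWL acceptance criterion requires matching colour histograms at \emph{every} dimension, and the $2$-dimensional histograms differ (empty versus a multi-set of size two), SWL with clique lifting reports the two complexes as non-isomorphic after a single iteration.

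The one subtlety to be careful with is the direction of the refinement argument in part (i): the SWL colour space is strictly richer than the WL one (it carries edge and triangle information), so I cannot simply equate the two colourings. The cleanest way to handle this is to define an explicit projection from SWL vertex colours to WL vertex colours by forgetting the second coordinate $c_{\sigma \cup \tau}^t$ in each upper-adjacency pair and then show that this projection commutes with the colour-refinement step. I do not anticipate any other obstacles, and the combination of (i) and (ii) immediately yields strict dominance.
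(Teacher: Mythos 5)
Your proposal is correct and follows essentially the same route as the paper: the paper first proves (its Lemma~\ref{lemma:wl_is_at_most_swl}) that the SWL vertex colouring refines the WL colouring by induction, using exactly your observation that $0$-simplices have empty boundary and that projecting the upper-adjacency tuples $(c_\tau^t, c_{\sigma\cup\tau}^t)$ onto their first coordinate recovers the WL neighbour multi-set, and then establishes strictness via the pair in Figure~\ref{fig:cc_expresiveness} (a $6$-cycle versus two disjoint triangles), whose clique complexes differ in their $2$-simplex counts. The "subtlety" you flag about the projection commuting with refinement is precisely the content of the paper's inductive step, so no gap remains.
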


We present in Figure \ref{fig:cc_expresiveness} a pair of graphs that cannot be distinguished by the WL test, but whose clique complexes can be distinguished by SWL. 

By applying SWL to clique-complexes, even harder examples of non-isomorphic graph pairs can be told apart. One such example is the pair of graphs reported in Figure \ref{fig:SR}. This pair corresponds to the smallest pair of Strongly Regular non-isomorphic graphs with same parameters SR($16$,$6$,$2$,$2$). Our approach can distinguish the two graphs while the $3$-WL test fails, which is due to the fact that the two are associated with distinct clique complexes. This illustrates that our approach is no less powerful than the $3$-WL test. The following theorem confirms this observation.

\begin{theorem}\label{thm:swl_noless_than_3wl}
SWL is not less powerful than 3-WL. 
\end{theorem}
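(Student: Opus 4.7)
The plan is to establish the claim by exhibiting an explicit witness: a pair of non-isomorphic graphs that $3$-WL cannot distinguish but that SWL can. A single such witness suffices, since ``not less powerful than 3-WL'' here means that the set of graph pairs separated by SWL is not contained in the set separated by $3$-WL. Following the excerpt and Figure~\ref{fig:SR}, I would take as the witness the pair of strongly regular graphs with parameters SR$(16,6,2,2)$, namely the Shrikhande graph and the $4\times 4$ rook graph $L(K_{4,4})$.

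The first step is to verify that $3$-WL fails on this pair. This is a classical fact about strongly regular graphs: the stable colouring produced by $3$-WL on an SR graph is determined entirely by the parameters $(n,k,\lambda,\mu)$, because the local counts of common neighbours of adjacent versus non-adjacent pairs (and of triples built from them) depend only on those parameters. Since both graphs share the same tuple $(16,6,2,2)$, $3$-WL returns identical stable colour histograms on them. This can either be invoked from the literature on Weisfeiler--Leman dimensions of SR graphs, or certified directly by symbolic execution of $3$-WL on the two graphs.

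The second step is to show that SWL, applied to the clique complexes of the two graphs, does distinguish them. The $4\times 4$ rook graph contains $K_4$ subgraphs (each row and each column of the underlying $4\times 4$ grid induces a $K_4$), so its clique complex has non-empty $3$-skeleton. In contrast, the Shrikhande graph is $K_4$-free, hence its clique complex contains no $3$-simplex. Since SWL declares two complexes non-isomorphic whenever their colour histograms differ at any dimension, and here one histogram at dimension $3$ is empty while the other is not, SWL separates the two complexes immediately, without even running the refinement. The clique-complex lifting being an injection from non-isomorphic graphs into non-isomorphic simplicial complexes, as exploited in Theorem~\ref{theo:swl_more_powerful_than_wl}, then implies that SWL separates the underlying graphs.

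The main obstacle is the first step: rigorously certifying that $3$-WL does not distinguish the Shrikhande and rook graphs. Avoiding external citation forces one either into a symbolic computation of the $3$-WL stable partitions on both graphs, or into a direct combinatorial argument that every partition producible by $3$-WL on an SR graph is a function of $(n,k,\lambda,\mu)$ alone. The second step, by contrast, is essentially the one-line observation that the two clique complexes differ already in the set of dimensions in which they carry simplices, so no subtle colour-refinement analysis is needed to conclude.
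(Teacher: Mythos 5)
Your proposal is correct and follows essentially the same route as the paper: the paper also uses the Rook's $4\times4$ / Shrikhande pair, establishes the $3$-WL failure via a combinatorial argument that $2$-FWL (equivalent to $3$-WL) cannot refine the isomorphism-type partition on any strongly regular graph beyond a colouring determined by $(n,d,\lambda,\mu)$ (its Lemma~\ref{lemma:sr_2-fwl}), and separates the two clique complexes by the presence versus absence of $3$-simplices. The only difference is that you leave the first step as an invocation or a sketch, whereas the paper carries out that combinatorial argument in full.
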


\begin{figure}
    \centering
    \includegraphics[clip=true,trim=0cm .45cm 0cm .45cm,width=0.75\columnwidth]{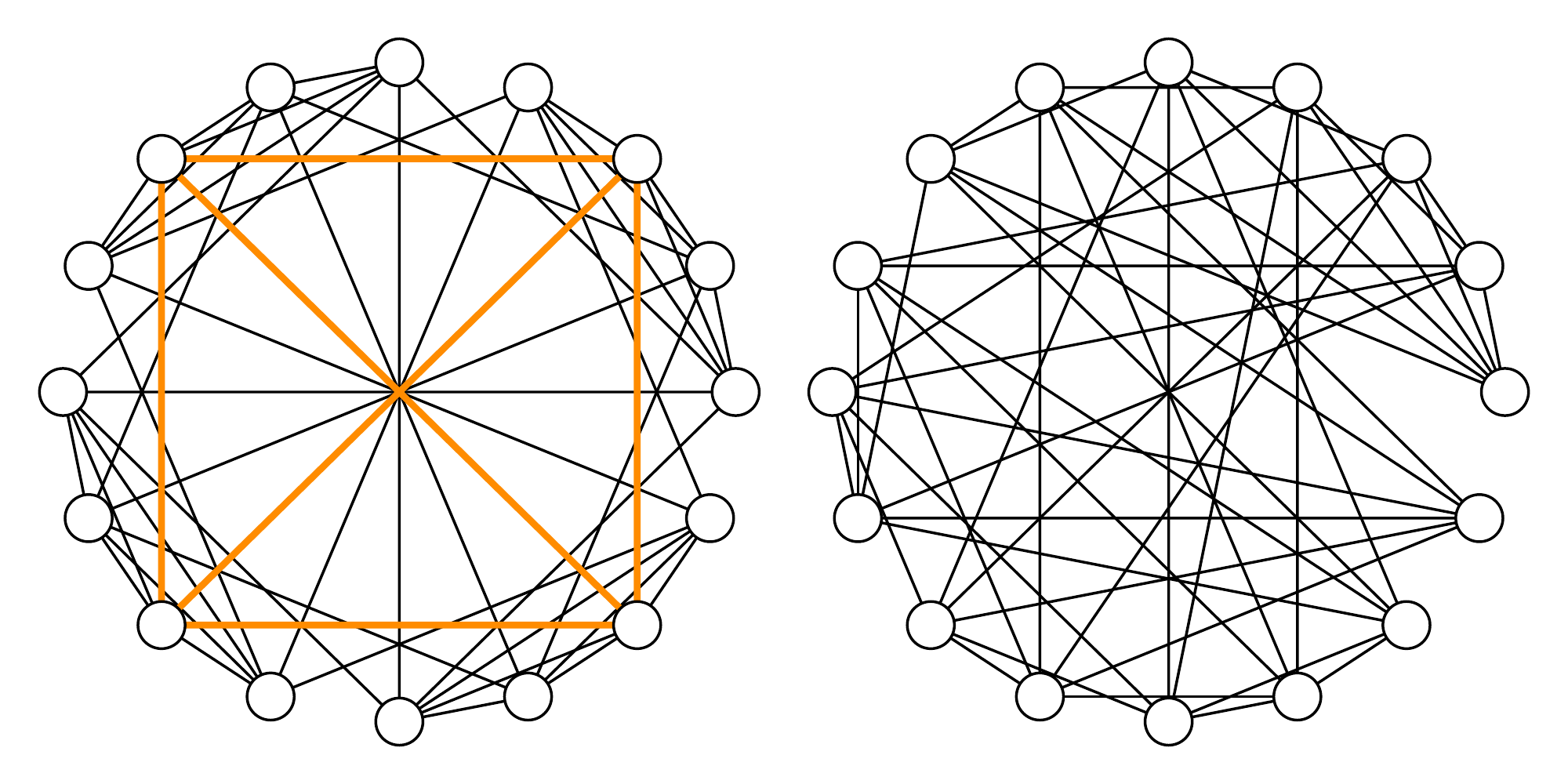} 
    \vspace{-10pt}
    \caption{Rook’s 4x4 graph and the Shrikhande graph: Strongly Regular non-isomorphic graphs with parameters SR(16,6,2,2). Our approach can distinguish them: only Rook's graph (left) possesses $4$-cliques (orange) and thus the two graphs are associated with distinct clique complexes. The 3-WL test fails to distinguish them.}
    \label{fig:SR}
    \vspace{-8pt}
\end{figure}

\section{Message Passing Simplicial Networks}\label{sec:mpsns}

\paragraph{MPSN} We propose a message passing model using the following message passing operations based on the four types of messages discussed in the previous section. For a simplex $\sigma$ in a complex $\gK$ we have:
\begin{align}
    m_{\gB}^{t+1}(\sigma) &= \text{AGG}_{\tau \in \gB(\sigma)}\Big(M_{\gB}\big(h_\sigma^{t}, h_\tau^{t}\big)\Big) \\
    m_{\gC}^{t+1}(\sigma) &= \text{AGG}_{\tau \in \gC(\sigma)}\Big(M_{\gC}\big(h_\sigma^{t}, h_\tau^{t}\big)\Big) \\
    m_{\downarrow}^{t+1}(\sigma) &= \text{AGG}_{\tau \in \gN_{\downarrow}(\sigma)}\Big(M_{\downarrow}\big(h_\sigma^{t}, h_\tau^{t}, h_{\sigma \cap \tau}^t\big)\Big) \\
    m_{\uparrow}^{t+1}(\sigma) &= \text{AGG}_{\tau \in \gN_{\uparrow}(\sigma)}\Big(M_{\uparrow}\big(h_\sigma^{t}, h_\tau^{t}, h_{\sigma \cup \tau}^t\big)\Big). 
\end{align}
Then, the update operation takes into account these four types of incoming messages and the previous colour of the simplex:
\begin{equation}\label{eq:mpsn}
    h_\sigma^{t+1} = U\Big(h_\sigma^{t}, m_{\gB}^{t}(\sigma), m_{\gC}^{t}(\sigma), m_{\downarrow}^{t+1}(\sigma), m_{\uparrow}^{t+1}(\sigma) \Big).
\end{equation}
To obtain a global embedding for a $p$-simplicial complex $\gK$ from an MPSN with $L$ layers, the readout function takes as input the multi-sets of features corresponding to all the dimensions of the complex:
\begin{equation}
    h_\gK = \text{READOUT}(\ldblbrace h_\sigma^L \rdblbrace_{\mathrm{dim}(\sigma)=0}, \ldots, \ldblbrace h_\sigma^L\rdblbrace_{\mathrm{dim}(\sigma)=p}). \nonumber
\end{equation}

\paragraph{Orientations in Message Passing} If the underlying complex also has a particular orientation, the message, aggregate, update and readout functions can be parametrised to use this information. More specifically, they can make use of the \emph{relative orientations} between adjacent simplices, which can be $\pm1$. The relative orientations of the neighbours of the $i$-th $k$-simplex are given by the non-zero entries in $B_k(\cdot, i)$ (boundary adjacencies), $B_{k+1}(i, \cdot)$ (co-boundary adjacencies), $B_k^\top B_k(i, \cdot)$ (lower adjacencies), and $B_{k+1} B_{k+1}^\top (i, \cdot)$ (upper adjacencies). For the last two adjacencies, the matrix multiplications encode how two adjacent simplices are oriented with respect to the lower- or higher-dimensional simplex they share, respectively. 





\paragraph{Expressive Power by WL} We now describe the expressive power of MPSNs in relation to SWL and WL. First, we need to analyse the ability of MPSNs to distinguish non-isomorphic simplicial complexes. 

\begin{lemma}
\label{lemma:mpsns_at_most_as_powerful_as_swl}
MPSNs are at most as powerful as SWL in distinguishing non-isomorphic simplicial complexes. 
\end{lemma}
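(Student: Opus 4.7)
\medskip
\noindent\textbf{Proof plan.} The plan is to adapt the classical argument bounding message-passing GNNs above by the WL test \citep{GIN,morris2019weisfeiler} to the simplicial setting. I will prove by induction on the iteration index $t$ that for any two simplices $\sigma \in \gK_1$ and $\tau \in \gK_2$ (drawn from a single complex or two different ones),
\begin{equation*}
    c_\sigma^{t} = c_\tau^{t} \;\Longrightarrow\; h_\sigma^{t} = h_\tau^{t}.
\end{equation*}
The base case is immediate: SWL initialises every simplex with a common colour, and the MPSN initial features are assumed to be consistent with this (equivalently, at least as coarse as the SWL initialisation).

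For the inductive step, I will exploit the injectivity of the SWL $\mathrm{HASH}$. If $c_\sigma^{t+1} = c_\tau^{t+1}$, then the hashed tuples for $\sigma$ and $\tau$ must agree componentwise, so in particular $c_\sigma^{t} = c_\tau^{t}$ and the four SWL adjacency multisets $c_\gB^t, c_\gC^t, c_\da^t, c_\ua^t$ match, where the last two consist of \emph{pairs} also carrying the colour of the shared $(k{-}1)$- or $(k{+}1)$-simplex. Invoking the inductive hypothesis entrywise lifts each such equality of multisets of colours to the corresponding equality of multisets of MPSN features (together with the shared-simplex feature). Since the message functions $M_\gB, M_\gC, M_\da, M_\ua$, the aggregators and the update $U$ in equation~\eqref{eq:mpsn} are deterministic, applying them to coincident inputs yields $h_\sigma^{t+1} = h_\tau^{t+1}$.

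To conclude the statement: suppose SWL cannot distinguish two complexes $\gK_1, \gK_2$, meaning that at every dimension $k$ and every iteration $t$ the colour multisets $\ldblbrace c_\sigma^t \rdblbrace_{\dim \sigma = k}$ coincide between $\gK_1$ and $\gK_2$. The inductive claim then gives equality of the corresponding MPSN feature multisets at every dimension, and the permutation-invariant $\mathrm{READOUT}$ produces identical embeddings $h_{\gK_1} = h_{\gK_2}$, so MPSN also fails to distinguish the two complexes.

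The step I expect to be the main obstacle is the careful bookkeeping of the lower and upper adjacency multisets: in SWL they are multisets of \emph{pairs} that include the colour of the shared boundary $(k{-}1)$-simplex or coboundary $(k{+}1)$-simplex, and one must check that the MPSN message $M_\da$ (resp.\ $M_\ua$) receives the matching feature $h_{\sigma \cap \tau}^t$ (resp.\ $h_{\sigma \cup \tau}^t$) in its third argument, so that the inductive hypothesis can be applied coordinatewise. Once this alignment is set up, the induction itself is routine unrolling.
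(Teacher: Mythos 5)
Your proposal is correct and follows essentially the same route as the paper's proof: an induction showing that the SWL colouring $c^t$ refines the MPSN colouring $h^t$, using injectivity of $\mathrm{HASH}$ to recover equality of the adjacency multisets (including the shared-simplex colours in the lower/upper cases) and then the determinism of the message, aggregation and update functions. Your explicit treatment of the readout step and of the paired multisets is slightly more detailed than the paper's, but the argument is the same.
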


One may wonder whether MPSN can achieve the power of SWL. The answer is affirmative: 

\begin{theorem}
\label{thm:mpsn_as_powerful_as_swl}
MPSNs with sufficient layers and injective neighbourhood aggregators are as powerful as SWL. 
\end{theorem}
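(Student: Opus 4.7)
The plan is to prove this by induction on the layer index $t$, showing that an MPSN with appropriately chosen (injective) parametrisations refines simplex features at least as finely as SWL refines colours. Combined with Lemma~\ref{lemma:mpsns_at_most_as_powerful_as_swl} (the converse direction), this yields exact equivalence.

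For the base case, both SWL colours and MPSN features are initialised to the same value on every simplex, so the induced partitions on simplices trivially agree. For the inductive step, I would assume that at layer $t$ the mapping $\sigma \mapsto h_\sigma^{t}$ is at least as fine as $\sigma \mapsto c_\sigma^{t}$, i.e., $h_\sigma^{t}=h_{\sigma'}^{t}$ implies $c_\sigma^{t}=c_{\sigma'}^{t}$. The goal is then to select $M_\gB, M_\gC, M_\downarrow, M_\uparrow$, the aggregators $\text{AGG}$, and the update $U$ so that the induced partition at layer $t+1$ remains at least as fine as SWL's colouring at iteration $t+1$.

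The key technical tool is the multiset-injectivity result of \citet{GIN}: on a countable feature universe, there exist (MLP-realisable) functions that, composed with sum aggregation, are injective on finite multisets. Since the initial features and all SWL colours lie in countable sets, I can invoke this lemma separately for each adjacency type to obtain injective compositions $\text{AGG}\circ M$ for $m_\gB$, $m_\gC$, $m_\downarrow$ and $m_\uparrow$. For $m_\downarrow$ and $m_\uparrow$ the aggregated items are pairs (neighbour feature, shared-simplex feature) taken from a countable product space, so the lemma still applies. Finally, by a standard pairing/encoding argument I choose $U$ so that the $5$-tuple $\bigl(h_\sigma^t, m_\gB^{t+1}(\sigma), m_\gC^{t+1}(\sigma), m_\downarrow^{t+1}(\sigma), m_\uparrow^{t+1}(\sigma)\bigr)$ is mapped injectively to $h_\sigma^{t+1}$. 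Since the SWL update likewise hashes the analogous $5$-tuple of colours/multisets injectively, and by the inductive hypothesis equal MPSN features imply equal SWL colours component-wise, the refined partition after one MPSN step is at least as fine as SWL's.

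For the graph-level statement, SWL declares two complexes non-isomorphic iff their per-dimension colour histograms differ at some iteration. I would therefore choose a READOUT that applies an injective multiset function at each dimension $k=0,\dots,p$ and then injectively combines the per-dimension outputs, again by the countable-multiset-injectivity lemma. Running the argument for $L$ sufficient to reach SWL's stable colouring gives the matching power for whole complexes. The main obstacle I expect is the careful treatment of the mixed messages $m_\downarrow, m_\uparrow$: they involve features from two different dimensions (the neighbour simplex and the shared face/coface), so the injectivity argument must be applied to the product space and then pushed through both the aggregation and the subsequent update without collapsing information from the other three message streams. This bookkeeping is routine once the countability of the feature universe is established and the injective-multiset construction of \citet{GIN} is invoked at each step.
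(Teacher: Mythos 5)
Your proposal is correct and follows essentially the same route as the paper: an induction showing $h^t \sqsubseteq c^t$ via injectivity of the composed message/aggregate/update maps, combined with Lemma~\ref{lemma:mpsns_at_most_as_powerful_as_swl} for the converse refinement. The only difference is that you spell out the realisability of the injective multiset aggregators via the countable-universe construction of \citet{GIN} and the readout, details the paper relegates to a remark rather than the proof itself.
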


This theorem, combined with Theorem~\ref{theo:swl_more_powerful_than_wl}, provides an important corollary showing that MPSNs are not only suitable for statistical tasks on higher-dimensional simplicial complexes, but could potentially improve over standard GNNs on graph machine learning tasks.   

\begin{corollary}
There exists an MPSN that is more powerful than WL at distinguishing non-isomorphic graphs when using a clique-complex lifting.  
\end{corollary}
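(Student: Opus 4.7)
The plan is to chain together the two results immediately preceding the corollary. By Theorem~\ref{theo:swl_more_powerful_than_wl}, SWL with clique-complex lifting strictly dominates WL on graphs: it distinguishes every pair that WL does, plus at least one pair (e.g.\ the one in Figure~\ref{fig:cc_expresiveness}) that WL cannot. By Theorem~\ref{thm:mpsn_as_powerful_as_swl}, there exists an MPSN with sufficiently many layers and injective message, update, and readout functions whose induced colouring on any complex coincides with the stable SWL colouring. Composing these two statements with the clique-complex lifting as a graph pre-processing step immediately yields the desired MPSN.

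More concretely, I would proceed in three short steps. First, fix a graph $G$ and form its clique complex $\gK(G)$, initialising every simplex with a constant feature (mirroring the uniform SWL initialisation). Second, instantiate the MPSN from Theorem~\ref{thm:mpsn_as_powerful_as_swl}: choose enough layers to reach the SWL stable colouring and pick injective $M_{\gB}, M_{\ua}, U$, and READOUT (standard constructions use sum aggregation composed with MLPs, invoking the deep-multisets argument to realise injectivity on multisets over a countable feature space). Third, observe that for any pair $G_1, G_2$ of non-isomorphic graphs that WL separates, their clique complexes $\gK(G_1), \gK(G_2)$ are also non-isomorphic and, restricted to dimension zero, SWL reduces to WL, so this MPSN separates them too; and for the witness pair in Figure~\ref{fig:cc_expresiveness}, SWL separates $\gK(G_1)$ from $\gK(G_2)$ while WL fails on $G_1, G_2$, so this MPSN strictly exceeds WL.

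The only genuine subtlety is ensuring that the clique-complex lifting preserves (non-)isomorphism and does not secretly ``cheat.'' This follows because the lifting $G \mapsto \gK(G)$ is functorial: a graph isomorphism induces a simplicial isomorphism of clique complexes, and conversely any simplicial isomorphism restricts to a bijection of vertices that preserves edges (the $1$-simplices), giving a graph isomorphism. Hence SWL applied to $\gK(G_1), \gK(G_2)$ provides a valid graph isomorphism test, and strict domination over WL transfers verbatim to the MPSN via Theorem~\ref{thm:mpsn_as_powerful_as_swl}. No separate combinatorial argument is needed; the corollary is essentially a transitivity statement assembled from the earlier theorems.
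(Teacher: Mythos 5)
Your proposal is correct and follows essentially the same route as the paper: the corollary is obtained by composing Theorem~\ref{theo:swl_more_powerful_than_wl} (SWL with clique-complex lifting strictly dominates WL) with Theorem~\ref{thm:mpsn_as_powerful_as_swl} (an MPSN with enough layers and injective aggregators matches SWL), noting that the lifting preserves (non-)isomorphism and that injective multiset functions are realisable by neural networks. Your added remarks on functoriality of the lifting and the explicit witness pair are consistent with the paper's supporting lemmas and add nothing that changes the argument.
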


In particular, based on Theorem \ref{thm:sparse swl}, it is sufficient for such an MPSN to use  boundary and upper adjacencies. This result relies on MPSN's ability of mapping multi-sets of colours injectively and the fact that neural networks that can learns such functions for multi-sets of bounded size exist \citep{GIN, Corso2020_PNA}. 

\paragraph{Relation to Spectral Convolutions} GNNs are also known for their relationship with spectral convolution operators on graphs obtained via graph Laplacian \citep{hammond2011wavelets}. Analogously to this, we show MPSNs generalise certain spectral convolutions on graphs derived from the higher-order simplicial Hodge Laplacian. The derivation and a more detailed discussion about simplicial spectral convolution are deferred to Appendix \ref{app:convolutions}. 

\begin{theorem}\label{thm:gen_workshop}
MPSNs generalise certain spectral convolution operators \citep{ebli2020simplicial, bunch2020simplicial} defined over simplicial complexes. 
\end{theorem}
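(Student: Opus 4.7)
The plan is to exhibit explicit parametrisations of the MPSN template in equations (3)--(7) that reproduce, pointwise, the spectral simplicial convolutions of \citet{ebli2020simplicial} and \citet{bunch2020simplicial}. Both operators are built from products of the signed boundary matrices $B_k, B_{k+1}$, hence from the Hodge Laplacian $L_k = B_k^\top B_k + B_{k+1} B_{k+1}^\top$, and the key observation is that the two summands are supported exactly on the lower- and upper-adjacency neighbourhoods used by MPSN.

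First I would make this decomposition explicit at the entry level. For two distinct $k$-simplices $\sigma_i \neq \sigma_j$, the entry $(B_{k+1} B_{k+1}^\top)_{ij}$ is nonzero if and only if $\sigma_i, \sigma_j$ are upper adjacent through a shared coface $\delta = \sigma_i \cup \sigma_j$, in which case its value is a sign $s^{\ua}_{ij} \in \{\pm 1\}$ equal to the product of the relative orientations of $\sigma_i$ and $\sigma_j$ inside $\delta$; a symmetric statement with sign $s^{\da}_{ij}$ and shared face $\sigma_i \cap \sigma_j$ holds for $(B_k^\top B_k)_{ij}$. The diagonal entries are $|\gC(\sigma_i)|$ and $|\gB(\sigma_i)| = k+1$ respectively. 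By the paragraph on orientations in message passing, both signs are precisely the relative orientations accessible to $M_{\ua}$ and $M_{\da}$ through their arguments $h_{\sigma \cup \tau}^t$ and $h_{\sigma \cap \tau}^t$.

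Second, I would recover a single convolutional block. It suffices to realise one application of $L_k$ or, for the variant of \citet{bunch2020simplicial}, of its two summands with separate weights: higher powers and polynomial filters follow by stacking layers with identity activations. Setting boundary and coboundary messages to zero, letting $M_{\ua}(h_\sigma^t, h_\tau^t, h_{\sigma \cup \tau}^t) = s^{\ua}_{\sigma\tau}\, h_\tau^t W_{\ua}$ and $M_{\da}(h_\sigma^t, h_\tau^t, h_{\sigma \cap \tau}^t) = s^{\da}_{\sigma\tau}\, h_\tau^t W_{\da}$, using sum aggregation, and choosing the linear update $U = h_\sigma^t W_0 + m_{\ua}^{t+1}(\sigma) + m_{\da}^{t+1}(\sigma)$ followed by a nonlinearity, the aggregated sums reproduce exactly the off-diagonal action of $B_{k+1}B_{k+1}^\top$ and $B_k^\top B_k$ on the feature signal. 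The diagonal contributions of $L_k$ are absorbed into $W_0$ by allowing it to depend on the locally visible neighbourhood sizes $|\gC(\sigma)|, |\gB(\sigma)|$, which remains inside the expressivity of the MPSN update. Taking $W_{\ua} = W_{\da}$ recovers Ebli's single-power operator, while keeping $W_{\ua}, W_{\da}$ distinct and folding the $D^{-1/2}$ normalisations into the scalar coefficients $s^{\ua}, s^{\da}$ recovers the Bunch et al. variant.

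The main obstacle is transporting the orientation signs faithfully through the message functions without altering their type signatures, and ensuring the degree-dependent diagonal of $L_k$ is realisable. The first point is settled because $h_{\sigma \cup \tau}^t$ and $h_{\sigma \cap \tau}^t$ already encode the orientation of the shared simplex, from which $M_{\ua}, M_{\da}$ can read off the relative signs of $\sigma$ and $\tau$ inside it. The second is handled by a degree-aware self-weight---always permissible in the MPSN update---or, in the constant-degree case $|\gB(\sigma)| = k+1$, by a plain scalar. A minor sanity check is that the spectral operators of interest do not couple features across simplex dimensions, which matches an MPSN block applied independently at each dimension.
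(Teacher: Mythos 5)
Your core idea for the Ebli et al.\ operator---decomposing $L_k$ entrywise into the parts supported on lower and upper adjacencies and realising each as an MPSN message with the $\pm 1$ relative orientations as coefficients---matches the paper's construction. Where you diverge is in handling the polynomial filter $\sum_{r=0}^R W_r L_p^r$: you propose stacking layers with identity activations, whereas the paper keeps everything in a \emph{single} MPSN layer by first proving $L_p^r = (L_p^{\downarrow})^r + (L_p^{\uparrow})^r$ (using $B_p B_{p+1}=\mathbf{0}$) and then letting the aggregation run over extended $R$-hop upper/lower neighbourhoods with coefficients $((L_p^{\uparrow})^r)_{\sigma,\tau}$ and $((L_p^{\downarrow})^r)_{\sigma,\tau}$. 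Your route avoids that lemma, but as sketched it only produces the monomial $L_p^R$ composed with weights, not the polynomial with independent coefficients $W_r$; you would need to carry both the running power and an accumulated partial sum as separate feature blocks through the stacked layers, which you do not spell out. Also, your claim that the relative signs can be ``read off'' from $h_{\sigma\cup\tau}$ and $h_{\sigma\cap\tau}$ is not quite right---those are just feature vectors---but the conclusion stands because the paper's framework explicitly grants the message functions access to the relative orientations from the boundary matrices.

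The genuine gap is in your treatment of the Bunch et al.\ operator. Your closing ``sanity check'' asserts that the spectral operators of interest do not couple features across simplex dimensions, and your construction accordingly sets the boundary and co-boundary messages to zero. This is false for SC-Conv: its update for $H_1^{t+1}$ explicitly includes the terms $\mD_2\mB_1^{\top}\mD_1^{-1}H_0^t W_{1,0}^t$ and $\mB_2\mD_3 H_2^t W_{1,2}^t$, i.e.\ contributions from $0$-simplices (boundaries) and $2$-simplices (co-boundaries), and the paper emphasises that this cross-dimensional coupling is precisely what distinguishes it from Ebli et al. With boundary and co-boundary messages switched off, your MPSN instantiation cannot reproduce these terms. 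The repair is straightforward---reinstate $M_{\gB}$ and $M_{\gC}$ with the appropriate normalised boundary matrices as coefficients, exactly as the paper does---but as written the argument does not establish the theorem for the second family of operators.
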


\paragraph{Equivariance and Invariance} We now discuss how MPSNs handle the symmetries present in simplicial complexes. We offer here a high-level view of how these manifest and postpone a more rigorous treatment of both symmetries for Appendix \ref{app:equiv_inv}. 

Generalising GNNs, MPSN layers are equivariant with respect to relabelings of simplices in the complex.  

\begin{theorem}[Informal]
\label{theo:mpsn_equiv}
An MPSN layer is (simplex) permutation equivariant.
\end{theorem}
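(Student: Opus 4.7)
The plan is to formalize what a simplex permutation is, show that the combinatorial adjacency structures used by MPSNs commute with any such permutation, and then conclude by the fact that the message, aggregation, and update maps are applied in a neighbourhood-local, position-independent way.

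First I would fix notation: a simplex permutation is a bijection $\pi$ of the vertex set $V$ that sends $\gK$ to itself, which canonically induces a bijection $\pi_k$ on the $k$-simplices for each $k \in \{0,\ldots,p\}$, and hence a joint action on the family of feature matrices $\{H_k\}_{k=0}^p$ whose rows are indexed by $k$-simplices. Equivariance of an MPSN layer $\Phi$ means $\Phi(\pi \cdot \{H_k\}) = \pi \cdot \Phi(\{H_k\})$.

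The first key step is a structural lemma: for every simplex $\sigma$, $\pi(\gB(\sigma)) = \gB(\pi(\sigma))$, and similarly for $\gC$, $\ndown$, $\nup$. This follows because these adjacencies are defined purely from the incidence relation $\prec$ of Definition \ref{def:incidence_rel}, which is a set-theoretic property of $\gK$, and any automorphism of $\gK$ preserves $\subset$ and hence $\prec$. Crucially, this also gives $\pi(\sigma \cap \tau) = \pi(\sigma) \cap \pi(\tau)$ and $\pi(\sigma \cup \tau) = \pi(\sigma) \cup \pi(\tau)$ whenever the latter simplices exist, so the auxiliary terms $h_{\sigma \cap \tau}$ and $h_{\sigma \cup \tau}$ appearing in the lower/upper messages transform in the expected way.

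Next I would verify equivariance one operation at a time. The message functions $M_\gB, M_\gC, M_\da, M_\ua$ are shared across all simplices and applied pointwise to tuples of features; combined with the previous step this shows that the collection of messages sent to $\pi(\sigma)$ is the image under $\pi$ of the collection sent to $\sigma$. Then AGG, being a multi-set function, is by definition invariant under the order in which these messages are presented, so $m_\bullet^{t+1}(\pi(\sigma)) = m_\bullet^{t+1}(\sigma)$ for $\bullet \in \{\gB,\gC,\da,\ua\}$. Feeding these into the shared update $U$ in Eq.~\eqref{eq:mpsn}, which only sees $\sigma$'s own features and messages, gives $h_{\pi(\sigma)}^{t+1} = h_\sigma^{t+1}$ for each simplex, which is precisely the equivariance condition after relabelling.

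I expect the main subtlety to be purely bookkeeping rather than deep: one must carefully carry the dimension index $k$ through the argument, and handle the existence of $\sigma \cap \tau$ and $\sigma \cup \tau$ (which rely on the closure properties of a simplicial complex) so that the lower/upper message terms are well-defined both before and after applying $\pi$. Once the adjacency-preservation lemma is in place, the rest reduces to the standard multi-set-aggregation argument used for GNN equivariance, adapted in parallel over all dimensions of $\gK$; the formal version of the statement, including the precise permutation action on the graded family $\{H_k\}_{k=0}^p$, is deferred to Appendix \ref{app:equiv_inv}.
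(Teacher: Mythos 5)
Your core argument is the same as the paper's: show that each of the four adjacency structures commutes with the permutation, then conclude from the fact that the message/aggregate/update maps are shared across simplices and that AGG is a multi-set function. The paper carries this out in matrix language (showing, e.g., $(B_n)_{a,b} = (P_{n-1}B_nP_n^\top)_{P_{n-1}(a),P_n(b)}$ and $(P_{n-1}B_nP_n^\top)^\top(P_{n-1}B_nP_n^\top) = P_nB_n^\top B_nP_n^\top$), whereas you phrase the same adjacency-preservation lemma combinatorially via the incidence relation $\prec$; your explicit treatment of the auxiliary terms $h_{\sigma\cap\tau}$ and $h_{\sigma\cup\tau}$ is a nice touch that the paper glosses over.

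There is, however, a mismatch in what you are proving. You define a simplex permutation as a vertex bijection $\pi$ with $\pi(\gK)=\gK$, i.e.\ an automorphism of the complex, and all your induced maps $\pi_k$ come from this single vertex permutation. The paper's formal definition is broader: equivariance is required for an \emph{arbitrary} sequence $\rmP=(P_0,\ldots,P_p)$ of independent permutation matrices, with the incidence data transformed as part of the input, $f(\rmP\rmH^{\rm in},\rmP\rmB\rmP^\top)=\rmP f(\rmH^{\rm in},\rmB)$. This is the statement that the layer's output does not depend on the arbitrary ordering of the simplices at each dimension — e.g.\ swapping the rows of $H_1$ for two edges that are \emph{not} related by any automorphism, together with the corresponding rows/columns of $B_1$ and $B_2$, must swap the output rows. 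Your formalization does not cover such relabelings at all, so as stated it proves a strictly weaker claim (equivariance under $\mathrm{Aut}(\gK)$ only). The fix is cheap: treat the boundary matrices as inputs that are conjugated by $\rmP$, and your adjacency-preservation step becomes the observation that adjacency in the relabeled complex between $P_n(\sigma)$ and $P_{n\pm1}(\tau)$ holds exactly when it held between $\sigma$ and $\tau$ — after which the rest of your argument goes through verbatim.
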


Additionally, if the complex is oriented, from a mathematical point of view, the choice of orientation is irrelevant. Therefore, MPSN layers should be equivariant with respect to changes in the orientation of the complex. Intuitively, this amounts to changes in the signature of the elements of the boundary matrices $B_k$ and rows of the feature matrices. 

\begin{theorem}[Informal]
\label{theo:mpsn_orient_equiv}
Consider an MPSN layer with a message function that multiplies the features of the neighbours by their relative orientation (i.e. $\pm1$). If the message, aggregate and update functions are odd (i.e. $f(x) = -f(-x)$), the layer is orientation equivariant.    
\end{theorem}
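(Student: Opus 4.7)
The plan is to formalize an orientation change as a diagonal sign operator $D = \mathrm{diag}(d_\sigma)_{\sigma \in \gK}$ with $d_\sigma \in \{+1, -1\}$, acting on features by $h_\sigma \mapsto d_\sigma h_\sigma$ and on the signed boundary matrices by $B_k \mapsto D_{k-1} B_k D_k$, where $D_k$ denotes the restriction of $D$ to $k$-simplices. This is precisely the transformation rule induced by Definition \ref{def:pos_neg_orient} when the chosen vertex order of some $\sigma$'s is replaced by an odd permutation. A direct consequence is that every relative orientation between adjacent simplices transforms covariantly: writing $o^{\gB}_{\sigma\tau} = B_k[\tau,\sigma]$ for the face--coface case and $o^{\ua}_{\sigma\tau} = -\sum_\delta B_{k+1}[\sigma,\delta]\, B_{k+1}[\tau,\delta]$ for the upper case (and analogously for $\gC$ and $\da$), one has $o^{\square}_{\sigma\tau} \mapsto d_\sigma d_\tau\, o^{\square}_{\sigma\tau}$ for every $\square \in \{\gB,\gC,\da,\ua\}$. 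The goal is then to prove that for every simplex $\sigma$ the updated feature obeys $h_\sigma^{t+1} \mapsto d_\sigma h_\sigma^{t+1}$.

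First I would analyse the two-argument messages $M_\gB$ and $M_\gC$. By hypothesis each neighbour feature enters multiplied by the relative orientation, so a boundary message has the form $M_\gB\bigl(h_\sigma,\, o^{\gB}_{\sigma\tau} h_\tau\bigr)$. Under the orientation change this becomes $M_\gB\bigl(d_\sigma h_\sigma,\, (d_\sigma d_\tau o^{\gB}_{\sigma\tau})(d_\tau h_\tau)\bigr) = M_\gB\bigl(d_\sigma h_\sigma,\, d_\sigma o^{\gB}_{\sigma\tau} h_\tau\bigr)$, using $d_\tau^2 = 1$. Since $d_\sigma \in \{\pm 1\}$ acts as a scalar and $M_\gB$ is odd in each argument, this evaluates to $d_\sigma\, M_\gB\bigl(h_\sigma,\, o^{\gB}_{\sigma\tau} h_\tau\bigr)$. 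Hence every boundary message picks up exactly one factor $d_\sigma$, and the same computation handles $\gC$.

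Next I would treat the three-argument messages $M_\ua$ and $M_\da$, where the shared simplex feature $h_{\sigma\cup\tau}$ (resp. $h_{\sigma\cap\tau}$) is of different dimension and also transforms under $D$. The clean way forward is to require this auxiliary feature to enter $M$ already multiplied by its relative orientation with respect to $\sigma$ (e.g.\ $o^{\gC}_{\sigma,\sigma\cup\tau}\, h_{\sigma\cup\tau}$), which is a legitimate reading of the hypothesis since the shared simplex is itself a neighbour of $\sigma$. The same algebra as above, combined with the fact that $o^{\ua}_{\sigma\tau}$ transforms as $d_\sigma d_\tau$, then makes every upper/lower message pick up exactly one factor $d_\sigma$. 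Aggregating with an odd AGG (e.g.\ summation, for which the factor $d_\sigma$ can be pulled out of each summand) preserves this factor, so each of $m_\gB^{t+1}(\sigma), m_\gC^{t+1}(\sigma), m_\da^{t+1}(\sigma), m_\ua^{t+1}(\sigma)$ transforms as $d_\sigma$ times its original value. Applying oddness of $U$ in each of its arguments to \eqref{eq:mpsn} finally yields $h_\sigma^{t+1} \mapsto d_\sigma h_\sigma^{t+1}$, which is orientation equivariance.

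The main obstacle I expect is making precise how the auxiliary features $h_{\sigma\cap\tau}$ and $h_{\sigma\cup\tau}$ are allowed to enter $M_\da$ and $M_\ua$. A naive reading of the informal hypothesis leaves these shared-simplex features unsigned; then flipping only the orientation of $\sigma\cup\tau$ would flip the upper message without flipping $h_\sigma$, which would break equivariance at $\sigma$. The formal proof must therefore either restrict to the boundary and upper adjacencies of Theorem \ref{thm:sparse swl} under a two-argument parametrisation (where the clean computation above suffices), or explicitly sign the auxiliary features by a relative orientation with respect to $\sigma$; either choice must be spelled out carefully when the informal statement is made rigorous in Appendix \ref{app:equiv_inv}.
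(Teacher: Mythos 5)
Your proposal is correct and follows essentially the same route as the paper's proof in Appendix E: orientation changes are encoded by diagonal $\pm1$ matrices, relative orientations transform as $d_\sigma d_\tau$, the neighbour's own sign cancels via $d_\tau^2=1$, and joint oddness of the message/aggregate/update pipeline pulls the remaining factor $d_\sigma$ through to the output (the paper packages this as Lemma \ref{lemma:local_orient_equiv}, feeding $o_{\sigma,\tau}\cdot h_\tau$ into an odd aggregator $A^*$). Your flagged subtlety about the shared-simplex features is resolved exactly as you suggest — every neighbour feature, including $h_{\sigma\cup\tau}$ and $h_{\sigma\cap\tau}$, enters multiplied by its relative orientation to $\sigma$ — and note only that the oddness you actually use is joint oddness $f(-x)=-f(-(-x))$ of the whole input, as in the theorem statement, rather than oddness "in each argument" separately.
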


To construct an invariant model with respect to these transformations, it suffices to stack multiple equivariant layers like the ones above and to use an appropriate invariant readout function (more in Appendix \ref{app:equiv_inv}). 

\paragraph{Message Passing Complexity} A $d$-simplex $\sigma$ of a $p$-complex has $d+1$ boundary simplices and there are $\binom{d+1}{2}$ upper adjacencies between them. Then, a message passing procedure relying on Theorem \ref{thm:sparse swl}, which considers only these adjacencies, has a computational complexity $\Theta\big(\sum_{d=0}^p (d + 1) S_d + \binom{d+1}{2} S_d\big) = \Theta\big(\sum_{d=0}^p \binom{d+1}{2} S_d\big)$. If we consider $p$ to be a small constant, which is common for many real-world datasets, then the binomial coefficients can be absorbed in the bound, which results in a linear computational complexity in the size of the complex $\Theta(\sum_{d=0}^p S_d)$. Including co-boundaries does not increase this complexity, but lower adjacencies do. Because any $d$-complex can be down adjacent to any other $d$-complex, the worst case complexity is quadratic in the size of the complex $\mathcal{O}(\sum_{d=0}^p S_d^2)$.

\paragraph{Clique Complex Complexity} The number of $k$-cliques in a graph with $n$ nodes is upper-bounded by $\gO(n^k)$ and they can be listed in $\gO(a(G)^{k-2}m)$ time \citep{Chiba1985ArboricityAS}, where $a(G)$ is the arboricity of the graph (i.e. a measure of graph sparsity) and $m$ is the number of edges. Since the arboricity can be shown to be at most $\gO(m^{1/2})$ and $m \leq n^2$, all $k$-cliques can be listed in $\gO(n^{k-2}m)$. In particular, all triangles can be found in $\mathcal{O}(m^{3/2})$. For certain classes, such as planar graphs, where $a(G) \leq 3$, the complexity becomes linear in the size of the graph. 
%
%
Overall, the fact that the algorithm takes advantage of the sparsity of the graph makes its complexity strictly better than the $\Omega(n^k)$ of all $k$-GNNs~\citep{morris2019weisfeiler, morris2020weisfeiler, maron2019provably}.

\section{
Simplicial Networks by Linear Regions}

While the WL test has been used for studying the expressive power of GNNs, other tools have been used to study the expressive power of conventional neural networks, like fully connected and convolutional. One such tool is based on the number of linear regions of networks using piece-wise linear activations. This number has been used to draw distinctions between the expressive power of shallow and deep network architectures \cite{Pascanu2013OnTN,montufar2014number}. It can also be related to the approximation properties of the networks \cite{pmlr-v49-telgarsky16} and it has also been considered to shed light into the representational power of (standard) convolutional networks \cite{pmlr-v119-xiong20a}. 
We show how this tool can also be used to approximate the number of linear regions of the functions represented by graph, simplicial, and message passing simplicial networks. 
We focus on the case where the message function is a linear layer and AGG is sum followed by ReLU. 
We obtain new results in all cases, showing superior capacity of MPSNs. 
The details of the proofs of Theorems~\ref{thm:linear regions of GNNs}, \ref{thm:linear regions of SCNN}, and \ref{thm:linear regions of MPSN} are relegated to Appendix~\ref{app:linear regions}. 

\paragraph{GNN}
We start with the simple case of Graph Neural Networks (GNNs). 
A graph $G=(V,E,\omega)$ is a set of triplets with vertices $V=\{v_i\}_{i=1}^{S_0}$, edges $E\subseteq V\times V$, and edge weight function $\omega\colon 
E \to \R$. The graph has an adjacency matrix $A$ with the $(i,j)$th entry $a_{ij}=\omega(v_i,v_j)$. 
Each node has a $d$-dimensional feature, and we collect the feature vectors into a matrix $H^{\rm in}\in\R^{S_0\times d}$. 
We consider a GNN convolutional layer of the form
\begin{equation}\label{eq:gnn conv}
    H^{\rm out} = \psi\left(\mathcal{H}(A,H^{\rm in})W_0\right),
\end{equation}
where 
$\psi$ is the entry-wise ReLU, $\mathcal{H}(A,H)$ is an aggregation mapping, 
and $W_0\in \R^{d\times m}$ are the trainable weights. 
%

\begin{theorem}[Number of linear regions of a GNN layer]\label{thm:linear regions of GNNs} 
Consider a graph $G$ with 
$S_0$ nodes, node input features of dimension $d\geq1$, and node output features of dimension $m$. 
Suppose the aggregation function $\mathcal{H}$ as function of $H$ is linear and invertible. 
Then, the number of linear regions of the functions represented by a ReLU GNN layer \eqref{eq:gnn conv} has the optimal upper bound 
\begin{equation}\label{eq:upper bound GNN}
R_{\rm GNN} = \left(2\sum_{i=0}^{d-1}{m -1 \choose i}\right)^{S_0}.
\end{equation}
This applies to aggregation functions with no trainable parameters including GCN convolution \cite{KiWe2017}, spectral GNN \cite{defferrard2016convolutional,BrZaSzLe2013}, and traditional message passing \cite{Gilmer_etal2017}. 
\end{theorem}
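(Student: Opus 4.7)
The plan is to reduce the problem to a classical central hyperplane arrangement count. First I would observe that, since $\mathcal{H}(A,\cdot)$ is linear and invertible as a map $\R^{S_0\times d}\to\R^{S_0\times d}$, it is a bijective affine change of coordinates on the input space and therefore preserves the number of linear regions of the composed function. So it suffices to count the linear regions of the map $F\colon H\mapsto \psi(HW_0)$ for $H\in\R^{S_0\times d}$ and $W_0\in\R^{d\times m}$.

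Next I would analyse the ReLU activation pattern of $F$. The $(i,j)$-th entry of $HW_0$ equals $h_i\cdot w_j$, where $h_i\in\R^d$ is row $i$ of $H$ and $w_j\in\R^d$ is column $j$ of $W_0$. Thus the $mS_0$ ReLU switching hyperplanes in $\R^{S_0\times d}$ split into $S_0$ groups, the $i$-th group consisting of the $m$ central hyperplanes $\{H:h_i\cdot w_j=0\}_{j=1}^m$, which constrain only the coordinates of row $i$. Two inputs lie in the same linear region of $F$ if and only if they produce the same sign pattern for every $(i,j)$, so the regions factor as Cartesian products across rows: the total count equals $\prod_{i=1}^{S_0} r_i$, where $r_i$ is the number of regions of the central arrangement $\{w_j^\perp\}_{j=1}^m$ in the copy of $\R^d$ associated with node $i$. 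Because $W_0$ is shared, all $r_i$ coincide with a single number $r$.

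Then I would invoke Schläfli's bound (equivalently Zaslavsky via the intersection lattice of a central arrangement): any arrangement of $m$ hyperplanes through the origin in $\R^d$ has at most $r \le 2\sum_{i=0}^{d-1}\binom{m-1}{i}$ regions, with equality when the $m$ hyperplanes are in general position through the origin. This yields the upper bound $R_{\rm GNN}=r^{S_0}=\bigl(2\sum_{i=0}^{d-1}\binom{m-1}{i}\bigr)^{S_0}$. Optimality follows by choosing the columns of $W_0$ in general position in $\R^d$, which realises the Schläfli maximum simultaneously for every node and hence attains $R_{\rm GNN}$; the aggregation step preserves the count by the invertibility assumption. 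Finally, I would note in a short remark that aggregators such as GCN normalisation, unnormalised sum, spectral convolutions and standard message passing are all linear in $H$ and, generically in $A$, invertible, so the bound applies to them.

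The main obstacle is justifying the product factorisation rigorously, i.e.\ verifying that the full arrangement in $\R^{S_0\times d}$ has exactly $\prod_i r_i$ regions rather than strictly fewer, despite the fact that the hyperplane normals are constrained by the shared weight matrix $W_0$. This requires checking that the sign patterns realised for different rows are genuinely independent because the rows of $H$ are free variables, a point that is intuitive but needs to be written out carefully together with the tightness argument for the Schläfli bound under a generic choice of $W_0$.
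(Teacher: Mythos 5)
Your proposal is correct and follows essentially the same route as the paper: use invertibility of $\mathcal{H}$ to reduce to the layer $H\mapsto\psi(HW_0)$, observe that the $mS_0$ switching hyperplanes decompose as a direct sum of $S_0$ per-node central arrangements whose essentialization is $m$ hyperplanes in $\R^d$ with normals the columns of $W_0$, and apply the Zaslavsky/Schl\"afli count $2\sum_{i=0}^{d-1}\binom{m-1}{i}$ for generic weights. The product factorisation you flag as the main obstacle is exactly what the paper settles by citing the direct-sum property of hyperplane arrangements ($r(\mathcal{A})=\prod_j r(\mathcal{A}_j)$ from Zaslavsky), so no genuinely new idea is needed there.
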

The above result should be compared with the optimal upper bound for a standard dense ReLU layer without biases, 
which for $d$ inputs and $m$ outputs is $2\sum_{i=0}^{d-1}{m-1\choose i}$. 

The invertibility condition for the aggregation function $\mathcal{H}$ can be relaxed, but is satisfied by many commonly used graph convolutions:
i) For an undirected graph, the normalised adjacency matrix has non-negative eigenvalues. If the eigenvalues are all positive, the aggregation function is invertible. 
ii) The Fourier transform is the square matrix of eigenvectors, as used in the spectral GNN \cite{BrZaSzLe2013}. When the graph Laplacian is non-singular, the Fourier transform matrix is invertible. 
iii) For the transform $\Phi$ by graph wavelet basis, Haar wavelet basis or graph framelets, $\Phi$ is invertible in all cases \cite{GWNN,LiHANet2019,zheng2020graph,zheng2020decimated,zheng2021framelets,wang2020haar}. So the bound in Theorem~\ref{thm:linear regions of GNNs} applies to them.

\paragraph{SCNN}
Simplicial Complex Neural Networks (SCNNs) were proposed by \citet{ebli2020simplicial}. We consider a version of their model using only the first power of a Laplacian matrix, generically denoted here by $M_n$:
\begin{equation}\label{eq:scnn layer}
   H^{\rm out}_n = \psi 
  \bigl(M_n H^{\rm in}_n W_n\bigr),\quad n=0,\dots,p.
\end{equation}
In this type of layer, the features on simplices of different dimensions $n=0,1\ldots,p$ are computed in parallel. 

\begin{theorem}[Number of linear regions for an SCNN layer]\label{thm:linear regions of SCNN} 
Consider a $p$-dimensional simplicial complex with $S_n$ $n$-simplicies for $n=0,1,\ldots, p$. 
Suppose that each 
$M_n$ is invertible. 
Then the number of linear regions of the functions represented by a ReLU SCNN layer \eqref{eq:scnn layer} 
has the optimal upper bound 
\begin{equation}\label{eq:upper bound SCNN}
R_{\rm SCNN} =
 \prod_{n=0}^p\left(2\sum_{i=0}^{d_n-1}{m_n-1\choose i}\right)^{S_n}, 
\end{equation}
where, for each of the $n$-simplices, $d_n$ is the input feature dimension and $m_n$ is the number of output features. 
\end{theorem}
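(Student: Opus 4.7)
The plan is to exploit the block-diagonal structure of the SCNN layer \eqref{eq:scnn layer}. Since the pre-activation for $n$-simplices, $M_n H_n^{\rm in} W_n$, depends only on $H_n^{\rm in}$, the whole layer viewed as a function of the joint input $(H_0^{\rm in},\ldots,H_p^{\rm in})\in\prod_n\R^{S_n\times d_n}$ factors as a Cartesian product of $p+1$ independent maps, one per simplex dimension. Since the regions of a hyperplane arrangement that decomposes as a product of arrangements in orthogonal subspaces multiply, it will suffice to prove the per-dimension bound $\bigl(2\sum_{i=0}^{d_n-1}\binom{m_n-1}{i}\bigr)^{S_n}$ and then take the product over $n$.

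For a fixed $n$, I would first use invertibility of $M_n$ to change variables to $\tilde H = M_n H_n^{\rm in}$, which is a linear bijection of $\R^{S_n\times d_n}$ and so does not alter region counts. In the new coordinates, the $(i,j)$ entry of the pre-activation is $\tilde h_i\cdot (W_n)_{:,j}$, where $\tilde h_i$ is the $i$th row of $\tilde H$. The key observation is that the $S_n m_n$ sign-determining hyperplanes split into $S_n$ groups of $m_n$, with the $i$th group entirely supported on the $d_n$-dimensional row-$i$ subspace of $\R^{S_n\times d_n}$. Thus, up to the invertible preprocessing, the whole arrangement is a Cartesian product of $S_n$ identical central arrangements of $m_n$ hyperplanes in $\R^{d_n}$ determined by the columns of $W_n$.

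At this point I would invoke the optimal Zaslavsky-type bound for a single central arrangement of $m_n$ hyperplanes in $\R^{d_n}$, namely $2\sum_{i=0}^{d_n-1}\binom{m_n-1}{i}$, with equality for generic $W_n$ whose columns are in general position. Taking the product across the $S_n$ row-subspaces and then across the $p+1$ simplex dimensions yields $R_{\rm SCNN}$; optimality follows by exhibiting a $W_n$ whose columns realise general position for each $n$, so that each per-row arrangement attains the Zaslavsky bound and the Cartesian product attains the full bound. The main subtlety to be careful about is the role of invertibility of $M_n$: it is what ensures that the linear preprocessing neither collapses distinct hyperplanes nor introduces spurious ones, and that the product structure across the rows of $\tilde H$ translates faithfully back into a product structure on the original input space. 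Without this hypothesis, hyperplanes could coincide in the pre-image and the count would drop strictly below the stated bound, so the argument really is using invertibility in an essential way.
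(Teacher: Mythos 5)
Your proof is correct and takes essentially the same route as the paper's: the paper likewise decomposes the arrangement as a direct sum over dimensions $n$ and over the $S_n$ simplices (using invertibility of $M_n$ exactly as you do, to reduce each factor to a generic central arrangement of $m_n$ hyperplanes in $\R^{d_n}$ given by the columns of $W_n$), and then applies the Zaslavsky-type count $2\sum_{i=0}^{d_n-1}\binom{m_n-1}{i}$ with equality in general position. The only cosmetic difference is that the paper phrases the per-simplex reduction via the essentialization of each subarrangement rather than via an explicit change of variables, but the content is identical.
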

The product over $n$ in \eqref{eq:upper bound SCNN} reflects the fact that the features over simplices of different dimensions do not interact. 
The GNN bound in Theorem~\ref{thm:linear regions of GNNs} is recovered as the special case of the SCNN bound 
with $p=0$.

It is instructive to compare the SCNN bound in Theorem~\ref{thm:linear regions of SCNN} with the optimal bound for a dense layer. 
By Roth's lemma \cite{roth1934}, $\operatorname{vec}(M_n H_n^{\rm in} W_n) = (W_n^\top\otimes M_n)\cdot \operatorname{vec}(H_n^{\rm in})$, where $\operatorname{vec}$ denotes column-by-column vectorization and $\otimes$ the Kronecker product. 
Hence, for each $n$, we can regard the SCNN layer as a standard layer $\psi(U x)$ with weight matrix $U = (W_n^\top\otimes M_n)\in\mathbb{R}^{(m_n S_n) \times (S_n d_n)}$ and input vector $x = \operatorname{vec}(H_n^{\rm in})\in\mathbb{R}^{S_n d_n}$. 
Notice that for the SCNN layer, the weight matrix has a specific structure. 
%
A standard dense layer with $S_n d_n$ inputs and $m_n S_n$ ReLUs with generic weights and no biases computes functions with $2\sum_{i=0}^{S_n d_n - 1} {m_n S_n-1\choose i}$ regions. 

\paragraph{MPSN} 
In our Message Passing Simplicial Network (MPSN), the features on simplices of different dimensions are allowed to interact. 
For a $p$-dimensional 
complex, denote the set of $n$-simplices by $\mathcal{S}_n$ with $S_n = |\mathcal{S}_n|$. Denote the $n$-simplex input feature dimension by $d_n$, and the output feature dimension by $m_n=m$, $n=0,\ldots, p$. 
We consider an MPSN layer with linear message functions, sum aggregation for all messages and an update function taking the sum of the messages followed by a ReLU activation. 
For each dimension $n$, the output feature matrix $H_{n}^{\rm out}$ equals:
\begin{equation}\label{eq:m}
    \psi\Bigl( M_{n} H_{n}^{\rm in} W_n 
    + U_{n} H_{n-1}^{\rm in} W_{n-1}
    + O_{n} H_{n+1}^{\rm in}W_{n+1}\Bigr),
\end{equation}
where $\psi$ is an entry-wise activation ($s\mapsto\max\{0,s\}$ for ReLU), 
$W_n\in \mathbb{R}^{d_n\times m_n}$ are trainable weight matrices and 
%
$M_n\in\mathbb{R}^{S_n\times S_n}$,
$U_n\in\mathbb{R}^{S_n\times S_{n-1}}$, and $O_{n}\in\mathbb{R}^{S_n\times S_{n+1}}$ are some choice of adjacency matrices for the simplicial complex. These could be the Hodge Laplacian matrix $L_n$ and the corresponding boundary matrices $B_n^\top$, $B_{n+1}$, or one of their variants (e.g. normalised). 

It is convenient to write the entire layer output in standard form. 
Using Roth's lemma and concatenating over $n$ we can write \eqref{eq:m} as (details in Appendix~\ref{app:linear regions})
\begin{equation}
H^{\rm out} = \psi (W H^{\rm in} ), 
\label{eq:MPSN full standard}
\end{equation}
where $H^{\rm in} = \operatorname{vec}([H_0^{\rm in}|H_1^{\rm in}|\cdots|H_p^{\rm in}]) \in \mathbb{R}^N$, 
$N=\sum_{n=0}^pS_n d_n$, 
$H^{\rm out} = \operatorname{vec}([H_0^{\rm out} |H_1^{\rm out} |\cdots|H_p^{\rm out}])\in\mathbb{R}^M$, $M=\sum_{n=0}^p S_n m$, 
and 
\begin{align}
W =& 
\left[\begin{smallmatrix}
W_0^\top \otimes M_0  & W^\top_{1} \otimes O_0 & & & \\
W^\top_{0} \otimes U_{1} & W^\top_1\otimes M_1 & W^\top_2\otimes O_1 & \\
& W^\top_{1} \otimes U_{2} & W^\top_2\otimes M_2 & W^\top_3\otimes O_2  \\
&&&\ddots &
\end{smallmatrix}\right]. 
\label{eq:MPSNdimkron}
\end{align}
%
We study the number of linear regions of the function \eqref{eq:MPSN full standard} with ReLU based on the matrix $W \in \mathbb{R}^{M\times N}$. 
For each of the output coordinates $i\in\{1,\ldots, M\}$, the ReLU splits the input space $\mathbb{R}^N$ into two regions separated by a hyperplane $\{H^{\rm in}\in\mathbb{R}^N\colon W_{i:} H^{\rm in} = 0\}$ with normal $W_{i:}^\top\in\mathbb{R}^N$. 

\begin{figure}[t]
    \centering
\!\!\begin{tabular}{ccc}
GNN & SCNN & MPSN\\
\mbox{}
\!\!\! 
\includegraphics[clip=true,trim=5cm 8.2cm 4.5cm 7.5cm,width=.3\columnwidth]{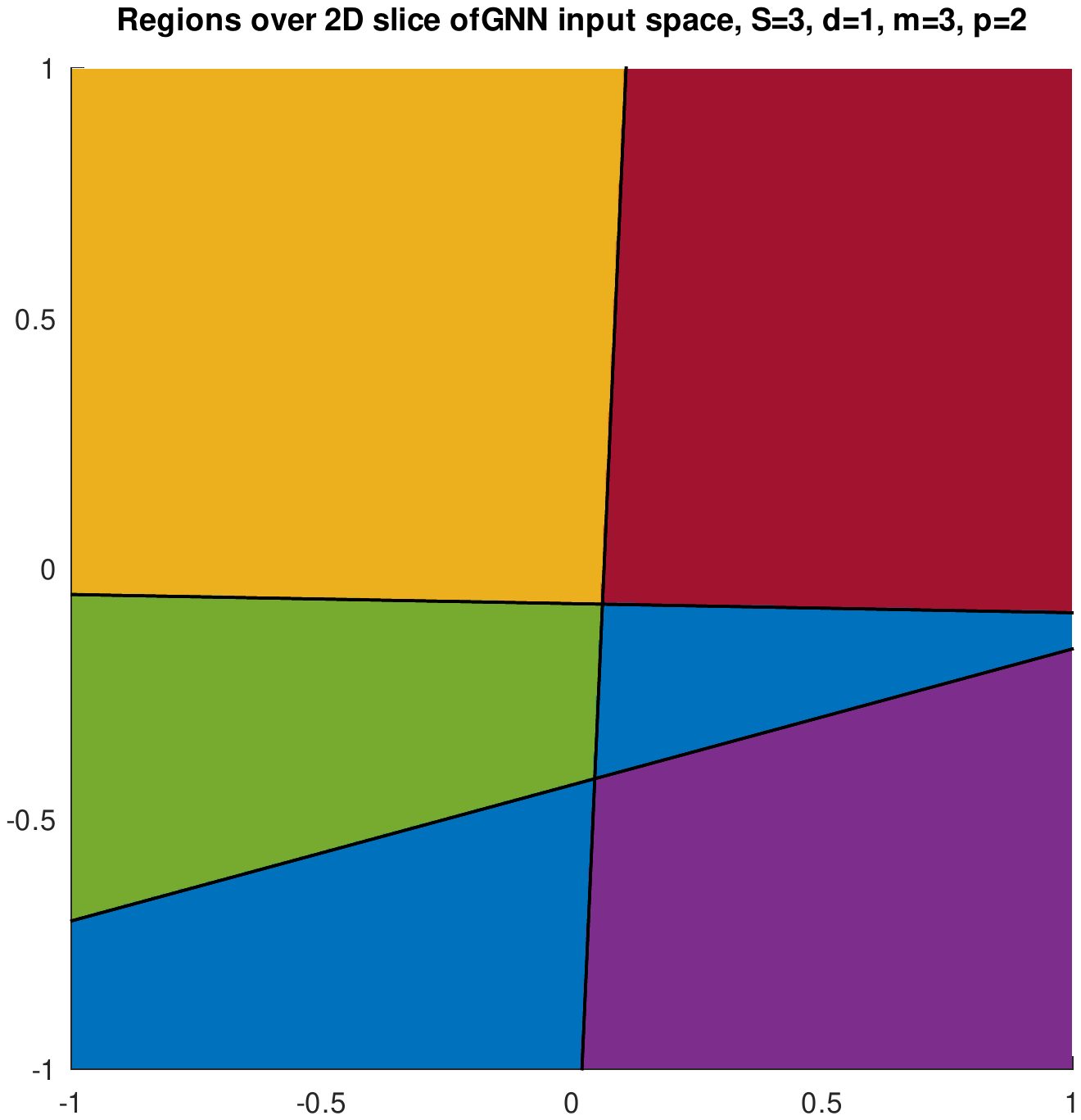}    
&\!\!\!
\includegraphics[clip=true,trim=5cm 8.2cm 4.5cm 7.5cm,width=.3\columnwidth]{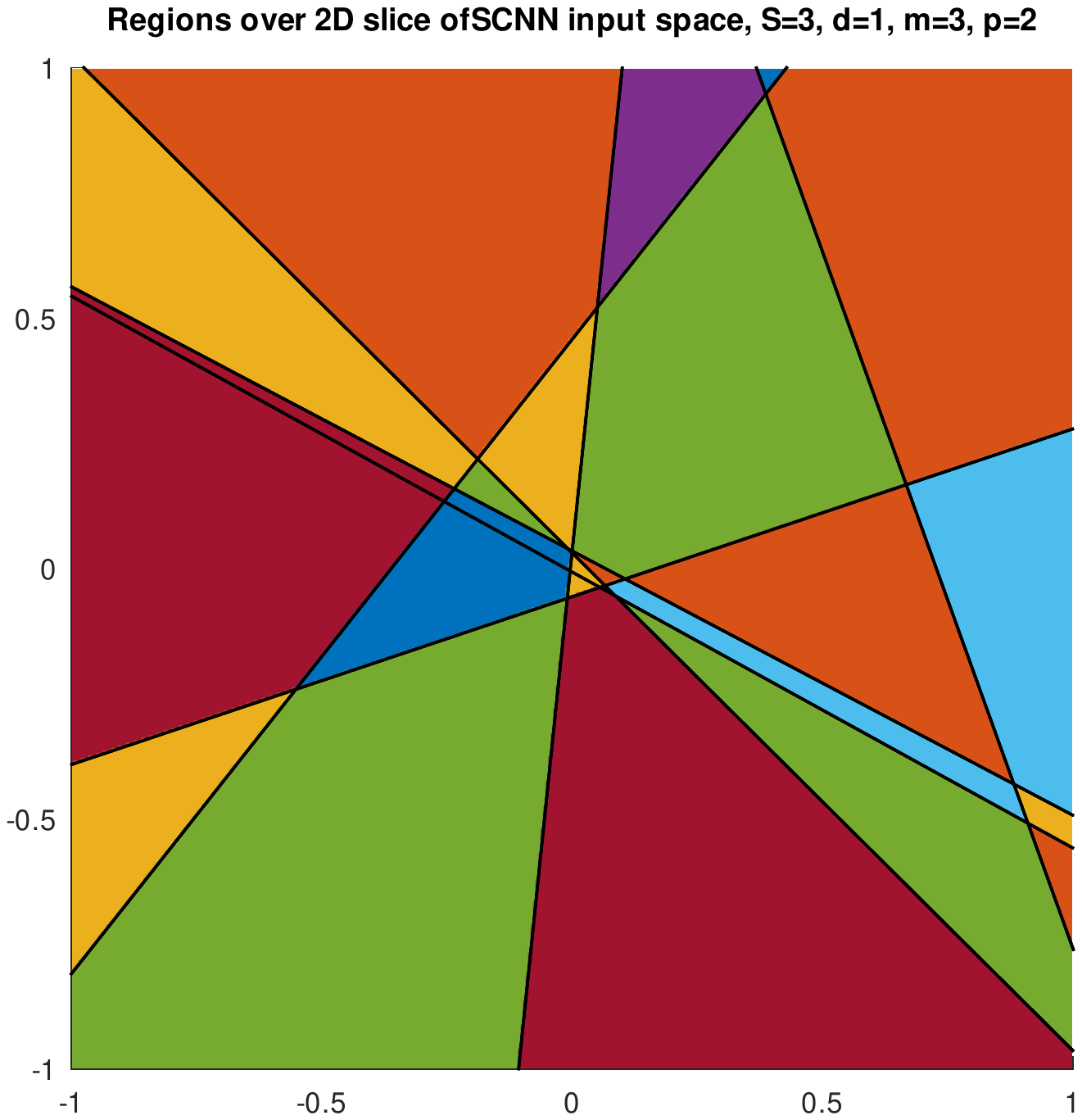}
&\!\!\! 
\includegraphics[clip=true,trim=5cm 8.2cm 4.5cm 7.5cm,width=.3\columnwidth]{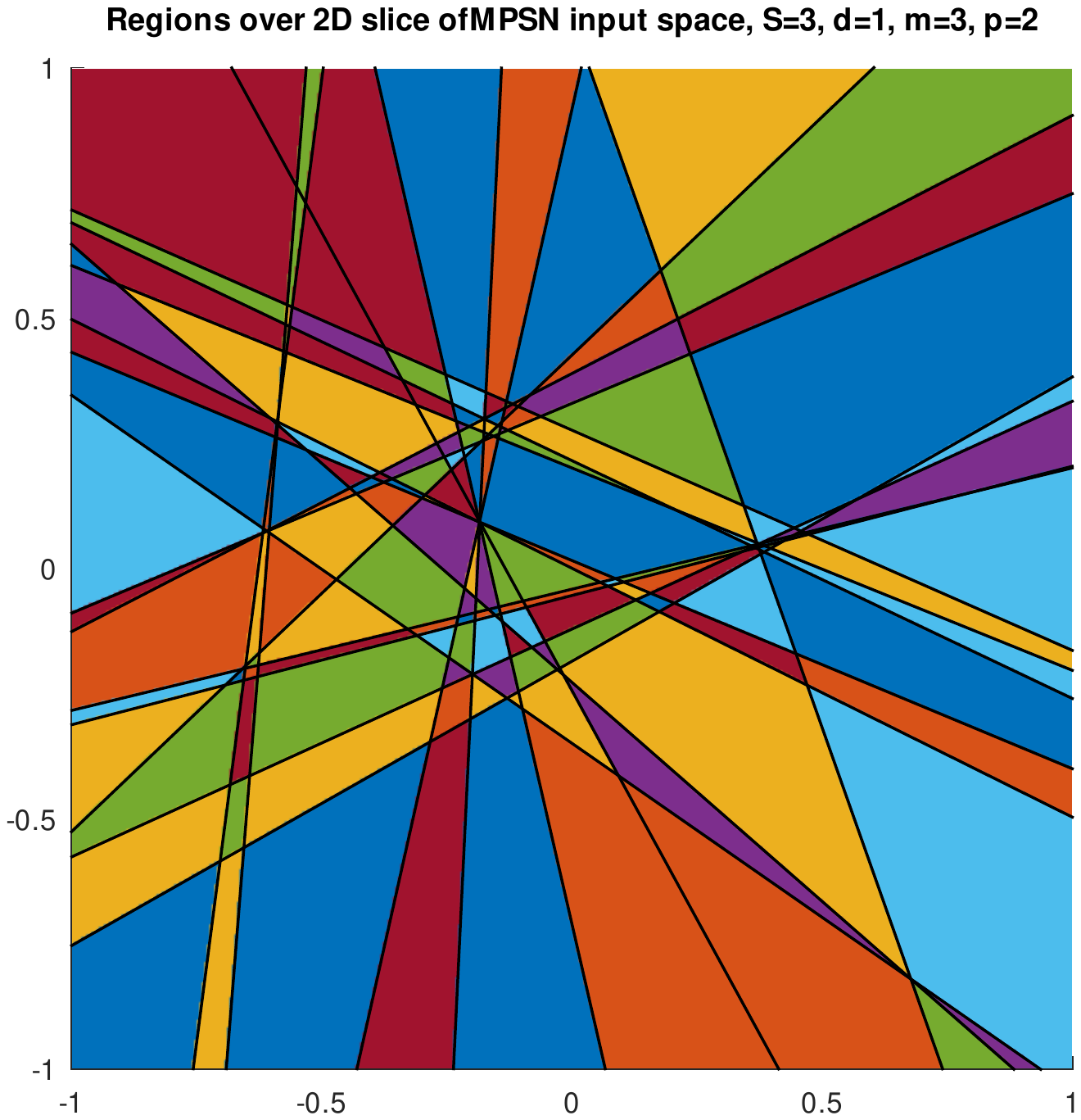} 
\end{tabular}
\vspace{-3pt}
\caption{A 2D slice of the input feature spaces of GNN, SCNN, MPSN layers with $S_0=S_1=3$, $S_2=1$ (the complex is a triangle), $d_0=d_1=d_2=1$, $m=3$, 
colored by linear regions of the represented functions, for a random choice of the weights. 
}\label{fig:regions}
\vspace{-1mm}
\end{figure}

In order to count the total number of regions, we will use results from the theory of hyperplane arrangements. 
\citet[Theorem~A]{zaslavsky1975facing} shows that the number of regions $r(\mathcal{A})$ defined by an arrangement $\mathcal{A}$ of hyperplanes in $\mathbb{R}^N$  
is 
$$
r(\mathcal{A}) = (-1)^{N} \chi_{\mathcal{A}}(-1),
$$
where $\chi_\mathcal{A}$ is the characteristic polynomial of the arrangement. 
%
By virtue of a theorem of Whitney (see \citealt[][Theorem~2.4]{Stanley04anintroduction} and \citealt[][Lemma 2.55]{orlik1992arrangements}),
it can be written as 
$
\chi_{\mathcal{A}}(t) = \sum
(-1)^{|\mathcal{B}|} t^{N - \operatorname{rank}(\mathcal{B})}, 
$ 
where the sum runs over subarrangements $\mathcal{B}\subseteq\mathcal{A}$ that are central (hyperplanes in $\mathcal{B}$ have a nonempty intersection), 
and $\operatorname{rank}(\mathcal{B})$ denotes the dimension spanned by the normals to the hyperplanes in $\mathcal{B}$. 
In our case, $\mathcal{A}$ is a central arrangement with normals given by the rows of the matrix $W$ in \eqref{eq:MPSNdimkron}. 
%
Hence: 
\begin{lemma}\label{lem:whitney}
The number of linear regions of the function \eqref{eq:MPSN full standard} with $W\in\mathbb{R}^{M\times N}$
and $\psi$ being ReLU is equal to 
\begin{equation*}
r(\mathcal{A}) = 
\sum_{B \subseteq \{1,\ldots, M\}} (-1)^{|B| - \operatorname{rank}(W_{B:})},
\end{equation*}
where $W_{B:}$ denotes the submatrix of rows $i\in B$. 
\end{lemma}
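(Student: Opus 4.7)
The plan is to identify the linear regions of $x \mapsto \psi(Wx)$ with the chambers of a central hyperplane arrangement in $\R^N$, and then directly invoke Zaslavsky's and Whitney's theorems as cited just above the lemma. Concretely, for each output coordinate $i \in \{1,\ldots,M\}$ the ReLU has a single breakpoint at $W_{i:} x = 0$ and is linear on either side of the hyperplane $H_i = \{x \in \R^N : W_{i:} x = 0\}$, so $\psi(Wx)$ is linear on any open region where all the signs $\sign(W_{i:} x)$ are constant, that is, on each chamber of the arrangement $\mathcal{A} = \{H_1, \ldots, H_M\}$. Hence the number of linear regions equals $r(\mathcal{A})$. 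A key structural fact driving the clean form of the lemma is that the layer has no bias: every $H_i$ passes through the origin, so $\mathcal{A}$ is central, and moreover \emph{every} subarrangement $\mathcal{B} \subseteq \mathcal{A}$ is central because $0$ lies in every intersection.

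Next, I would apply Zaslavsky in the form given in the excerpt, $r(\mathcal{A}) = (-1)^N \chi_{\mathcal{A}}(-1)$, and then expand $\chi_{\mathcal{A}}$ via Whitney. Since all subarrangements are automatically central by the previous step, the centrality restriction in Whitney's sum becomes vacuous, and the sum can be re-indexed by subsets $B \subseteq \{1, \ldots, M\}$ of row indices, with $\rank(\mathcal{B}) = \rank(W_{B:})$ (the dimension of the span of the chosen rows, which equals the codimension of their common intersection). This yields $\chi_{\mathcal{A}}(t) = \sum_{B \subseteq \{1,\ldots,M\}} (-1)^{|B|}\, t^{N - \rank(W_{B:})}$.

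Finally, combining the two identities and using $(-1)^{2N} = 1$ together with $(-1)^{-r} = (-1)^r$, the sign collapses as $(-1)^N \cdot (-1)^{|B|} \cdot (-1)^{N - \rank(W_{B:})} = (-1)^{|B| - \rank(W_{B:})}$, which is exactly the claimed identity. The step deserving most care, and which I would scrutinise last, is the bookkeeping in going from Whitney's sum over subarrangements of \emph{distinct} hyperplanes to a sum over all subsets of row indices: this is immediate when the rows $W_{i:}$ are pairwise non-proportional and nonzero, which is the regime of interest since MPSN weights are trained freely, while degenerate configurations (zero rows or coincident hyperplanes) would need to be handled by checking that the spurious contributions cancel in pairs within the alternating sum or by restricting the statement to the generic case.
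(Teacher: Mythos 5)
Your proposal is correct and follows essentially the same route as the paper: identify the linear regions with the chambers of the central arrangement whose normals are the rows of $W$, apply Zaslavsky's formula $r(\mathcal{A})=(-1)^N\chi_{\mathcal{A}}(-1)$, expand $\chi_{\mathcal{A}}$ via Whitney's theorem noting that centrality of subarrangements is automatic in the bias-free case, and collapse the signs. Your closing caveat about repeated or zero rows is a reasonable extra precaution that the paper leaves implicit, but it does not change the argument.
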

This formula counts the 
linear regions of any particular function represented by our layer. 
%
Some interesting cases can be computed explicitly. For instance: 
\begin{proposition}
\label{prop:simplecase}
Consider some $K\leq N$. If $\operatorname{rank}(W_{B:}) = \min\{|B|,K\}$ for any $B$, then $r(\mathcal{A}) =2\sum_{j=0}^{K-1}{M-1\choose j}$.  
\end{proposition}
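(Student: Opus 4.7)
The plan is to apply Lemma~\ref{lem:whitney} directly and then collapse the resulting alternating sum via two elementary identities. Because no biases are present the arrangement is central, so every $B \subseteq \{1,\ldots,M\}$ indexes a central subarrangement and the lemma gives
\[
r(\mathcal{A}) = \sum_{B \subseteq \{1,\ldots,M\}} (-1)^{|B| - \operatorname{rank}(W_{B:})}.
\]

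First I would substitute the rank hypothesis and reorganise the sum by cardinality $k = |B|$. For $k \le K$ the rank equals $k$, so the sign is $+1$; for $k > K$ the rank equals $K$, so the sign is $(-1)^{k-K}$. Grouping terms gives
\[
r(\mathcal{A}) = \sum_{k=0}^{K} \binom{M}{k} + \sum_{k=K+1}^{M} (-1)^{k-K} \binom{M}{k}.
\]

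Next I would use the vanishing identity $\sum_{k=0}^{M}(-1)^k \binom{M}{k} = 0$ to express the tail sum in terms of the head, which collapses the whole expression into a single sum over $0\le k\le K$ with weights $1-(-1)^{K-k}$. Only indices $k$ of parity opposite to $K$ survive, each contributing $2\binom{M}{k}$. Applying Pascal's rule $\binom{M}{k}=\binom{M-1}{k}+\binom{M-1}{k-1}$ to every surviving term then produces a multiset of $\binom{M-1}{\cdot}$'s whose indices are the union of the surviving $k$'s and the shifted $k-1$'s; a direct check shows this union equals $\{0,1,\ldots,K-1\}$, each index appearing exactly once, giving the claimed $2\sum_{j=0}^{K-1}\binom{M-1}{j}$.

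The main obstacle is the final bookkeeping step: confirming that after Pascal's rule the two parity classes of indices tile $\{0,\ldots,K-1\}$ without overlap or omission. This needs a short case split on the parity of $K$ (when $K$ is even the surviving $k$'s are odd and the shifted indices $k-1$ are even, and vice versa), after which the identity is immediate. As an independent sanity check, the rank hypothesis is exactly the statement that the matroid on the rows of $W$ is the uniform matroid $U_{K,M}$, so the resulting count must agree with the classical formula for the number of regions of $M$ central hyperplanes in general position spanning a $K$-dimensional subspace, which it does.
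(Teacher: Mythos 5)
Your proposal is correct and follows essentially the same route as the paper: both invoke Lemma~\ref{lem:whitney}, group the subsets $B$ by cardinality using the rank hypothesis, and then simplify the resulting alternating binomial sum. The only difference is cosmetic — the paper reindexes the tail and uses the partial alternating-sum identity $\sum_{j=0}^{m}(-1)^j\binom{M}{j}=(-1)^m\binom{M-1}{m}$, while you fold the tail into the head via the full vanishing identity and finish with Pascal's rule and a parity check; both yield $2\sum_{j=0}^{K-1}\binom{M-1}{j}$.
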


We obtain the following bounds. 
\begin{theorem}[Number of linear regions of an MPSN layer]\label{thm:linear regions of MPSN}  With the above settings, the maximum number of linear regions of the functions represented by a ReLU MPSN layer \eqref{eq:MPSN full standard} is upper bounded by 
\begin{equation*}
 R_{\rm MPSN}  
 \leq \prod_{n=0}^p\left(2\sum_{i=0}^{d_{n-1}+d_n+d_{n+1}-1}{m - 1 \choose i}\right)^{S_n} ,  
\end{equation*}
where we set $d_{-1}=d_{p+1}=0$. 
We also note the `trivial' upper bound, with  $N:=\sum_{n=0}^pS_n d_n$ and $M:=\sum_{n=0}^p S_n m$,  
\begin{equation*}
R_{\rm MPSN}\leq 2\sum_{j=0}^{N-1}{M-1\choose j}.
\end{equation*}
Moreover, 
if $\rank((O_n)_{C:})\geq\rank((M_n)_{C:})$ for any selection $C$ of rows and $d_{n+1} \geq d_n$, for $n=0,\ldots, p-1$, 
then for networks with outputs $H_0^{\rm out},\ldots, H_{p-1}^{\rm out}$ we have 
\begin{equation}
R_{\rm MPSN} \geq R_{\rm SCNN}. 
 \label{eq:lower bound MPSN}
\end{equation}
\end{theorem}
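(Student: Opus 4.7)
The plan is to prove the upper bounds via hyperplane arrangement arguments applied to the matrix $W$ of \eqref{eq:MPSNdimkron}, and the lower bound by a constructive weight assignment that uses the coboundary operator to emulate the SCNN's self-contribution. The unifying tool throughout is Lemma~\ref{lem:whitney}, which expresses the region count of a central arrangement as an alternating sum of row-subset ranks of the effective weight matrix.

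For the product upper bound, I would first show that, for any $n$-simplex $\sigma$, the $m$ rows of $W$ corresponding to the ReLUs at $\sigma$ lie in a subspace of $\mathbb{R}^N$ of dimension at most $d_{n-1}+d_n+d_{n+1}$. By the block-tridiagonal Kronecker structure of \eqref{eq:MPSNdimkron}, each such row indexed by $k$ decomposes as the sum of $(U_n)_{\sigma:}\otimes (W_{n-1})_{:k}$, $(M_n)_{\sigma:}\otimes (W_n)_{:k}$, and $(O_n)_{\sigma:}\otimes (W_{n+1})_{:k}$, and as $k$ varies these three contributions span at most $d_{n-1}$, $d_n$, and $d_{n+1}$ directions respectively. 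Zaslavsky's theorem then bounds the number of regions cut by the $m$ hyperplanes at $\sigma$ by $2\sum_{i=0}^{d_{n-1}+d_n+d_{n+1}-1}\binom{m-1}{i}$, since the arrangement is combinatorially equivalent to a central arrangement in $\mathbb{R}^{d_{n-1}+d_n+d_{n+1}}$. Since any global sign pattern of the full arrangement is determined by the tuple of per-simplex sign patterns, the map from global regions to tuples of per-simplex regions is injective, and hence the total number of regions is bounded by the product of the per-simplex counts, yielding the claimed bound. The ``trivial'' upper bound is obtained by applying Zaslavsky directly to the full arrangement of $M=\sum_{n=0}^p S_n m$ central hyperplanes in $\mathbb{R}^N$.

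For the lower bound restricted to outputs $H_0^{\rm out},\ldots,H_{p-1}^{\rm out}$, the construction chooses MPSN weights so that the coboundary contribution $O_n H_{n+1}^{\rm in}W_{n+1}$ emulates the SCNN self-contribution $M_n H_n^{\rm in}W_n^{\rm SCNN}$ at each dimension $n\leq p-1$. Given optimal SCNN weights $\{W_n^{\rm SCNN}\}$, the hypothesis $d_{n+1}\geq d_n$ lets us embed a copy of $W_n^{\rm SCNN}$ into $d_n$ rows of the MPSN's $W_{n+1}$, with the remaining $d_{n+1}-d_n$ rows filled generically. The rank hypothesis $\rank((O_n)_{C:})\geq\rank((M_n)_{C:})$ for every row selection $C$ then guarantees that every row-subset of the MPSN coboundary contribution at dimension $n$ has rank at least that of the corresponding SCNN self-contribution submatrix. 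Invoking Lemma~\ref{lem:whitney}, which expresses the region count purely in terms of such row-subset ranks, yields at least the SCNN's region count at each dimension $n\leq p-1$. The remaining weights $W_0,\ldots,W_{p-1}$ (and $W_p$, which does not contribute to the counted outputs) are chosen generically so that the residual self- and boundary-contributions do not decrease any row-subset rank, preserving the bound.

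The main obstacle is this lower bound. The upper bounds are straightforward consequences of arrangement theory applied to the structure \eqref{eq:MPSNdimkron}. The lower bound, however, is subtle because each $W_n$ simultaneously parametrises the outputs at the three dimensions $n-1$, $n$, $n+1$, so one cannot naively simulate the SCNN by zeroing out the coupling terms. The rank hypothesis on $O_n$ versus $M_n$ is the algebraic lever that bypasses this obstruction: since Lemma~\ref{lem:whitney} depends only on ranks of row-subsets of the effective weight matrix, it suffices to show that the MPSN's row-subset ranks dominate the SCNN's uniformly, which is precisely what the hypothesis provides.
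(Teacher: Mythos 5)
Your two upper bounds are proved essentially as in the paper: the per-simplex rows of $W$ span at most $d_{n-1}+d_n+d_{n+1}$ directions by the Kronecker structure of \eqref{eq:MPSNdimkron}, the per-simplex region counts are bounded by Zaslavsky's theorem for central arrangements, and the product bounds the global count because global sign vectors inject into tuples of per-simplex sign vectors; the trivial bound is the general-position maximum. That part is sound and, if anything, more explicit than the paper's "analogous to Theorem~\ref{thm:linear regions of SCNN}".

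The lower bound, however, has a genuine gap at its final step. You claim that once every row-subset rank of the MPSN matrix dominates the corresponding rank for the SCNN matrix, Lemma~\ref{lem:whitney} "yields" $r(\mathcal{A})\geq r(\mathcal{A}')$. The Whitney formula is an \emph{alternating} sum $\sum_B(-1)^{|B|-\rank(W_{B:})}$, and it is not at all evident that this quantity is monotone in the rank function: raising $\rank(W_{B:})$ flips the sign of the term indexed by $B$, so termwise comparison fails. The paper needs a separate statement (Proposition~\ref{prop:rank}) for exactly this implication, and proves it not from the Whitney sum but by a perturbation argument: increasing ranks means escaping a determinantal variety, which is achievable by an arbitrarily small perturbation, and the region count is lower semi-continuous in the entries of $W$. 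Without this (or an equivalent monotonicity argument), your conclusion does not follow from the rank comparison.

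A second, related gap: you compare ranks "at each dimension $n$", but Lemma~\ref{lem:whitney} ranges over \emph{all} row subsets $B\subseteq\{1,\ldots,M\}$, including those mixing output coordinates from different dimensions, and rows at dimensions $n$ and $n+1$ share the input coordinates of $H_{n+1}^{\rm in}$. A per-dimension domination only gives $r(\mathcal{A})\geq\max_n r(\mathcal{A}_n)$, not the product $R_{\rm SCNN}=\prod_n(\cdot)^{S_n}$; to get additivity of ranks across dimensions for mixed $B$ one needs the block-triangular staircase structure of \eqref{eq:MPSNdimkron} together with the block rank inequality $\rank\left(\left[\begin{smallmatrix}A&0\\ B&C\end{smallmatrix}\right]\right)\geq\rank(A)+\rank(C)$ (Lemma~\ref{lem:block-matrix-rank}), applied recursively as the paper does. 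Finally, a small bookkeeping slip: $W_p$ \emph{does} contribute to the counted outputs, since $H_{p-1}^{\rm out}$ contains the term $O_{p-1}H_p^{\rm in}W_p$ — indeed it is precisely the weight your embedding assigns $W_{p-1}^{\rm SCNN}$ to.
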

By a more careful analysis of the rank conditions it is possible to obtain improvements of these bounds in specific cases, an endeavor that we leave for future work. 


We note that the MPSN lower bound \eqref{eq:lower bound MPSN} surpasses the SCNN upper bound \eqref{eq:upper bound SCNN}. 
The GNN bound \eqref{eq:upper bound GNN} is a special case of the SCNN bound \eqref{eq:upper bound SCNN} with $p=0$. 
The regions for the three network architectures are illustrated in Figure~\ref{fig:regions} for a complex with $S_0=S_1=3$ and $S_2=1$, each input dimension $1$ and output dimension $m=3$. It shows that from GNN, SCNN to MPSN the number of linear regions increases in turn, which is consistent with the theory.

\paragraph{MPSN with Populated Higher-Features}
We also consider a situation of interest where we are given a simplicial complex but only vertex features. To still exploit the structure of the simplicial complex, we can populate the higher features as linear functions of the vertex features. 
%
We show this strategy can increase the functional complexity, i.e.\ $R_{\rm MPSN} \geq R_{\rm SCNN}$. See Proposition~\ref{prop:ghost} 
in Appendix~\ref{app:linear regions}. 


\section{Experiments}
\label{sec:experiments}

\begin{figure}[t]
    \centering
    \vspace{-5pt}
    \includegraphics[width=0.9\columnwidth]{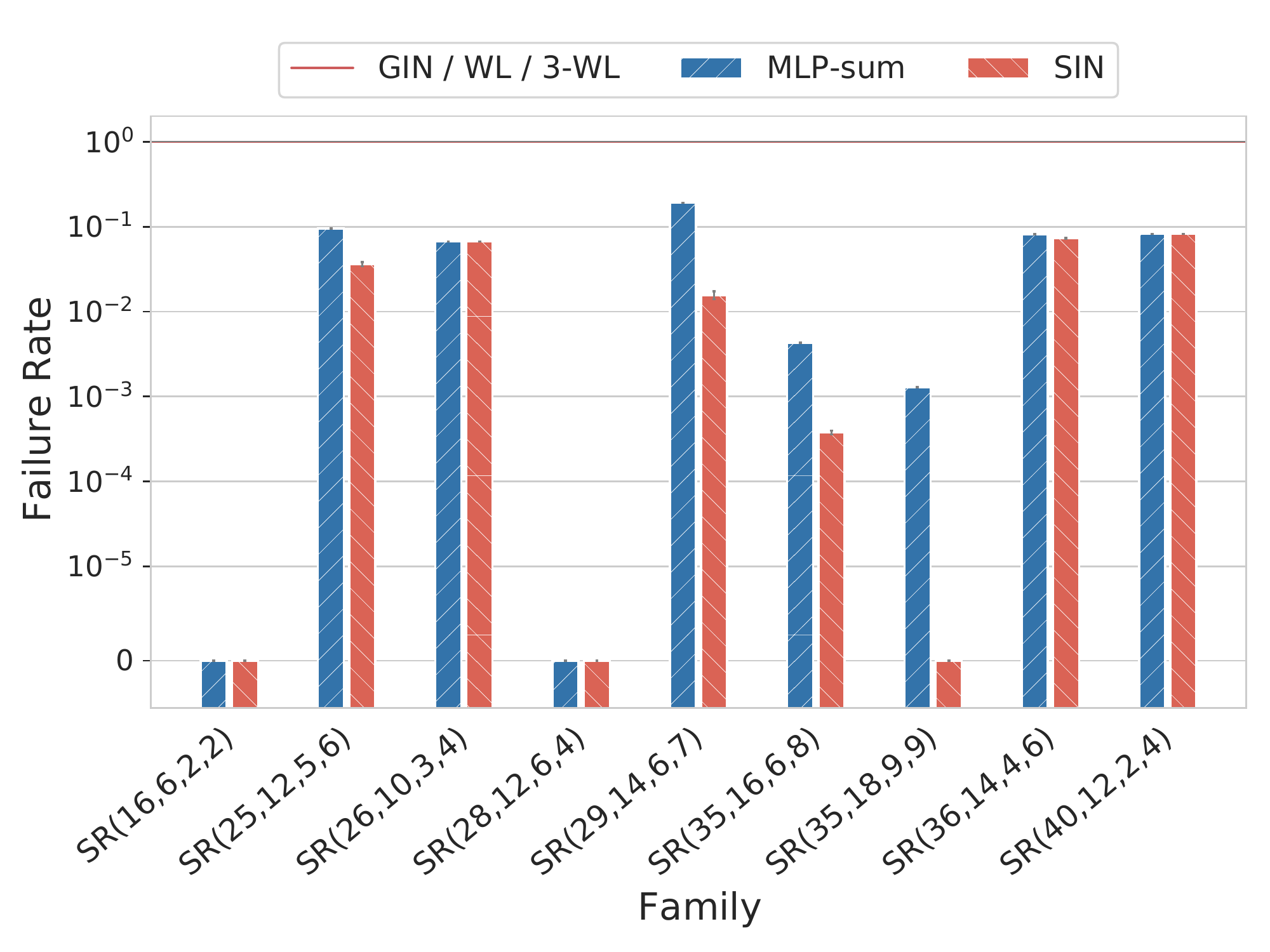}
    \vspace{-7pt}
    \caption{Failure rate on the task of distinguishing SR graphs; log-scale, \emph{the smaller the better}. GIN fails to distinguish all graph pairs in all families.}
    \label{fig:exp_sr}
    \vspace{-16pt}
\end{figure}

\paragraph{Strongly Regular Graphs}
We experimentally validate our theoretical result on the expressive power of our proposed architecture on the task of distinguishing hard pairs of non-isomorphic graphs. In particular, similarly to~\citet{bouritsas2020improving}, we benchmark it on $9$ synthetic datasets comprising families of Strongly Regular (SR) graphs. Strongly Regular graphs represent `hard' instances of graph isomorphism, as pairs thereof cannot provably be distinguished by the $3$-WL test (we refer readers to Section~\ref{app:higher_order_wl_and_srgs} for a formal proof).
In the experiments, we consider two graphs to be isomorphic if the Euclidean distance between their representations is below a fixed threshold $\varepsilon$. In particular, each graph is lifted to a $d$-dimensional simplicial complex, with $(d+1)$ the size of the largest clique in the family it belongs to. Lifted graphs are embedded by an untrained MPSN architecture parameterised similarly to GIN \citep{GIN}, which we refer to as Simplicial Isomorphism Network (SIN). Details are included in Appendix~\ref{app:exps}.

\begin{figure}[t]
    \centering
    \includegraphics[width=0.50\columnwidth]{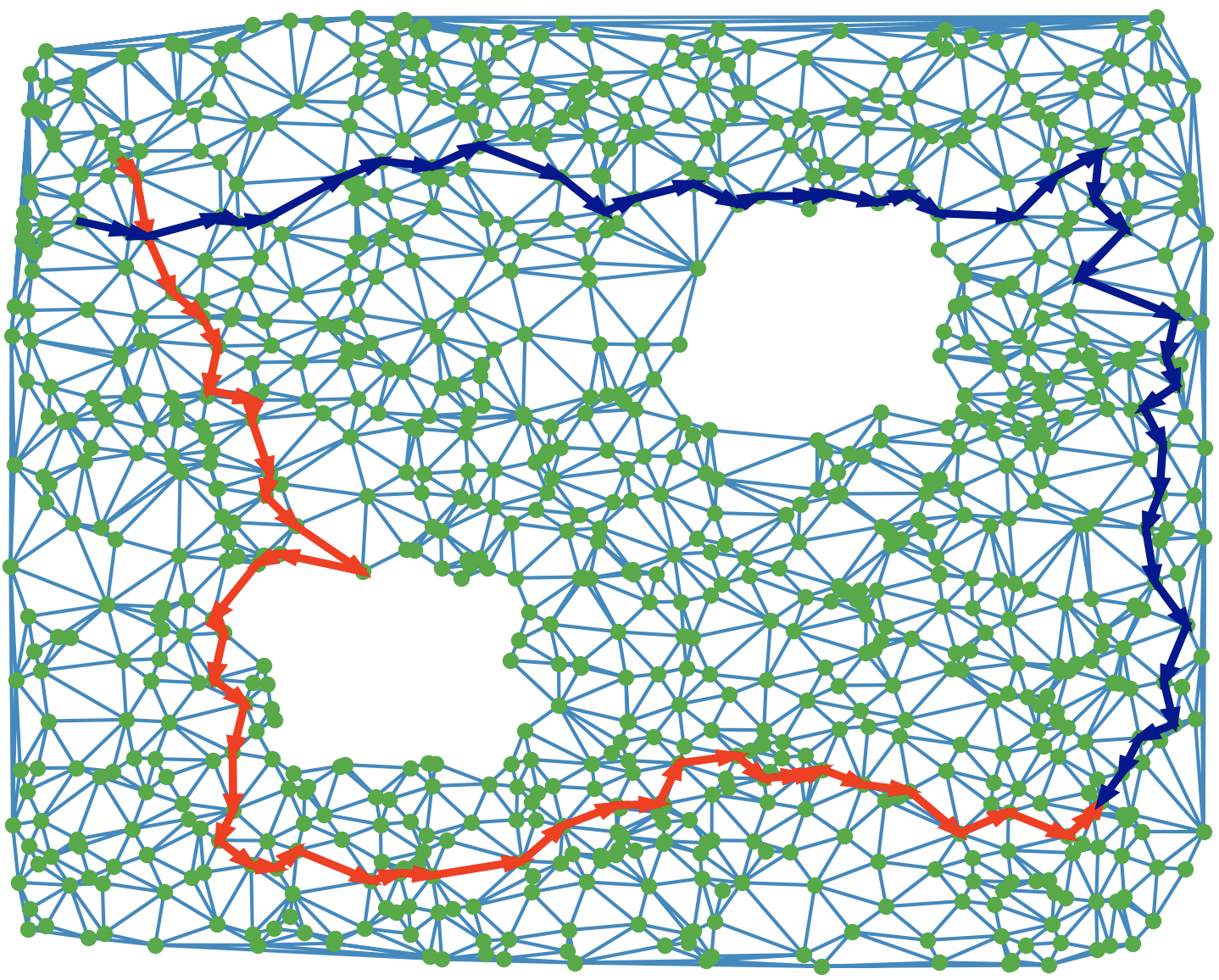}
    \caption{Two samples from the two different classes of trajectories. The two trajectories correspond to approximately orthogonal directions in the space of harmonic functions of $L_1$ associated with the two holes. }
    \label{fig:trajectory_complex}
    \vspace{-13pt}
\end{figure}
Results are illustrated in Figure~\ref{fig:exp_sr}, where we show performance on our isomorphism problem in terms of failure rate, that is the fraction of non-distinguished pairs. The experiment is performed for $10$ different random weight-initialisations and we report mean failure rate along with standard error (vertical error bars). We additionally report the performance of an MLP model with sum readout on the same inputs to assess the contribution of message passing to the disambiguation ability. As it can be observed, our (untrained) architecture is able to distinguish the majority of graph pairs in all families, since, in contrast to standard GNNs, it is able to access information related to the presence and number of cliques therein. Additionally, SIN outperforms the strong `MLP-sum' baseline on certain families, showing the favourable inductive bias intrinsic in simplicial message passing.

\paragraph{Edge-Flow Classification} We now turn our attention to two applications involving edge flows, which are represented as signals on oriented simplicial complexes. First, inspired by \citet{schaub2020random}, we generate a synthetic dataset of trajectories on the simplicial complex in Figure \ref{fig:trajectory_complex}, where we treat each triangle as a 2-simplex. All trajectories pass either through the bottom-left or the top-right corner, thus giving rise to two different classes that we aim to distinguish. Due to the two holes present in the complex, the trajectories of the two classes approximately correspond to orthogonal directions in the space of harmonic eigenfunctions of the $L_1$ Hodge-Laplacian \citep{schaub2020random}. Therefore, we hypothesise, that an orientation invariant MPSN network with orientation equivariant layers should be able to distinguish the two classes. The dataset contains 1000 train trajectories and 200 test trajectories. To make the task more challenging for non orientation invariant models, all the training complexes use the same orientation for the edges, while the test trajectories use random orientations. More details are in Appendix \ref{app:exps}.

Additionally, we consider a real-world equivalent of the synthetic benchmark above. Again, we adapt a benchmark from \citet{schaub2020random} containing ocean drifter trajectories around the island of Madagascar between years 2011-2018. To obtain a simplicial complex, we discretise the surface of the map into a simplicial complex containing a hole in the center, representing the island. The task is to distinguish between the clockwise and counter-clockwise flows around the island. As in the previous benchmark, the presence of the hole makes the harmonic signal associated with it extremely important for solving the task. The dataset has 160 train trajectories and 40 test trajectories. As before, the test flows use random orientations for each trajectory to make the task more difficult for non-invariant models. 

\begin{table}[t]
    \centering
    \begin{minipage}[t]{1.0\columnwidth}
        \centering
         \caption{Trajectory classification accuracy. Models with triangle awareness and orientation equivariance generalise better.}
        \label{tab:traj_classification}
           \resizebox{\columnwidth}{!}{
          \begin{tabular}{l cccc}
            \toprule
            \multirow{2}{*}{Method} & 
            
            \multicolumn{2}{c}{Synthetic Flow} &
            \multicolumn{2}{c}{Ocean Drifters} \\
            
            &
            Train &
            Test &
            Train &
            Test \\
            \midrule
            
            GNN $L_0$-inv & 
            63.9$\pm$2.4 &
            61.0$\pm$4.2 &
            70.1$\pm$2.3 &
            63.5$\pm$6.0 \\
            
            MPSN $L_0$-inv  &
            88.2$\pm$5.1 &
            85.3$\pm$5.8 &
            84.6$\pm$4.0 &
            \textbf{71.5}$\pm$\textbf{4.1} \\
            
            MPSN - ReLU &
            \hspace{-5pt}\textbf{100.0}$\pm$\textbf{0.0} & 
            50.0$\pm$0.0 &
            \hspace{-5pt}\textbf{100.0}$\pm$\textbf{0.0}  &
            46.5$\pm$5.7 \\
            
            MPSN - Id & 
            88.0$\pm$3.1 &
            82.6$\pm$3.0 &
            94.6$\pm$0.9 &
            \textbf{73.0}$\pm$\textbf{2.7} \\

            MPSN - Tanh  & 
            97.9$\pm$0.7 &
            \textbf{95.2}$\pm$\textbf{1.8} &
            \textbf{99.7}$\pm$\textbf{0.5} &
            \textbf{72.5}$\pm$\textbf{0.0} \\
        
            \bottomrule
          \end{tabular}%
          }
    \end{minipage}
    \vspace{-10pt}
\end{table}

We evaluate multiple MPSN models with lower and upper adjacencies, each being orientation invariant. The first model, MPSN $L_0$-inv, is made invariant from the first layer by simply using the absolute value of the features and ignoring the relative orientations between simplices. Two other MPSN models have equivariant layers like in Theorem \ref{theo:mpsn_orient_equiv} and use odd activation functions (Id and Tanh). They end with a final permutation invariant readout layer. The last MPSN model is similar to the previous two, but uses a ReLU activation function, which breaks its invariance. We also consider a GNN baseline operating in the line graph, unaware of the triangles. Like the first MPSN baseline above, it is made invariant by using absolute values. 

Results for both benchmarks over $5$ seeds are shown in Table~\ref{tab:traj_classification}. We notice that the GNN's unawareness of the triangles makes it perform worse, since it cannot extract the harmonic part of the signal. The $L_0$-inv model performs well on both benchmarks, but it generally lags behind the models that have  equivariant layers (i.e.\ Id and Tanh) and use a final invariant layer. Among these last two, wee see that the non-linear model generally performs better. Finally, we remark the ReLU model perfectly fits the training dataset of both benchmarks, which use a fixed orientation, but does not generalise to the random orientations of the test set.    


\paragraph{Real-World Graph Classification}

\begin{table}[t!]
    \centering
    \caption{Graph classification results on the TUDatasets benchmark. The table contains: dataset details \emph{(top)}, graph kernel methods \emph{(middle)}, and graph neural networks \emph{(bottom)}.}
    \label{tab:tud_datasets}
    \resizebox{1.0\linewidth}{!}{%
    \begin{tabular}{l|cccccc}
        \toprule
        Dataset & Proteins & NCI1 & IMDB-B & IMDB-M & RDT-B & RDT-M5K\\
        \midrule
        Avg $\bigtriangleup$ &
            27.4 &
            0.05 &
            392.0 &
            305.9 &
            24.8 &
            21.8\\
        Med $\bigtriangleup$ &
            21.0 &
            0.0 &
            119.5 &
            56.0 &
            11.0 &
            11.0\\
        \midrule       
        RWK 
        & 
         59.6$\pm$0.1 & 
         $>$3 days & 
         N/A &
         N/A &
         N/A &
         N/A\\
        
        GK (k=3) 
        &
        71.4$\pm$0.31 & 
        62.5$\pm$0.3 & 
        N/A &
        N/A &
        N/A &
        N/A\\

        PK 
        & 
         73.7$\pm$0.7 & 
         82.5$\pm$0.5 & 
         N/A & 
         N/A &
         N/A &
         N/A\\

          WL kernel 
          &
          75.0$\pm$3.1 & 
          86.0$\pm$1.8 & 
          73.8$\pm$3.9 &
          50.9$\pm$3.8 &
          81.0$\pm$3.1 &
          52.5$\pm$2.1\\
        
        \midrule
         
        DCNN 
        & 
          61.3$\pm$1.6
          & 56.6$\pm$1.0 &
          49.1$\pm$1.4 &
          33.5$\pm$1.4 &
          N/A &
          N/A\\

         DGCNN 
         & 
        75.5$\pm$0.9 & 
        74.4$\pm$0.5 & 
        70.0$\pm$0.9 & 
        47.8$\pm$0.9 &
        N/A &
        N/A\\

        IGN 
        & 
        76.6$\pm$5.5 &
        74.3$\pm$2.7 & 
        72.0$\pm$5.5 & 
        48.7$\pm$3.4 &
        N/A &
        N/A\\
        
        GIN 
        & 
        76.2$\pm$2.8 &
        82.7$\pm$1.7 &
        75.1$\pm$5.1 &
        52.3$\pm$2.8 &
        92.4$\pm$2.5 &
        57.5$\pm$1.5\\
        
        PPGNs 
        &
        77.2$\pm$4.7 & 
        83.2$\pm$1.1 & 
        73.0$\pm$5.8 & 
        50.5$\pm$3.6 &
        N/A &
        N/A\\
        Natural GN 
        &
        71.7$\pm$1.0 &
        82.4$\pm$1.3 &
        73.5$\pm$2.0 &
        51.3$\pm$1.5 &
        N/A&
        N/A\\
        
        GSN 
        &
        76.6 $\pm$ 5.0 & 
        83.5 $\pm$ 2.0 &
        77.8 $\pm$ 3.3 & 
        54.3 $\pm$ 3.3 &
        N/A &
        N/A \\
        
        \midrule
        
        {\bf SIN (Ours)} & 
        76.5 $\pm$ 3.4 & 
        82.8 $\pm$ 2.2 &
        75.6 $\pm$ 3.2 & 
        52.5 $\pm$ 3.0 &
        92.2 $\pm$ 1.0 &
        57.3 $\pm$ 1.6\\
        
        \bottomrule
        
    \end{tabular}
    }
    \vspace{-15pt}
\end{table}

Finally, we study the practical impact of considering higher-order interactions via (non-oriented) clique-complexes and report results for a few popular graph classification tasks commonly used for benchmarking GNNs~\citep{morris2020tu}. 
We follow the same experimental setting and evaluation procedure described in \citet{GIN}. Accordingly, we report the best mean test accuracy computed in a $10$-fold cross-validation fashion. We employ a SIN model similar to that employed in the SR graph experiments. We lift the original graphs to $2$-complexes by considering $3$-cliques (triangles) as $2$-simplices (see Appendix~\ref{app:exps} for more details).
The performance of SIN are reported in Table~\ref{tab:tud_datasets}, 
along with those of graph kernel methods (RWK \citep{gartner2003graph},
GK 
\citep{shervashidze2009efficient},
PK \citep{neumann2016propagation},
WL kernel \citep{shervashidze2011weisfeiler}) and other GNNs (DCNN \citep{DCNN_2016}, DGCNN \citep{zhang2018end}, IGN \citep{maron2018invariant}, GIN \citep{GIN}, PPGNs \citep{maron2019provably}, Natural GN \citep{de2020natural}, GSN \citep{bouritsas2020improving}). We observe that our model achieves its best results on the IMDB datasets, which have the largest mean and median number of triangles. In contrast, on datasets like NCI1, where the number of higher-order structures is close to zero, the model shows the same mean accuracy as GIN. Overall, we observe SIN to perform on-par with other GNN approaches. 

\section{Discussion and Conclusion}

\paragraph{Provably Powerful GNNs} In order to overcome the limited expressive power of standard GNN architectures, several works have proposed variants inspired by the higher-order $k$-WL procedures (see Appendix). \citet{maron2019provably} introduced a model equivalent in power to $3$-WL, \citet{morris2019weisfeiler} proposed $k$-GNNs, graph neural networks equivalents of set-based $k$-WL tests. By performing message passing on all possible $k$-tuples of nodes and across non-local neighborhoods, these models trade locality of computation for expressive power, and thus suffer from high spatial and temporal complexities. Local $k$-WL variants were introduced by~\citet{morris2020weisfeiler}, and distinguish local and global neighbors. The authors also propose provably powerful neural counterparts. Although more efficient, in contrast to our method, this approach still accounts for all possible node $k$-tuples in a graph. An alternative approach to improving GNN expressivity has been adopted in~\citep{bouritsas2020improving}, where isomorphism counting of graph substructures is employed as a symmetry breaking mechanism to disambiguate neighbours. Similarly to ours, this approach retains locality of operations; however, message passing is only performed at the node level.

\paragraph{Beyond Pairwise Interactions} We note that other graph lifting transformations could also be used for applying SWL and MPSNs to graph domains. While clique complexes are the commonest such transformation, many others exist \citep{ferrisimplicial} and they could be used to emphasise 
motifs that are relevant for the task \citep{Milo824}. More broadly, the transformation could target a much wider output space such as cubical complexes \citep{kac04} (see Appendix \ref{app:cubical}), or cell complexes \citep{hatcher_book}, for which a message passing procedure has already been proposed \citep{hajij2020cell}. One can study even more flexible structures described by incidence tensors \citep{albooyeh2020incidence}, which can also encode simplicial complexes. Alternatively, if the goal is to learn higher-order representations, one could also do so in an unsupervised manner directly from pairwise interactions~\citep{cotta2020unsupervised} or by explicitly modelling subgraphs~\citep{Alsentzer_subgraph}. We conclude this part by mentioning a large body of work on neural networks for hypergraphs, which subsume simplicial complexes \citep{hypergraph_gcn, feng2019hypergraph, Zhang2020Hyper-SAGNN:, Yadati_recursive_hypergraphs}. However, none of these study their expressive power in relationship to the WL hierarchy.  


\paragraph{Conclusion}
We introduce a provably powerful message passing procedure for simplicial complexes relying on local higher-order interactions. We motivate our message passing framework by the introduction of SWL, a colouring algorithm for simplicial complex isomorphism testing, generalising the WL test for graphs. We prove that when graphs are lifted in the simplicial complex space via their clique complex, SWL and MPSNs are more expressive than the WL test. We also analyse MPSNs through the lens of symmetry by showing they are simplex permutation equivariant and can be made orientation equivariant. Furthermore, we produce an estimate for the number of linear regions of GNNs, SCNNs and MPSNs, which also reveals the superior expressive power of our model. 
We empirically confirm these results by distinguishing between challenging non-isomorphic SR graphs, on real-world graph classification benchmarks, on edge flow classification tasks, and by computing 2D slices through the linear regions of the models. 

\section*{Acknowledgements}
YW and GM acknowledge support from the ERC under the EU's Horizon 2020 
programme (grant agreement n\textsuperscript{o} 757983). MB is supported in part by ERC Consolidator grant n\textsuperscript{o} 724228 (LEMAN). We would also like to thank Teodora Stoleru for creating Figure \ref{fig:sin} and Ben Day for providing valuable comments on an early draft. We are also grateful to the anonymous reviewers for their  feedback. 

\FloatBarrier
\bibliographystyle{icml2021}
\bibliography{gnn}

\cleardoublepage
\appendix

\allowdisplaybreaks

\twocolumn[
\begin{center}
    \Large{\textbf{Weisfeiler and Lehman Go Topological: Message Passing Simplicial Networks}}
\end{center}

\begin{center}
    \Large{\textbf{Appendix}}
\end{center}
\vspace{20pt}
]

\section{Proofs of SWL Theory Results}
\label{sec:proofs}

We first introduce the required notions and notation. We note that even though these results mainly refer to simplicial complexes, they also apply to graphs because any graph is also a simplicial complex.  

\begin{definition}[Simplicial Colouring]
A simplicial colouring is a map $c$ that maps a simplicial complex $\gK$  and one of its simplices $\sigma$ to a colour from a fixed colour palette. We denote this colour by $c^\gK_\sigma$. 
\end{definition}

To unload the notation, we will often drop $\gK$ from the superscript when the underlying $\gK$ is arbitrary. 

\begin{definition}
Let $\gK_1, \gK_2$ be two simplicial complexes and $c$ a simplicial colouring. We say that $\gK_1, \gK_2$ are $c$-similar, denoted by $c^{\gK_1} = c^{\gK_2}$, if the number of simplices of dimension $n$ in $\gK_1$ coloured with a given colour equals the number of simplices of dimension $n$ in $\gK_2$ with the same colour, for all $n$. Otherwise, we have $c^{\gK_1} \neq c^{\gK_2}$.  
\end{definition}

The reason we treat simplices of different dimensions separately is only to simplify the proofs. This is not strictly required because it can be shown SWL can identify the dimension of each simplex after a sufficient number of steps. Therefore, with some careful modifications to the proofs, the definition can be relaxed. 

Additionally, although not explicitly stated in the definition of a simplicial colouring, we are only interested in colourings $c$ for which all isomorphic simplicial complex pairs are $c$-similar or, more formally, if $\gK_1$ is isomorphic to  $\gK_2$, then $c^{\gK_1} = c^{\gK_2}$. 

\begin{definition}
A simplicial colouring $c$ refines a simplicial colouring $d$, denoted by $c \sqsubseteq d$, if for all simplicial complexes $\gK_1$ and $\gK_2$
and all $\sigma \in \gK_1$ and $\tau \in \gK_2$ with $\mathrm{dim}(\sigma) = \mathrm{dim}(\tau)$, $c^{\gK_1}_\sigma = c^{\gK_2}_\tau$ implies $d^{\gK_1}_\sigma = d^{\gK_2}_\tau$. Additionally, if $d \sqsubseteq c$, we say the two colourings are equivalent and we represent it by $c \equiv d$. 
\end{definition}

We now prove a lemma that will be used repeatedly in our proofs in this section. 

\begin{lemma}
\label{lemma:refine_multiset}
Let $\gK_1, \gK_2$ be any simplicial complexes with $A \subseteq \gK_1$ and $B \subseteq \gK_2$, two subsets containing simplices of the same dimension. Consider two simplicial colourings $c, d$ such that $c \sqsubseteq d$. If $\ldblbrace d_\sigma^{\gK_1} \mid \sigma \in A \rdblbrace \neq \ldblbrace d_\tau^{\gK_2} \mid \tau \in B \rdblbrace$, then $\ldblbrace c_\sigma^{\gK_1} \mid \sigma \in A \rdblbrace \neq \ldblbrace c_\tau^{\gK_2} \mid \tau \in B \rdblbrace$.
\end{lemma}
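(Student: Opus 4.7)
The plan is to prove the contrapositive: assume the two multisets of $c$-colours over $A$ and $B$ are equal, and deduce that the two multisets of $d$-colours are also equal. This will immediately give the stated implication.

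First I would exploit the standard fact that an equality of multisets $\ldblbrace c_\sigma^{\gK_1} \mid \sigma \in A \rdblbrace = \ldblbrace c_\tau^{\gK_2} \mid \tau \in B \rdblbrace$ is equivalent to the existence of a colour-preserving bijection $\phi \colon A \to B$, i.e.\ one such that $c_\sigma^{\gK_1} = c_{\phi(\sigma)}^{\gK_2}$ for every $\sigma \in A$. In particular, $|A| = |B|$. This is purely set-theoretic and does not rely on any simplicial structure.

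Next I would invoke the refinement hypothesis $c \sqsubseteq d$. By the statement of the lemma, every element of $A$ and every element of $B$ has the same fixed dimension $k$, so in particular $\dim(\sigma) = \dim(\phi(\sigma)) = k$ for all $\sigma \in A$. Thus the defining condition of refinement applies to each pair $(\sigma, \phi(\sigma))$: from $c_\sigma^{\gK_1} = c_{\phi(\sigma)}^{\gK_2}$ we conclude $d_\sigma^{\gK_1} = d_{\phi(\sigma)}^{\gK_2}$. Consequently $\phi$ is also a colour-preserving bijection for the $d$-colouring, and therefore $\ldblbrace d_\sigma^{\gK_1} \mid \sigma \in A \rdblbrace = \ldblbrace d_\tau^{\gK_2} \mid \tau \in B \rdblbrace$, contradicting the hypothesis of the lemma.

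There is no real obstacle here: the argument is essentially a direct unfolding of the definitions of ``refines'' and ``equality of multisets''. The only subtle point to check is that the same-dimension assumption on $A \cup B$ is needed to apply refinement element-wise via $\phi$; without it, the definition of $c \sqsubseteq d$ would not be guaranteed to transfer colour equalities from $c$ to $d$. Since this assumption is given, the proof amounts to a two-line bijective argument followed by one contrapositive rephrasing.
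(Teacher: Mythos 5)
Your argument is correct, but it takes a genuinely different (and slightly slicker) route than the paper. The paper argues by contradiction and by counting within a single colour class: it picks a colour $\sC$ that is over-represented under $d$ in one of the two multisets, forms the subsets $A^*\subseteq A$ and $B^*\subseteq B$ of simplices receiving $\sC$, uses the refinement hypothesis in its contrapositive form ($d_\sigma\neq d_\tau$ implies $c_\sigma\neq c_\tau$) to conclude that the $c$-colours appearing on $A^*\cup B^*$ are disjoint from those appearing on the complements, and then deduces from the assumed equality of the $c$-multisets that $\vert A^*\vert=\vert B^*\vert$, contradicting $\vert A^*\vert>\vert B^*\vert$. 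You instead prove the contrapositive globally: equality of the $c$-multisets yields a $c$-colour-preserving bijection $\phi\colon A\to B$, and since every pair $(\sigma,\phi(\sigma))$ consists of simplices of the same dimension, the refinement hypothesis applies pairwise and makes $\phi$ a $d$-colour-preserving bijection, forcing equality of the $d$-multisets. Both proofs hinge on the same two facts (the same-dimension assumption and the definition of $\sqsubseteq$); yours avoids the choice of a distinguished colour and the disjointness step, at the cost of invoking the multiset-equality-iff-bijection equivalence, which you correctly note is purely set-theoretic. The paper's localized counting style is the one it reuses in later, harder arguments (e.g.\ comparing the multisets $C_\sigma$ and $C_\tau$ in the proof of Theorem~\ref{thm:sparse swl}), but for this lemma your two-line bijective argument is complete and correct.
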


\begin{proof}
If $\ldblbrace d_\sigma^{\gK_1} \mid \sigma \in A \rdblbrace \neq \ldblbrace d_\tau^{\gK_2} \mid \tau \in B \rdblbrace$, there exists a colour $\sC$ that shows up (without loss of generality) more times in the first multi-set than in the second multi-set. Define the sets 
$$A^* = \{ \sigma \in A \mid d_\sigma^{\gK_1} = \sC \}, \quad B^* = \{ \tau \in B \mid d_\tau^{\gK_2} = \sC \},$$
of simplices from $A$ and $B$, respectively, that have been assigned colour $\sC$ by $d$. Note that because $\sC$ shows up more times in the first multi-set, $\vert A^* \vert > \vert B^* \vert$. 

Since $c \sqsubseteq d$, if $d_\sigma \neq d_\tau$, then $c_\sigma \neq c_\tau$, for any $\sigma$ and $\tau$. Therefore, the $c$-colouring of all the simplices in $A^*$ and $B^*$ is different from the colouring of the simplices outside $A^*$ and $B^*$. More formally, 
\begin{align}\label{eq:empty_intersect}
    \ldblbrace &c_\sigma \mid \sigma \in A^* \cup B^* \rdblbrace \nonumber \\
    \qquad &\cap \ldblbrace c_\tau \mid \tau \in (A \cup B) \setminus (A^* \cup B^*) \rdblbrace = \emptyset.
\end{align}
Suppose for the sake of contradiction that $$\ldblbrace c_\sigma^{\gK_1} \mid \sigma \in A \rdblbrace = \ldblbrace c_\tau^{\gK_2} \mid \tau \in B \rdblbrace.$$ 
Together with Equation \ref{eq:empty_intersect}, this implies
$$\ldblbrace c_\sigma^{\gK_1} \mid \sigma \in A^* \rdblbrace = \ldblbrace c_\tau^{\gK_2} \mid \tau \in B^* \rdblbrace.$$
Then, $\vert A^* \vert = \vert B^* \vert$. However, $\vert A^* \vert > \vert B^* \vert$.
\end{proof}

This result leads to an important corollary. 

\begin{corollary}
Consider two simplicial colourings $c, d$ such that $c \sqsubseteq d$. If $d^{\gK_1} \neq d^{\gK_2}$, then $c^{\gK_1} \neq c^{\gK_2}$
\end{corollary}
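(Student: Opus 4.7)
The plan is to deduce this corollary directly from Lemma~\ref{lemma:refine_multiset} by a suitable choice of the sets $A$ and $B$. The hypothesis $d^{\gK_1} \neq d^{\gK_2}$ unpacks, by definition of $c$-similarity applied to $d$, as the existence of some dimension $n$ such that the number of $n$-simplices in $\gK_1$ coloured with a particular colour under $d$ differs from the corresponding count in $\gK_2$. Equivalently, denoting by $A_n = \{ \sigma \in \gK_1 \mid \mathrm{dim}(\sigma) = n\}$ and $B_n = \{ \tau \in \gK_2 \mid \mathrm{dim}(\tau) = n\}$, we have that the multi-sets $\ldblbrace d_\sigma^{\gK_1} \mid \sigma \in A_n \rdblbrace$ and $\ldblbrace d_\tau^{\gK_2} \mid \tau \in B_n \rdblbrace$ are not equal.

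The key step is then to instantiate Lemma~\ref{lemma:refine_multiset} with the pair $(A_n, B_n)$; both consist entirely of simplices of the same dimension $n$, so the lemma's hypotheses are satisfied. Since $c \sqsubseteq d$ by assumption, the lemma yields that the multi-sets of $c$-colours on $A_n$ and $B_n$ also differ, that is, $\ldblbrace c_\sigma^{\gK_1} \mid \sigma \in A_n \rdblbrace \neq \ldblbrace c_\tau^{\gK_2} \mid \tau \in B_n \rdblbrace$. By definition of $c$-similarity, this gives precisely $c^{\gK_1} \neq c^{\gK_2}$, which is the desired conclusion.

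There is no genuine obstacle here: the corollary is essentially a bookkeeping consequence of the lemma, and the only content is translating the aggregate statement ``$d^{\gK_1} \neq d^{\gK_2}$'' into the per-dimension multi-set inequality that feeds into Lemma~\ref{lemma:refine_multiset}. The proof can therefore be written in just a few lines.
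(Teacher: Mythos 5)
Your proof is correct and matches the paper's argument exactly: the paper likewise instantiates Lemma~\ref{lemma:refine_multiset} with $A$ and $B$ taken to be all simplices of a given dimension in $\gK_1$ and $\gK_2$, respectively. Your write-up just spells out the bookkeeping (unpacking $c$-similarity into a per-dimension multi-set inequality) a bit more explicitly than the paper does.
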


\begin{proof}
This follows immediately by substituting the subsets $A, B$ from the proof above with all the simplices of a given dimension in $\gK_1$ and $\gK_2$, respectively. 
\end{proof}

An equivalent way to think about this corollary is that if $c$ refines $d$, then it is able to distinguish all the non-isomorphic simplicial complex pairs that $d$ can distinguish (and potentially others). In that sense, we say $c$ is \emph{at least as powerful as} $d$. This will be our main vehicle to prove the results. 

Equipped with this notation and preliminary results, we proceed to prove the results from the main text. 

\begin{lemma}
\label{lemma:drop_cofaces}
    SWL with $\mathrm{HASH}\bigl(c_\sigma^t, c_{\gB}^t(\sigma), c_{\da}^t(\sigma), c_{\ua}^t(\sigma)\bigr)$ is as powerful as SWL with the generalised update rule $\mathrm{HASH}\bigl(c_\sigma^t, c_{\gB}^t(\sigma), c_{\gC}^t(\sigma), c_{\da}^t(\sigma), c_{\ua}^t(\sigma)\bigr)$.
\end{lemma}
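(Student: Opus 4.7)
The plan is to prove the equivalence of the two colourings by showing two-way refinement. Write $c$ for the colouring produced by the full HASH rule (with all five inputs) and $d$ for the colouring produced by the reduced rule (with $c_\gC$ dropped). I will prove by simultaneous induction on the iteration index $t$ that $c^t \sqsubseteq d^t$ and $d^t \sqsubseteq c^t$, so that both colourings induce the same partition at every step, hence also at stabilisation. The base case is trivial because both colourings initialise all simplices to the same colour. For the induction step, I will rely throughout on Lemma \ref{lemma:refine_multiset} (and its contrapositive) to lift equality of a neighbourhood multi-set from one colouring to the other via the inductive hypothesis; for the tuple-valued multi-sets $c_\da^t$ and $c_\ua^t$, I will use the straightforward coordinate-wise version of the same lemma, which follows by applying the original statement to each coordinate separately.

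The direction $c \sqsubseteq d$ is easier: given $c_\sigma^{t+1} = c_\tau^{t+1}$, injectivity of the hash forces equality of each of the five $c$-components; the contrapositive of Lemma \ref{lemma:refine_multiset} applied to each neighbourhood multi-set (using $c^t \sqsubseteq d^t$) then yields equality of the four corresponding $d$-components, so $d_\sigma^{t+1} = d_\tau^{t+1}$ by injectivity of the reduced hash. For the reverse direction $d \sqsubseteq c$, an analogous argument (with the roles of $c$ and $d$ swapped in the lemma) gives equality of $c_\sigma^t$, $c_\gB^t(\sigma)$, $c_\da^t(\sigma)$, and $c_\ua^t(\sigma)$ with the corresponding quantities at $\tau$. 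The genuinely non-trivial step is to recover the extra co-face component and show $c_\gC^t(\sigma) = c_\gC^t(\tau)$.

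The key observation, and the crux of the proof, is that $c_\gC^t(\sigma)$ is redundantly encoded inside $c_\ua^t(\sigma)$. Let $k = \dim(\sigma)$. For every co-face $\delta \in \gC(\sigma)$, the simplex $\delta$ is a $(k+1)$-simplex with exactly $k+2$ boundary $k$-simplices; one of these is $\sigma$ itself, and the remaining $k+1$ are distinct $k$-simplices, each upper-adjacent to $\sigma$ through $\delta$. Each such upper neighbour $\tau'$ contributes a tuple $(c_{\tau'}^t, c_\delta^t)$ to $c_\ua^t(\sigma)$, whose second coordinate is $c_\delta^t$. Consequently, projecting $c_\ua^t(\sigma)$ onto its second coordinate produces the multi-set $c_\gC^t(\sigma)$ with every element repeated exactly $k+1$ times. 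Since $\sigma$ and $\tau$ share dimension $k$, the same scaling factor applies to both, so the equality $c_\ua^t(\sigma) = c_\ua^t(\tau)$ forces $c_\gC^t(\sigma) = c_\gC^t(\tau)$. The degenerate case $\gC(\sigma) = \emptyset$ is handled automatically, since it forces $\nup(\sigma) = \emptyset$ and both multi-sets are then trivially empty. Having matched all five $c$-components, injectivity of the full hash yields $c_\sigma^{t+1} = c_\tau^{t+1}$, closing the induction.

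The main obstacle is exactly this co-face recovery argument; it depends on the combinatorial identity that every $(k+1)$-simplex has exactly $k+2$ boundary $k$-simplices, and on the fact that $\sigma$ and $\tau$ share a dimension so that the multiplicity scaling factor matches. The remaining content is a bookkeeping induction in the style of Lemma \ref{lemma:refine_multiset}, together with the routine coordinate-wise extension of that lemma to the tuple-valued multi-sets $c_\da$ and $c_\ua$.
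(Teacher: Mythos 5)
Your proposal is correct and follows essentially the same route as the paper: the crux in both is that $c_{\gC}^t(\sigma)$ can be recovered from $c_{\ua}^t(\sigma)$ by projecting onto the second coordinate, since each co-face of an $n$-simplex $\sigma$ contributes its colour to exactly $n+1$ of the upper-adjacency tuples, and the multiplicity factor $n+1$ is the same for any two simplices of equal dimension. The only (cosmetic) difference is that you run a simultaneous two-way induction, whereas the paper treats the direction "full rule refines reduced rule" as immediate and inducts only on the converse.
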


\begin{proof}
Let $a^t$ denote the colouring of the general update rule at iteration $t$ and $b^t$ the colouring of the restricted update rule at the same iteration. Then, $a^t \sqsubseteq b^t$ because it considers the additional colours of the co-boundaries $c_{\gC}^t(\sigma)$ in the colour updating rule. We will now prove by induction $b^t \sqsubseteq a^t$, which implies $a^t \equiv b^t$.

The base case trivially holds since all simplices have the same colour at initialisation. Let $\sigma \in \gK_1$ and $\tau \in \gK_2$ be two simplices of the same dimension from two arbitrary complexes. Suppose $b_\sigma^{t+1} = b_\tau^{t+1}$. Then, we have that the arguments of the hash function are equal. Thus, $b_\sigma^t = b_\tau^t, b_\da^t(\sigma) = b_\da^t(\tau), b_\ua^t(\sigma) = b_\ua^t(\tau)$, and $b_\gB^t(\sigma) = b_\gB^t(\tau)$. The aim is to show that these also imply that $b_\gC^t(\sigma) = b_\gC^t(\tau)$. 

Because $b_\ua^t(\sigma) = b_\ua^t(\tau)$, by substituting their definition we have the following equality of multi-sets. 
$$\ldblbrace b_{\delta_\sigma}^{t} \mid (\cdot, b_{\delta_\sigma}^{t}) \in b_\ua^{t}(\sigma) \rdblbrace
= \ldblbrace b_{\delta_\tau}^{t} \mid (\cdot, b_{\delta_\tau}^{t}) \in b_\ua^{t}(\tau) \rdblbrace.$$ 
Because $\sigma$ and $\tau$ have the same dimension $n$, the colour of each $\delta_\sigma \in \gC(\sigma)$ and each $\delta_\tau \in \gC(\tau)$ shows up in exactly $n + 1$ tuples in $b_\ua^{t}(\sigma)$ and $b_\ua^{t}(\tau)$, respectively. By removing the duplicate colours for each such $\delta_\sigma$ and $\delta_\tau$ we obtain the desired equality: 
$$\ldblbrace b_{\delta_\sigma}^{t} \mid \delta_\sigma \in \gC(\sigma) \rdblbrace
= \ldblbrace b_{\delta_\tau}^{t} \mid \delta_\tau \in \gC(\tau) \rdblbrace.$$ 
By the induction hypothesis, we also have $a_\sigma^t = a_\tau^t, a_\da^t(\sigma) = a_\da^t(\tau), a_\ua^t(\sigma) = a_\ua^t(\tau)$ , $a_\gB^t(\sigma) = a_\gB^t(\tau)$, and $a_\gC^t(\sigma) = a_\gC^t(\tau)$. Thus, $a_v^{t+1} = a_w^{t+1}$. 
\end{proof}

\begin{proof}[Proof of Theorem~\ref{thm:sparse swl}]
Let $b^t$ denote the colouring of CWL using $\mathrm{HASH}\bigl(b_\sigma^t, b_{\gB}^t(\sigma), b_{\ua}^t(\sigma)\bigr)$ and $a^t$ the colouring of CWL using the rule $\mathrm{HASH}\bigl(a_\sigma^t, a_{\gB}^t(\sigma), a_{\da}^t(\sigma), a_{\ua}^t(\sigma)\bigr)$ from  Lemma~\ref{lemma:drop_cofaces}. As before, it is trivial to show $a^t \sqsubseteq b^t$ because of the additional argument $a_{\da}^t(\sigma)$ used in the update rule. We now prove that $b^{2t} \sqsubseteq a^t$ by induction. The reason we consider $2t$ is because the information from the lower adjacencies propagates two times slower through the boundary adjacencies. 

As before, the base case holds since all the colours are equal at initialisation. Again, consider $\sigma \in \gK_1$ and $\tau \in \gK_2$ , two simplices of the same dimension from two arbitrary complexes. Suppose $b_\sigma^{2t+2} = b_\tau^{2t+2}$. By unwrapping the hash function two steps back in time, we obtain $b_\sigma^{2t} = b_\tau^{2t}, b_{\gB}^{2t}(\sigma) = b_{\gB}^{2t}(\tau), b_{\ua}^{2t}(\sigma) = b_{\ua}^{2t}(\tau)$. The goal is to show that $b_\da^{2t}(\sigma) = b_\da^{2t}(\tau)$ also holds. 

Suppose for the sake of contradiction that $b_\da^{2t}(\sigma) \neq b_\da^{2t}(\tau)$. This means that there exists a pair of colours $(\sC_0, \sC_1)$ that shows up (without loss of generality) more times in $b_\da^{2t}(\sigma)$ than in $b_\da^{2t}(\tau)$. For simplicity, we assume that $b_\sigma^{2t} \neq \sC_0 \neq b_\tau^{2t}$, since this edge case can be trivially treated separately. 

First, we split the apparitions of $(\sC_0, \sC_1)$ by the boundary simplices of $\sigma$ and $\tau$ where they appear. Consider the collection of multi-sets $A$ indexed by simplices $\delta$ of a fixed dimension: 
$$A(\delta) = \ldblbrace (b_\psi^{2t} = \sC_0, b_\delta^{2t} = \sC_1) \mid \psi \in \gC(\delta) \rdblbrace.$$
We are interested in counting the size of these multi-sets. For this purpose, for each simplex $\gamma$, we define a multi-set $C_\gamma$:
$$C_\gamma = \ldblbrace \vert A(\delta) \vert \mid \delta \in \gB(\gamma) \rdblbrace.$$
Clearly, $C_\sigma \neq C_\tau$ because the sum of the elements in $C_\sigma$ (the number of tuples $(\sC_0, \sC_1)$ in $b_\da^{2t}(\sigma)$) is greater than the sum of the elements of $C_\tau$ (the number of tuples $(\sC_0, \sC_1)$ in $b_\da^{2t}(\tau)$). The next proposition, shows this leads to a contradiction with our original assumption that $b_\sigma^{2t+2} = b_\tau^{2t+2}$

\begin{proposition}
If $C_\sigma \neq C_\tau$, then $b_\sigma^{2t+2} \neq b_\tau^{2t+2}$.
\end{proposition}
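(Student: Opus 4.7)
The plan is to use the two SWL iterations between time $2t$ and $2t+2$ to transport the multiset $C_\sigma$ into the color $b_\sigma^{2t+2}$. At the first step ($2t \to 2t+1$), the upper-adjacency term pushes information about the cofaces of each boundary simplex $\delta \in \gB(\sigma)$ into its refined color $b_\delta^{2t+1}$. At the second step ($2t+1 \to 2t+2$), the boundary term in the update rule pulls the aggregated boundary colors into $b_\sigma^{2t+2}$. If this transport is set up correctly, $b_\sigma^{2t+2}$ will determine $C_\sigma$, so a mismatch $C_\sigma \neq C_\tau$ forces $b_\sigma^{2t+2} \neq b_\tau^{2t+2}$.

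For the first step, I would invoke the injectivity of HASH in $b_\delta^{2t+1} = \mathrm{HASH}\bigl(b_\delta^{2t}, b_\gB^{2t}(\delta), b_\ua^{2t}(\delta)\bigr)$, which ensures that $b_\delta^{2t+1}$ uniquely determines that triple (both within and across the two complexes). The crucial combinatorial observation is that every coface $\psi \in \gC(\delta)$ is a $(k+1)$-simplex, where $k = \dim(\delta) = \dim(\sigma) - 1$, and therefore $\psi$ induces exactly $k+1$ tuples of the form $(\cdot,\, b_\psi^{2t})$ in the multiset $b_\ua^{2t}(\delta)$ --- one for each of the other boundary simplices of $\psi$ that are upper-adjacent to $\delta$ through $\psi$. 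Consequently, by counting second-coordinate occurrences of each color in $b_\ua^{2t}(\delta)$ and dividing by $k+1$, we recover the multiset $\ldblbrace b_\psi^{2t} \mid \psi \in \gC(\delta) \rdblbrace$, and in particular the cardinality $\vert \{\psi \in \gC(\delta) : b_\psi^{2t} = \sC_0\} \vert$. Combined with the value of $b_\delta^{2t}$ (which tells us whether $\delta$ contributes to $A(\delta)$ at all), this yields $\vert A(\delta) \vert$.

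For the second step, injectivity of HASH in $b_\sigma^{2t+2} = \mathrm{HASH}\bigl(b_\sigma^{2t+1}, b_\gB^{2t+1}(\sigma), b_\ua^{2t+1}(\sigma)\bigr)$ gives that $b_\sigma^{2t+2}$ determines $b_\gB^{2t+1}(\sigma) = \ldblbrace b_\delta^{2t+1} \mid \delta \in \gB(\sigma) \rdblbrace$. By the previous step, this multiset in turn determines $\ldblbrace (b_\delta^{2t},\, \vert A(\delta) \vert) \mid \delta \in \gB(\sigma) \rdblbrace$, hence its projection $C_\sigma$. Therefore $C_\sigma \neq C_\tau$ implies $b_\gB^{2t+1}(\sigma) \neq b_\gB^{2t+1}(\tau)$, and injectivity of HASH at the second step yields $b_\sigma^{2t+2} \neq b_\tau^{2t+2}$. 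The main obstacle --- and the reason the indexing uses $2t+2$ rather than $t+1$ --- is that the pruned update rule has no explicit coface term, so coface information must reach $\sigma$ indirectly: first by being encoded into the boundary simplices via their upper adjacencies, then by being pulled from them via $\sigma$'s boundary adjacencies. The enabling quantitative fact is that each coface contributes a fixed, dimension-determined number $k+1$ of tuples to the upper-adjacency multiset, which is precisely what makes coface color counts recoverable from the pruned neighborhood information.
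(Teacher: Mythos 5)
Your proof is correct and follows essentially the same route as the paper's: both arguments transport the coface-colour counts $\vert A(\delta)\vert$ into $b_\delta^{2t+1}$ via the key counting fact that each coface of a $k$-dimensional boundary simplex contributes exactly $k+1$ tuples to its pruned upper-adjacency multiset, and then pull this information into $b_\sigma^{2t+2}$ through the boundary term. The only difference is presentational — you phrase the steps as "the colour determines/recovers this data" (the contrapositive), whereas the paper phrases them as refinements $b^{2t+1}\sqsubseteq c$ together with its Lemma~\ref{lemma:refine_multiset}; your explicit tracking of $b_\delta^{2t}$ alongside $\vert A(\delta)\vert$ even handles the $b_\delta^{2t}\neq \sC_1$ edge case a bit more cleanly.
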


\begin{proof}
Consider the simplicial colouring $c(\delta) = \vert A(\delta) \vert$. We will show that $b^{2t+1} \sqsubseteq c$. Let $\delta_1, \delta_2$ be two simplices of equal dimension with $c(\delta_1) \neq c(\delta_2)$. We assume without loss of generality $\vert A(\delta_1) \vert > \vert A(\delta_2) \vert$. Then $\sC_0$ shows up more times in $b_\ua^{2t}(\delta_1)$ than in $b_\ua^{2t}(\delta_2)$, which implies $b_\ua^{2t}(\delta_1) \neq b_\ua^{2t}(\delta_2)$. Therefore, $b^{2t+1}_{\delta_1} \neq b^{2t+1}_{\delta_2}$, which proves $b^{2t+1} \sqsubseteq c$.

Applying Lemma \ref{lemma:refine_multiset} for the multi-sets $C_\sigma$ and $C_\tau$, we obtain two non-equal multi-sets:
$$\ldblbrace b_{\delta_1}^{2t+1} \mid \delta_1 \in \gB(\sigma) \rdblbrace \neq \ldblbrace b_{\delta_2}^{2t+1} \mid \delta_2 \in \gB(\tau) \rdblbrace$$
Which are exactly the multi-sets of colours corresponding to the boundary simplices of $\sigma$ and $\tau$. So the relation above is equivalent to $b_\gB^{2t+1}(\sigma) \neq b_\gB^{2t+1}(\tau)$. Finally, this implies that $b_\sigma^{2t+2} \neq b_\tau^{2t+2}$. 
\end{proof}

This contradiction proves $b_\da^{2t}(\sigma) = b_\da^{2t}(\tau)$. Finally, applying the induction hypothesis, we have that $a_v^{t} = a_w^{t}, a_{\gB}^{t}(v) = a_{\gB}^{t}(w), a_{\ua}^{t}(v) = a_{\ua}^{t}(w)$ and  $a_\da^t(v) = a_\da^t(w)$. Then, $b^{2t} \sqsubseteq a^t$. 
\end{proof}

Next, we show a slightly weaker version of Theorem~\ref{theo:swl_more_powerful_than_wl}. 

\begin{lemma}\label{lemma:wl_is_at_most_swl}
SWL is at least as powerful as WL in distinguishing non-isomorphic simplicial complexes. 
\end{lemma}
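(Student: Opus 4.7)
The plan is to view any graph $G$ as a $1$-dimensional simplicial complex $\gK$ whose $0$-simplices are the vertices of $G$ and whose $1$-simplices are the edges of $G$, and then show that the SWL colouring restricted to $0$-simplices refines the WL colouring. Together with the corollary to Lemma~\ref{lemma:refine_multiset} (if $c \sqsubseteq d$ and $d$ distinguishes two complexes, then so does $c$), this immediately yields the claim: whenever WL produces differing colour histograms on two graphs, so does SWL on the corresponding $1$-complexes when looking at dimension $0$.

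By Theorem~\ref{thm:sparse swl}, it is enough to work with the sparse SWL update rule $\mathrm{HASH}(c_\sigma^t, c_\gB^t(\sigma), c_\ua^t(\sigma))$. For a vertex $v$ of a graph, viewed as a $0$-simplex, the boundary set $\gB(v)$ is empty, and the upper neighbours $\nup(v)$ are precisely the usual graph neighbours of $v$; moreover, for each neighbour $w$, the shared coface $v \cup w$ is exactly the edge $\{v,w\}$. Hence the SWL update at a vertex reduces to
\[
c_v^{t+1} = \mathrm{HASH}\!\left(c_v^t,\ \ldblbrace (c_w^t,\, c_{\{v,w\}}^t) \mid w \in \nup(v) \rdblbrace\right),
\]
while the standard WL update at $v$ is $w_v^{t+1} = \mathrm{HASH}(w_v^t, \ldblbrace w_u^t \mid u \sim v\rdblbrace)$. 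So SWL has strictly more information available at each step (the edge colours), which intuitively makes the vertex colouring at least as discriminative as that of WL.

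The key technical step is an induction on $t$ showing $c^t \sqsubseteq w^t$ on $0$-simplices, where $c^t$ denotes the SWL colouring and $w^t$ the WL colouring. The base case is trivial since all colours are initialised equal. For the inductive step, assume $c^t \sqsubseteq w^t$ on $0$-simplices and suppose $c^{t+1}_v = c^{t+1}_{v'}$ for two vertices $v \in \gK_1$, $v' \in \gK_2$. Unrolling the hash function gives $c_v^t = c_{v'}^t$ and
\[
\ldblbrace (c_w^t, c_{\{v,w\}}^t) \mid w \in \nup(v) \rdblbrace = \ldblbrace (c_{w'}^t, c_{\{v',w'\}}^t) \mid w' \in \nup(v') \rdblbrace.
\]
Projecting these multi-sets of pairs onto their first coordinate yields $\ldblbrace c_w^t \mid w \in \nup(v) \rdblbrace = \ldblbrace c_{w'}^t \mid w' \in \nup(v')\rdblbrace$, and then invoking Lemma~\ref{lemma:refine_multiset} with the refinement $c^t \sqsubseteq w^t$ on $0$-simplices converts this equality into $\ldblbrace w_w^t \mid w \in \nup(v)\rdblbrace = \ldblbrace w_{w'}^t \mid w' \in \nup(v') \rdblbrace$. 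Combined with $w_v^t = w_{v'}^t$ (from the inductive hypothesis applied to $c_v^t = c_{v'}^t$), feeding these into the WL hash gives $w_v^{t+1} = w_{v'}^{t+1}$, completing the induction.

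Once $c^t \sqsubseteq w^t$ on $0$-simplices is established, the corollary to Lemma~\ref{lemma:refine_multiset} shows that any pair of graphs distinguished by WL (i.e.\ for which the WL vertex colour histograms differ) is also distinguished by SWL (since its $0$-simplex histogram must then differ as well). The one subtle point worth flagging is that the projection step only discards the edge colours; it does not use any matching between edges across the two complexes. This is precisely what makes Lemma~\ref{lemma:refine_multiset} applicable and is where the argument would break if we tried, for instance, to reason about edge colourings at the same time as vertex colourings. No further complications arise, and the rest of the argument is routine bookkeeping.
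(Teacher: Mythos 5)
Your proof is correct and follows essentially the same route as the paper's: an induction showing that the SWL vertex colouring refines the WL colouring, with the inductive step obtained by unrolling the hash and projecting the multi-set of (neighbour, edge)-colour pairs onto its first coordinate, exactly as in the paper (you are merely more explicit in citing Theorem~\ref{thm:sparse swl} and Lemma~\ref{lemma:refine_multiset}). The only cosmetic difference is that you frame the argument for $1$-dimensional complexes, whereas the paper states it for arbitrary simplicial complexes; since your inductive step never uses the dimension (vertices have empty boundary and their upper neighbours are the graph neighbours in any complex), the argument extends verbatim.
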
 
\begin{proof}
Let $\gK$ be a simplicial complex. Let $a^t$ be the colouring of the vertices of $\gK$ at iteration $t$ of WL and $b^t$ the colouring of the same vertices in $\gK$ at iteration $t$ of SWL. To prove the lemma, we will show by induction that $b^t \sqsubseteq a^t$. 

For the base case, the implication holds at initialisation since all nodes are assigned the same colour. For the induction step, suppose $b^{t+1}_v = b^{t+1}_w$, for two $0$-simplices $v$ and $w$ in two arbitrary complexes $\gK_1, \gK_2$. As vertices are only upper adjacent and have no boundary simplices, $b^t_v = b^t_w$ and $b_\ua^t(v) = b_\ua^t(w)$. Using the definition of the latter multi-set equality:
$$\ldblbrace b_{z}^{t} \mid (b_{z}^{t}, \cdot) \in b_\ua^{t}(v) \rdblbrace
= \ldblbrace b_{u}^{t} \mid (b_{u}^{t}, \cdot) \in b_\ua^{t}(w) \rdblbrace.$$ 
Equivalently, this can be rewritten in terms of the upper-neighbours of the vertices as
$$\ldblbrace b_{z}^{t} \mid z \in \nup(v) \rdblbrace
= \ldblbrace b_{u}^{t} \mid u \in \nup(w) \rdblbrace.$$ 
By the induction hypothesis, $a^t_v = a^t_w$ and $a_\ua^t(v) = a_\ua^t(w)$. Since these are the arguments the WL hash function uses to compute the colours of $v$ and $w$ at the next step, we obtain $a^{t+1}(v) = a^{t+1}(w)$. 
\end{proof}

Informally, this proof shows that the information coming from the higher-dimensional simplices of the complex will refine the colouring of the vertices. This means that SWL will be able to distinguish just through its vertex-level colour histogram at least the same set of simplicial complexes (and graphs) that WL can distinguish. However, this proof ignores the histograms of the higher-levels and these can indeed be used to show that SWL is strictly more powerful than WL when using a clique complex lifting. This is done in Theorem~\ref{theo:swl_more_powerful_than_wl}. 

\begin{proof}[Proof of Theorem \ref{theo:swl_more_powerful_than_wl}]
Based on Lemma \ref{lemma:wl_is_at_most_swl}, it is sufficient to present a pair of graphs that cannot be distinguished by WL, but whose clique complexes can be distinguished by SWL. Such a pair is included in Figure \ref{fig:cc_expresiveness}. While WL produces the same colouring for both graphs, one clique complex contains two triangles, while the other has no triangles.  
\end{proof}


\begin{proof}[Proof of Lemma~\ref{lemma:mpsns_at_most_as_powerful_as_swl}]
Let $c^t$ and $h^t$ be the colouring at iteration $t$ of SWL and the $t$-th layer of an MPSN, respectively. We consider an MPSN model with $L$ layers. For $t > L$, we assume $h^t = h^L$. We will show by induction that $c^t$ refines the colouring of $h^t$. For this proof, it is convenient to use the most general version of SWL, containing the complete set of adjacencies. 

The base case trivially holds. For the induction step, suppose we have two simplicies $\sigma$ and $\tau$ such that $c^{t+1}(\sigma) = c^{t+1}(\tau)$. Because the SWL colouring is an injective mapping, the arguments to the HASH must also be equal. This means that $c^t_\sigma = c^t_\tau$ and the multi-sets of colours formed by their neighbours are identical: $c^t_\da(\sigma) = c^t_\da(\tau), c^t_\ua(\sigma) = c^t_\ua(\tau), c_\gB^t(\sigma) = c_\gB^t(\tau), c_\gC^t(\sigma) = c_\gC^t(\tau)$. By the induction hypothesis, these multi-sets will also be equal under the colouring $h^t$. Enumerating all, $h^t(\sigma) = h^t(\tau), h^t_\da(\sigma) = h^t_\da(\tau), h^t_\ua(\sigma) = h^t_\ua(\tau), h_\gB^t(\sigma) = h_\gB^t(\tau), h_\gC^t(\sigma) = h_\gC^t(\tau)$. Because the exact same multi-sets are supplied as input to the message, aggregate and update functions, their output will also be the same for $\sigma$ and $\tau$. Thus, $h^{t+1}_\sigma = h^{t+1}_\tau$. 
\end{proof}

\begin{proof}[Proof of Theorem~\ref{thm:mpsn_as_powerful_as_swl}]
By Lemma \ref{lemma:mpsns_at_most_as_powerful_as_swl}, we only need to show that for an MPSN model satisfying the conditions in the theorem, we have that $h^t \sqsubseteq c^t$.

The base case can be proved by definition. For the step case, given that the update, aggregate and message functions are injective, their composition is also injective. Therefore, for any two simplicies $\sigma, \tau$ with $h^{t+1}_\sigma = h^{t+1}_\tau$, the multi-sets of colours in their neighbourhoods are also the same. As in our previous proofs, by applying the induction hypothesis, the inputs to the SWL HASH function at iteration $t$ for $\sigma$ and $\tau$ are also equal and $c^{t+1}_\sigma = c^{t+1}_\tau$. It follows $h^t \sqsubseteq c^t$, $c^t \sqsubseteq h^t$ (Lemma~\ref{lemma:mpsns_at_most_as_powerful_as_swl}) and, consequently, $c^t \equiv h^t$. 
\end{proof} 

\subsection{Higher-Order WL and Strongly Regular Graphs}\label{app:higher_order_wl_and_srgs}

Higher-order variants of the standards WL procedure operate on node tuples rather than single nodes and iteratively apply color refinement steps thereon.

\paragraph{$k$-WL} The $k$-WL is one such higher-order variants. It specifically operates on node $k$-tuples by refining their colors based on the generalized notion of $j$-neighborhood. The $j$-neighborhood ($j \in \{1, \dots, k \}$) for node $k$-tuple $\mathbf{v} = (v_1, v_2, \mathellipsis, v_k)$ is defined as $\gN_j(\mathbf{v}) = \{ (v_1, \mathellipsis, v_{j-1}, w, v_{j+1}, \mathellipsis, v_k) | w \in \gV_G \} $. The algorithm first initialises node tuples based on their isomorphism type: two $k$-tuples $\mathbf{v}^a=(v_1^a, v_2^a, \dots, v_k^a)$, $\mathbf{v}^b=(v_1^b, v_2^b, \dots, v_k^b)$ have the same isomorphism type (and are thus assigned the same initial colour $c_{\mathbf{v}^a} = c_{\mathbf{v}^b}$) iff (i) $\forall \, i,j \in \{1, \dots, k \}, v_i^a=v_j^a \Leftrightarrow v_i^b=v_j^b$, (ii) $\forall \, i,j \in \{1, \dots, k \}, v_i^a \sim v_j^a \Leftrightarrow v_i^b \sim v_j^b$, where $\sim$ indicates adjacency. Given this initial colouring, the procedure iteratively applies the following color refinement step
\begin{align}
    c_\mathbf{v}^{t+1} &= \mathrm{HASH}\Big( c_\mathbf{v}^t, M^t(\mathbf{v}) \Big),\\ 
    M^{t}(\mathbf{v}) &= \bigl(\ldblbrace c_\mathbf{u}^t | \mathbf{u} \in \gN_{j}(\mathbf{v}) \rdblbrace \big| j = 1, 2, \dots, k \bigr)
\end{align}
\noindent until the colouring does not change further.
The $k$-WL procedure can be employed to \emph{test} the isomorphism between graphs in the same way as the standard WL one is. For any $k\geq 2$, it is known that $(k+1)$-WL test is strictly stronger than $k$-WL one, i.e. there exist exemplary pairs of non-isomorphic graphs that $k$-WL cannot distinguish while ($k+1$)-WL can, but not vice-versa. Local variants of $k$-WL have recently been introduced in \citet{morris2020weisfeiler}.

\paragraph{$k$-FWL} The $k$-Folklore WL procedure ($k$-FWL) is another higher-order variant of the standard WL. Similarly to $k$-WL, it operates by refining the colors of node $k$-tuples, initialised based on their isomorphism type. However, it employs a different notion of neighborhood and refinement step. The Folklore $j$-neighborhood for node $k$-tuple $\mathbf{v}$ is defined as $\gN_j^F(\mathbf{v}) = \big( (j, v_2, \mathellipsis, v_k), (v_1, j, \mathellipsis, v_k), \mathellipsis, (v_1, \mathellipsis, v_{k-1}, j) \big) $, with $j \in \gV_G$. The algorithm iteratively applies the steps
\begin{align}
    c_\mathbf{v}^{t+1} &= \mathrm{HASH}\Big( c_\mathbf{v}^t, M^{F,t}(\mathbf{v}) \Big), \label{eq:fwl}\\
    M^{F,t}(\mathbf{v}) &= \ldblbrace \big( c_\mathbf{u}^t | \mathbf{u} \in \gN_{j}^F(\mathbf{v}) \big) \big| j \in \gV_G \rdblbrace
\end{align}
\noindent until the colouring does not change anymore.
It is known that $k$-FWL is equivalent to ($k+1$)-WL for $k \geq 2$.

\paragraph{Strongly Regular Graphs}
A Strongly Regular graph in the family {\em SR($n$,$d$,$\lambda$,$\mu$)} is a regular graph with $n$ nodes and degree $d$, for which every two adjacent nodes always have $\lambda$ mutual neighbours and every two non-adjacent nodes always have $\mu$ mutual neighbours. This class of graphs is of particular interest due to the following lemma.
\begin{lemma}\label{lemma:sr_2-fwl}
No pair of Strongly Regular graphs in family {\em SR($n$,$d$,$\lambda$,$\mu$)} can be distinguished by the $2$-FWL test.
\end{lemma}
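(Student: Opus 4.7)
The plan is to show by induction on the refinement iteration $t$ that the $2$-FWL colour $c_{(v_1,v_2)}^t$ of any pair of nodes $(v_1,v_2)$ in any graph $G \in \text{SR}(n,d,\lambda,\mu)$ depends only on the \emph{type} of the pair, where ``type'' takes one of three values: \emph{diagonal} ($v_1=v_2$), \emph{edge} ($v_1\sim v_2$), or \emph{non-edge} ($v_1\neq v_2$ and $v_1\not\sim v_2$). Once this is established, the histogram of colours is determined solely by the parameters $(n,d,\lambda,\mu)$, since each type contributes a fixed number of pairs ($n$ diagonal, $nd$ edge, and $n(n-1)-nd$ non-edge). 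Hence any two graphs in the same SR family produce identical colour histograms at every iteration and thus cannot be distinguished by $2$-FWL.

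The base case is immediate: by definition, the initial $2$-FWL colour is the isomorphism type of the pair, which coincides with the three types above. For the inductive step, assuming the claim holds at iteration $t$, I need to show the multiset $M^{F,t}(v_1,v_2)$ appearing in the refinement rule \eqref{eq:fwl} depends only on the type of $(v_1,v_2)$. Since each element of the multiset is the pair of colours $(c_{(j,v_2)}^t, c_{(v_1,j)}^t)$ for $j\in V$, and by the inductive hypothesis each of these colours is a function of the type of $(j,v_2)$ and $(v_1,j)$ respectively, it suffices to count, for each combination of types, the number of $j \in V$ that realises it. I then do this case-by-case on the type of $(v_1,v_2)$:
\begin{itemize}
\item If $v_1=v_2$: the choice $j=v_1$ gives (diagonal, diagonal); the $d$ neighbours of $v_1$ give (edge, edge); the remaining $n-1-d$ vertices give (non-edge, non-edge).
\item If $v_1\sim v_2$: then $j=v_1$ and $j=v_2$ each contribute one mixed diagonal/edge combination; the $\lambda$ common neighbours of $v_1,v_2$ give (edge, edge); the $d-1-\lambda$ neighbours of $v_1$ only and of $v_2$ only give the two (edge, non-edge) mixtures; the remaining $n-2d+\lambda$ vertices give (non-edge, non-edge).
\item If $v_1\not\sim v_2$ and $v_1\neq v_2$: analogously, using $\mu$ instead of $\lambda$ for common neighbours, and $d-\mu$ for the one-sided neighbours, with the leftover $n-2-2(d-\mu)-\mu$ vertices yielding the (non-edge, non-edge) slot.
\end{itemize}
In every case the multiplicities depend only on $n,d,\lambda,\mu$, so the multiset $M^{F,t}(v_1,v_2)$ — and therefore $c_{(v_1,v_2)}^{t+1}$ — depends only on the type of $(v_1,v_2)$. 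This closes the induction.

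The argument requires no further ingredients beyond the defining regularity of SR graphs; however, the main obstacle is ensuring the case analysis in the inductive step is exhaustive and correctly partitions $V$. In particular, one must separate the ``degenerate'' choices $j\in\{v_1,v_2\}$ (which contribute diagonal pairs) from the ``generic'' choices, and verify that the counts of the remaining combinations sum to $n$ in each case, so that no combination is implicitly double-counted or missed. Once this book-keeping is done, the conclusion that $2$-FWL colour histograms on SR graphs depend only on $(n,d,\lambda,\mu)$, and hence coincide across the whole family, is immediate.
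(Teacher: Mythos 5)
Your proposal is correct and follows essentially the same argument as the paper: induction on the refinement step showing the $2$-FWL colouring stays constant on the three isomorphism-type classes (diagonal, edge, non-edge), with the neighbourhood multiplicities determined by $(n,d,\lambda,\mu)$ alone. Your book-keeping in the edge case (counting $d-1-\lambda$ one-sided neighbours after excluding $j\in\{v_1,v_2\}$) is in fact slightly more careful than the paper's, but this makes no difference to the conclusion since all that matters is that the counts are parameter-determined.
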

\begin{proof}
    
    Let us denote by $\gV_G^2$ the set of all node $2$-tuples in graph $G$.
    We note that three isomorphism types are induced by considering node $2$-tuples:
    \begin{enumerate}
        \item[(1)] \emph{node type}: $\mathbf{v}=(v_1, v_1)$
        \item[(2)] \emph{edge type}: $\mathbf{v}=(v_1, v_2)$ with $v_1 \sim v_2$ (the two nodes are adjacent in the original graph)
        \item[(3)] \emph{non-edge type}: $\mathbf{v}=(v_1, v_2)$ with $v_1 \not\sim v_2$ (the two nodes are \emph{not} adjacent in the original graph).
    \end{enumerate}
    These three isomorphism types partition the tuple set $\gV_G^2$ into the three subsets ${\gV_G^2}_{(1)}, {\gV_G^2}_{(2)}, {\gV_G^2}_{(3)}$ such that any tuple $\mathbf{v} \in {\gV_G^2}_{(i)}$ is of isomorphism type $i$. We write $\mathbf{v}^{(i)}$ to indicate $\mathbf{v} \in {\gV_G^2}_{(i)}$ for simplicity.
    
    At initialisation, the $2$-FWL algorithm assigns a colour to each tuple $\mathbf{v} \in \gV_G^2$ based on its isomorphism type, that is $\forall \mathbf{v} \in {\gV_G^2}_{(i)}, c_\mathbf{v} = c_i^0$. The colouring is therefore constant within partitions.
    Then, we notice that the colouring is kept constant within partitions through the application of the refinement steps described by Equation~\ref{eq:fwl}. In other words, the $2$-FWL procedure cannot produce a colour partitioning of the set of node $2$-tuples that is finer than the one at initialisation.
    This is shown by induction on the step $t$ of color refinement.
    
    The base case evidently holds for $t=0$ since all tuples in the same partition are assigned the same colour by the $2$-FWL initialisation procedure.
    
    For the induction step we assume that the colouring is constant within each partition at $t$ and show that it maintains constant at $t+1$, that is, after the application of one colour refinement step. This is proved by showing that all node tuples within the same partition have their colour refined identically. We will show this for each of the three partitions separately, leveraging on the induction hypothesis and the properties of Strongly Regular graphs.
    
    A node tuple $\mathbf{v} = (v_1, v_1) \in {\gV_G^2}_{(1)}$ has $\gN_j^F(\mathbf{v}) = \big((j, v_1), (v_1, j)\big), j \in \gV_G$. Therefore, \emph{any} $\mathbf{v} \in {\gV_G^2}_{(1)}$ has exactly:
    \begin{itemize}
        \item ($j = v_1$) $1$ neighborhood of the form $\big( \mathbf{w}^{(1)}, \mathbf{w}^{(1)} \big)$, associated with color tuple $(c_1^t, c_1^t)$;
        \item ($j \sim v_1$) $d$ neighborhoods of the form $\big( \mathbf{w}^{(2)}, \mathbf{u}^{(2)} \big)$, associated with color tuple $(c_2^t, c_2^t)$;
        \item ($j \not\sim v_1$) $n-d-1$ neighborhoods of the form $\big( \mathbf{w}^{(3)}, \mathbf{u}^{(3)} \big)$, associated with color tuple $(c_3^t, c_3^t)$.
    \end{itemize}
    For \emph{any} $\mathbf{v} \in {\gV_G^2}_{(1)}$ we thus have
    \begin{align*}
    c_\mathbf{v}^{t+1} &= \text{HASH} \Big(c_{1}^{t}, M^{F,t}(\mathbf{v})\Big) \\
    M^{F,t}(\mathbf{v}) &= \ldblbrace 
            \underbrace{(c_1^t, c_1^t)}_{\text{$1$ time}},
            \underbrace{(c_2^t, c_2^t)}_{\text{$d$ times}},
            \underbrace{(c_3^t, c_3^t)}_{\text{$n-d-1$ times}}
    \rdblbrace.
    \end{align*}
    
    A node tuple $\mathbf{v} = (v_1, v_2) \in {\gV_G^2}_{(2)}$ has $\gN_j^F(\mathbf{v}) = \big((j, v_2), (v_1, j)\big), j \in \gV_G$. Therefore, \emph{any} $\mathbf{v} \in {\gV_G^2}_{(2)}$ has exactly:
    \begin{itemize}
        \item ($j = v_2$) $1$ neighborhood of the form $\big( \mathbf{w}^{(1)}, \mathbf{u}^{(2)} \big)$, associated with color tuple $(c_1^t, c_2^t)$;
        \item ($j = v_1$) $1$ neighborhood of the form $\big( \mathbf{w}^{(2)}, \mathbf{u}^{(1)} \big)$, associated with color tuple $(c_2^t, c_1^t)$;
        \item ($j \sim v_2, j \sim v_1$) $\lambda$ neighborhoods of the form $\big( \mathbf{w}^{(2)}, \mathbf{u}^{(2)} \big)$, associated with color tuple $(c_2^t, c_2^t)$;
        \item ($j \sim v_2, j \not\sim v_1$) $d-\lambda$ neighborhoods of the form $\big( \mathbf{w}^{(2)}, \mathbf{u}^{(3)} \big)$, associated with color tuple $(c_2^t, c_3^t)$;
        \item ($j \not\sim v_2, j \sim v_1$) $d-\lambda$ neighborhoods of the form $\big( \mathbf{w}^{(3)}, \mathbf{u}^{(2)} \big)$, associated with color tuple $(c_3^t, c_2^t)$;
        \item ($j \not\sim v_2, j \not\sim v_1$) $k = n-2-2d+\lambda$ neighborhoods of the form $\big( \mathbf{w}^{(3)}, \mathbf{u}^{(3)} \big)$, associated with color tuple $(c_3^t, c_3^t)$.
    \end{itemize}
    For \emph{any} $\mathbf{v} \in {\gV_G^2}_{(2)}$ we have
    \begin{align*}
    c_\mathbf{v}^{t+1} &= \text{HASH} \Big(c_{2}^{t}, M^{F,t}(\mathbf{v})\Big) \\
    M^{F,t}(\mathbf{v}) &= \ldblbrace 
            \underbrace{(c_1^t, c_2^t)}_{\text{1 time}},
            \underbrace{(c_2^t, c_1^t)}_{\text{1 time}},
            \underbrace{(c_2^t, c_2^t)}_{\text{$\lambda$ times}}, \\
            &\quad\qquad\underbrace{(c_2^t, c_3^t)}_{\text{$d-\lambda$ times}},
            \underbrace{(c_3^t, c_2^t)}_{\text{$d-\lambda$ times}},
            \underbrace{(c_3^t, c_3^t))}_{\text{$k$ times}}
    \rdblbrace.
    \end{align*}
    
    A node tuple $\mathbf{v} = (v_1, v_2) \in {\gV_G^2}_{(3)}$ has $\gN_j^F(\mathbf{v}) = \big((j, v_2), (v_1, j)\big), j \in \gV_G$. Therefore, \emph{any} $\mathbf{v} \in {\gV_G^2}_{(3)}$ has exactly:
    \begin{itemize}
        \item ($j = v_2$) $1$ neighborhood of the form $\big( \mathbf{w}^{(1)}, \mathbf{u}^{(3)} \big)$, associated with color tuple $(c_1^t, c_3^t)$;
        \item ($j = v_1$) $1$ neighborhood of the form $\big( \mathbf{w}^{(3)}, \mathbf{u}^{(1)} \big)$, associated with color tuple $(c_3^t, c_1^t)$;
        \item ($j \sim v_2, j \sim v_1$) $\mu$ neighborhoods of the form $\big( \mathbf{w}^{(2)}, \mathbf{u}^{(2)} \big)$, associated with color tuple $(c_2^t, c_2^t)$;
        \item ($j \sim v_2, j \not\sim v_1$) $d-\mu$ neighborhoods of the form $\big( \mathbf{w}^{(2)}, \mathbf{u}^{(3)} \big)$, associated with color tuple $(c_2^t, c_3^t)$;
        \item ($j \not\sim v_2, j \sim v_1$) $d-\mu$ neighborhoods of the form $\big( \mathbf{w}^{(3)}, \mathbf{u}^{(2)} \big)$, associated with color tuple $(c_3^t, c_2^t)$;
        \item ($j \not\sim v_2, j \not\sim v_1$) $k = n-2-2d+\mu$ neighborhoods of the form $\big( \mathbf{w}^{(3)}, \mathbf{u}^{(3)} \big)$, associated with color tuple $(c_3^t, c_3^t)$.
    \end{itemize}
    For \emph{any} $\mathbf{v} \in {\gV_G^2}_{(3)}$, we then obtain
    \begin{align*}
    c_\mathbf{v}^{t+1} &= \text{HASH} \Big(c_{3}^{t}, M^{F,t}(\mathbf{v})\Big) \\
    M^{F,t}(\mathbf{v}) &= \ldblbrace 
            \underbrace{(c_1^t, c_3^t)}_{\text{1 time}},
            \underbrace{(c_3^t, c_1^t)}_{\text{1 time}},
            \underbrace{(c_2^t, c_2^t)}_{\text{$\mu$ times}}, \\
            &\quad\qquad\underbrace{(c_2^t, c_3^t)}_{\text{$d-\mu$ times}},
            \underbrace{(c_3^t, c_2^t)}_{\text{$d-\mu$ times}},
            \underbrace{(c_3^t, c_3^t))}_{\text{$k$ times}}
    \rdblbrace.
    \end{align*}
    
    This proves the induction and confirms that all tuples in the same partition have the same colour at any colour refinement time step $t$.
    
    If the colouring is constant within partitions at any $2$-FWL step, then the colour histogram associated with a graph at step $t$ purely depends on the cardinality of each of the three. We notice that, for \emph{any} $G \in$ \emph{SR($n$,$d$,$\lambda$,$\mu$)}, they are completely determined by the first two parameters with
    \begin{itemize}
        \item $|{\gV_G^2}_{(1)}| = n$
        \item $|{\gV_G^2}_{(2)}| = n d$
        \item $|{\gV_G^2}_{(3)}| = |\gV_G^2| - (n + n d)$.
    \end{itemize}

    Given the above, any two graphs $G_1, G_2 \in$ \emph{SR($n$,$d$,$\lambda$,$\mu$)} are associated with the same colour histograms at any step of the $2$-FWL procedure and, therefore, cannot possibly deemed non-isomorphic by the last.
\end{proof}

We leverage on Lemma~\ref{lemma:sr_2-fwl} to prove Theorem~\ref{thm:swl_noless_than_3wl}.
\begin{proof}[Proof of Theorem~\ref{thm:swl_noless_than_3wl}]
    In virtue of Lemma~\ref{lemma:sr_2-fwl} and the fact that $2$-FWL is as powerful as $3$-WL, Theorem~\ref{thm:swl_noless_than_3wl} is proved by exhibiting a pair of Strongly Regular graphs in the same family that are distinguished by the SWL test. This pair is given by the two graphs in Figure~\ref{fig:SR}: Rook’s $4$x$4$ graph ($G_1$) and the Shrikhande graph ($G_2$), (the only) members of the \emph{SR(16,6,2,2)} family of Strongly Regular graphs. The SWL test which considers their clique $3$-complexes distinguish them due to the fact that, differently from $G_1$, $G_2$ possesses no $4$-cliques, thus its associated complex has no $3$-simplices.
\end{proof}

\section{Proofs of Linear Regions Results}
\label{app:linear regions}

\paragraph{Background on Hyperplane Arrangements}
A function $f\colon \mathbb{R}^N\to\mathbb{R}^M$ is a \emph{piecewise linear function} if its graph $\{(x,f(x))\colon x\in \mathbb{R}^N\}\subseteq\mathbb{R}^N\times \mathbb{R}^M$ consists of a finite number of polyhedral pieces. Projecting these polyhedra back onto $\mathbb{R}^N$ by $(x,y)\mapsto x$  defines a polyhedral subdivision of $\mathbb{R}^N$. 
The \emph{linear regions} of the function are the $N$-dimensional pieces in this subdivision. 
These are the (inclusion maximal) connected regions of the input space where the function is affine linear. 

Let $\psi\colon \mathbb{R}\to\mathbb{R}$; $s\mapsto \max\{0,s\}$ be the linear rectification. 
A ReLU with $N$ inputs defines a function $y\colon x\mapsto \psi(w^\top x)$, which for any fixed value of the weight vector $w\in\mathbb{R}^N$, $w\neq0$, has gradient with respect to the input vector $x\in\mathbb{R}^N$ equal to $0$ on the open halfspace $\{x\colon w^\top x< 0\}$ and equal to $w$ on the open halfspace $\{x\colon w^\top  x> 0\}$. Hence a ReLU defines a piecewise linear function with two linear regions. 
A layer of ReLUs $\psi(w_i^\top x)$, $i=1,\ldots,M$ has linear regions given by the intersection of linear regions of the individual ReLUs. 
The number of linear regions of the function represented by the layer is equal to the number of connected components that are left behind once we remove $\cup_{i=1}^M A_i$ from $\mathbb{R}^N$, where $A_i=\{x\in\mathbb{R}^N\colon w_i^\top x=0\}$ is the hyperplane dividing the two linear regions of the $i$th ReLU. 
Hence the linear regions of a layer of ReLUs can be described in terms of a \emph{hyperplane arrangement}, i.e. a collection $\mathcal{A}=\{A_i\colon i=1,\ldots, M\}$ of hyperplanes. 

An arrangement of hyperplanes in $\mathbb{R}^N$ is \emph{in general position} if the intersection of any $k$ hyperplanes in the arrangement has the expected co-dimension, $k$. 
We will focus on \emph{central} arrangements, where each hyperplane contains the origin. A central arrangement is in general position when the normal vectors $w_{i_1},\ldots, w_{i_k}$ of any $k\leq N$ of the hyperplanes are linearly independent. 
The following well-known result from the theory of hyperplane arrangements will be particularly important in our discussion. 
\begin{theorem}
\label{thm:zaslavsky-central-genpos}
Let $\mathcal{A}$ be a central arrangement of $M$ hyperplanes in $\mathbb{R}^N$ in general position. 
Then the number of regions of the arrangement, denoted $r(\mathcal{A})$, is equal to $2\sum_{j=0}^{N-1}{M-1\choose j}$. 
This is also the maximum number of regions of any central arrangement of $M$ hyperplanes in $\mathbb{R}^N$. 
\end{theorem}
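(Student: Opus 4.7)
The plan is to prove the formula by induction on $M$ via the standard deletion-restriction identity. Pick any hyperplane $H\in\mathcal{A}$ and let $\mathcal{A}^H$ denote the arrangement induced on $H$ by the remaining $M-1$ hyperplanes. A region of $\mathcal{A}\setminus\{H\}$ contributes two regions to $\mathcal{A}$ iff $H$ cuts through its interior, and these are precisely the regions in bijection with the regions of $\mathcal{A}^H$, giving
$$r(\mathcal{A}) = r(\mathcal{A}\setminus\{H\}) + r(\mathcal{A}^H).$$
Both $\mathcal{A}\setminus\{H\}$ (in $\mathbb{R}^N$) and $\mathcal{A}^H$ (in $H\cong\mathbb{R}^{N-1}$) are themselves central arrangements of $M-1$ hyperplanes in general position: centrality is inherited since every hyperplane of $\mathcal{A}$ contains the origin, and general position descends to $\mathcal{A}^H$ because linear independence of any $N$ of the original normals implies linear independence of the corresponding projections onto $H$.

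Writing $f(M,N)$ for $r(\mathcal{A})$, the recurrence $f(M,N)=f(M-1,N)+f(M-1,N-1)$ emerges, with immediate base cases $f(1,N)=2$ (a single hyperplane through the origin yields two open halfspaces) and $f(M,1)=2$ (all $M$ hyperplanes collapse to the origin in $\mathbb{R}$). The target quantity $2\sum_{j=0}^{N-1}\binom{M-1}{j}$ satisfies the same recurrence via Pascal's identity $\binom{M-1}{j}=\binom{M-2}{j}+\binom{M-2}{j-1}$ together with matching base cases, so induction closes the counting claim. As an alternative derivation, one can instead invoke the Whitney formula already cited in the excerpt: every subarrangement $\mathcal{B}\subseteq\mathcal{A}$ of a central arrangement is automatically central, and in general position $\rank(\mathcal{B})=\min\{|\mathcal{B}|,N\}$, so $\chi_{\mathcal{A}}(t)=\sum_{k=0}^{M}(-1)^k\binom{M}{k}t^{N-\min\{k,N\}}$, and $(-1)^N\chi_{\mathcal{A}}(-1)$ simplifies to the same closed form.

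For the maximality assertion, any central arrangement $\mathcal{A}'$ of $M$ hyperplanes in $\mathbb{R}^N$ can be perturbed into general position by an arbitrarily small rotation of the normals within the space of central arrangements. Under such a perturbation each open region of $\mathcal{A}'$ has nonempty interior avoiding all hyperplanes, a stable condition, so it persists; previously coincident hyperplanes that separate can only further subdivide these regions. Hence the region count is lower semi-continuous and maximized on the general-position stratum. The main technical care will lie in the descent of general position to the restriction $\mathcal{A}^H$, which is a short linear algebra argument, and in the perturbation step for maximality; once these are in hand, the combinatorial heart of the proof reduces to Pascal's identity.
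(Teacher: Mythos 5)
Your argument is correct. Note first that the paper does not actually prove this statement: it is presented as background and attributed to Zaslavsky, with the remark that it follows from Zaslavsky's theorem; the closest the paper comes to a proof is the combination of Lemma~\ref{lem:whitney} and Proposition~\ref{prop:simplecase}, which with $K=N$ is word-for-word your ``alternative derivation'' via the Whitney expansion of the characteristic polynomial (in general position every central subarrangement $\mathcal{B}$ has $\rank(\mathcal{B})=\min\{|\mathcal{B}|,N\}$, and the alternating sum telescopes to $2\sum_{j=0}^{N-1}\binom{M-1}{j}$). Your primary route --- deletion--restriction plus Pascal's identity --- is therefore genuinely different and more elementary: it avoids the characteristic polynomial entirely and is self-contained. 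Two small points deserve care if you write it out. First, the induction is really on $M+N$ (or a double induction), since the recurrence calls on $f(M-1,N-1)$; and when $N=2$ the restriction $\mathcal{A}^H$ degenerates to the single hyperplane $\{0\}\subset H\cong\mathbb{R}^1$ rather than $M-1$ distinct hyperplanes, which is harmless only because your base case $f(\cdot,1)=2$ absorbs it --- worth saying explicitly. Second, your descent-of-general-position claim should be phrased as: a nontrivial dependence among $k\le N-1$ of the projected normals $\mathrm{proj}_{H}(w_{H'})$ lifts to a nontrivial dependence among the $k+1\le N$ vectors $w_H,w_{H_{i_1}},\dots,w_{H_{i_k}}$, contradicting general position of $\mathcal{A}$. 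Your maximality argument (perturb to general position, observe that interior points of distinct regions remain separated by the perturbed hyperplane that separated them, so the region count does not decrease) is sound and is essentially the same lower-semicontinuity device the paper uses in its proof of Proposition~\ref{prop:rank}. What your approach buys is a fully elementary, citation-free proof; what the paper's Whitney route buys is that the same computation handles the non-generic rank profiles needed elsewhere (Proposition~\ref{prop:simplecase} for general $K$), which deletion--restriction does not give as directly.
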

This result can be derived from Zaslavsky's theorem \citep{zaslavsky1975facing}, which expresses the number of regions of an arbitrary arrangement, not necessarily in general position, in terms of properties of a partially ordered set, namely the collection of intersections of the hyperplanes in the arrangement partially ordered by reverse inclusion. 

We will focus on central arrangements, but we point out that similar results to Theorem~\ref{thm:zaslavsky-central-genpos} can be derived for the case of non-central hyperplane arrangements. An non-central arrangement of (affine) hyperplanes in $\mathbb{R}^N$ is in general position when any $k\leq N$ of the hyperplanes intersect in a set of dimension $N-k$, and any $k>N$ of the hyperplanes have an empty intersection. For such an arrangement, the number of regions is $\sum_{j=0}^N{M\choose j}$. 

The main challenges in computing the number of regions defined hyperplane arrangements happen when the hyperplanes satisfy some type of constraints and are not in general position. 
The type of layers that we discuss in the following correspond to central arrangements subject to certain constraints, namely that the normal vectors are the rows of a matrix with a particular block Kronecker product structure. 

\subsection{GNNs} 
\begin{proof}[Proof of Theorem~\ref{thm:linear regions of GNNs}]
    For simplicity, we write $Y=\mathcal{H}(A,H)^{T}\in \R^{d\times S_0}$ and $V=W^{T}\in \R^{m\times d}$. We denote $Y_{:j}$ the $j$the column of $Y$, and
    $V_{i:}$ the $i$th row of $V$. The GNN layer defines hyperplanes, for $i=1,\ldots, m,\; j=1,\ldots, S_0$,
    \begin{equation*}
        A_{ij} := \bigl\{Y\in \R^{d\times S_0}: V_{i:} Y_{:j}=0\bigr\}. 
    \end{equation*}
    The arrangement $\mathcal{A}=\{A_{ij}\colon i=1,\ldots, m, j=1,\ldots, S_0\}$ is a direct sum of the arrangements $\mathcal{A}_j=\{A_{ij}\colon i=1,\ldots, m\}$, $j=1,\ldots,S_0$. 
    It can be shown \citep[see][]{zaslavsky1975facing} that this implies $r(\mathcal{A})=\prod_{j=1}^{S_0} r(\mathcal{A}_j)$.  
    
    Each $\mathcal{A}_j$ is an arrangement of $m$ hyperplanes in $\mathbb{R}^N$, $N=d S_0$, whose normals span a subspace of dimension at most $d$, irrespective of $S_0$. 
    Counting the number of regions defined by $\mathcal{A}_j$ is equivalent to counting the number of regions defined by its \emph{essentialization} $\operatorname{ess}(\mathcal{A}_j)$, which is the arrangement that the hyperplanes define on the span of their normal vectors. 
    We can regard $\operatorname{ess}(\mathcal{A}_j)$ as a (central) arrangement of $m$ hyperplanes in $\mathbb{R}^d$ with normals $V_{i:}\in\mathbb{R}^d$, $i=1,\ldots,m$. 
    For generic choices of the weight matrix $W^\top=V$, this is a central arrangement in general position. 
    Hence, by Theorem~\ref{thm:zaslavsky-central-genpos},  $r(\mathcal{A}_j) = r(\operatorname{ess}(\mathcal{A}_j))= 2\sum_{i=0}^{d-1}{m-1\choose i}$. 
    
    For the number of regions of the entire arrangement $\mathcal{A}$, corresponding to the number of linear regions of the function expressed by the layer, 
    we obtain 
    $R_{\rm GNN} = r(\mathcal{A}) = \prod_{j=1}^{S_0}r(\mathcal{A}_j) =  \left(2\sum_{i=0}^{d-1}{m-1\choose i}\right)^{S_0}$. 
\end{proof}

\subsection{SCNNs}

\begin{proof}[Proof of Theorem~\ref{thm:linear regions of SCNN}] 
By the definition of the SCNN layer, for each dimension $n$, 
each of the $S_n$ $n$-simplices in the simplicial complex has $d_n$ input features. 
Similar to the proof of Theorem~\ref{thm:linear regions of GNNs}, the arrangement for the $n$-dimensional simplices corresponds to a direct sum of $S_n$ arrangements, each of $m_n$ hyperplanes in $\R^{d_n}$. 
Hence the number of linear regions for this part of the complex is 
\begin{equation}\label{eq:linear regions n-complex}
    \left(2\sum_{i=0}^{d_n-1}{m_n-1\choose i}\right)^{S_n}.
\end{equation}
Now for the entire layer, the arrangements for the different $n$ are also combined as a direct sum, so that their number of regions multiply. 
We have $n$ ranging from dimension $0$ to $p$, 
so that 
\begin{equation}
    \prod_{n=0}^p\left(2\sum_{i=0}^{d_n-1}{m_n-1\choose i}\right)^{S_n}.
\end{equation}
This concludes the proof. 
\end{proof}

\subsection{MPSNs}

We can rewrite \eqref{eq:m} more generality and more concisely as follows. For each $n$ the output features on $\mathcal{S}_n$ can be written as 
\begin{align}
H_n^{\rm out} = & \psi(M_n H_n W_n + U_{n}H_{n-1} W_{n-1} + O_nH_{n+1} W_{n+1}) \notag\\
=&\psi\left([U_n H_{n-1}| M_n H_n | O_n H_{n+1} ]\left[\begin{smallmatrix}W_{n-1}\\W_{n}\\W_{n+1}\end{smallmatrix}\right]\right), 
\label{eq:mpsn n}
\end{align}
for some fixed matrices $U_n\in\mathbb{R}^{S_n\times S_{n-1}}$, $M_n\in\mathbb{R}^{S_n\times S_n}$ and $O_{n}\in\mathbb{R}^{S_n\times S_{n+1}}$ depending only on the simplicial complex. 
To avoid clutter, we omit the superscript ``in'' of the input feature matrices $H_n$. 
Concatenating \eqref{eq:mpsn n} for all $n$, we can write the entire MPSN layer as 
\begin{multline}
\left[\begin{smallmatrix}
H_0^{\rm out}\\H_1^{\rm out}\\H_2^{\rm out}\\\vdots\\H_p^{\rm out}
\end{smallmatrix}
\right] = 
\psi\left(
\left[\begin{smallmatrix}
M_0H_0&O_0H_1&      && \\
U_1H_0&M_1H_1&O_1H_{2}&& \\
    &U_2H_1&M_2H_2&O_2H_{3}&\\
    &   &     &\ddots&
\end{smallmatrix}\right]
\left[\begin{smallmatrix}
W_0\\W_1\\W_2\\\vdots\\W_p
\end{smallmatrix}
\right]
\right). 
\label{eq:repr1}
\end{multline}
We will use this representation (or rather \ref{eq:mpsn n}) in the proof of the first bound in Theorem~\ref{thm:linear regions of MPSN}. 

It is also useful to write the linear function in standard form. Using Roth's lemma, we can write \eqref{eq:mpsn n} as 
\begin{align}
\operatorname{vec}(H_n^{\rm out}) =& 
\psi\Bigl(\bigl[ W_{n-1}^\top \otimes U_n |W_n^\top\otimes M_n|W_{n+1}^\top\otimes O_n\bigr]\notag\\
&\quad\qquad\times\operatorname{vec}\bigl(\bigl[H_{n-1} |H_{n}| H_{n+1}\bigr]\bigr)\Bigr). 
\label{eq:MPSN n standard}
\end{align}
Now, concatenating over $n$ yields the expression $\psi(W H)$ from \eqref{eq:MPSN full standard} for the entire layer, with the matrix $W\in\mathbb{R}^{M\times N}$ from \eqref{eq:MPSNdimkron}.

\begin{proof}[Proof of Proposition~\ref{prop:simplecase}]
This result is known in theory of partial orders and hyperplane arrangements. We include a proof which illustrates the evaluation of the characteristic polynomial. 
If there is $K$ so that $\operatorname{rank}(W_{B:})=\min\{|B|,K\}$ for all $B$, then Lemma~\ref{lem:whitney} can be evaluated as 
\begin{align*}
r(\mathcal{A}) = & \sum_{B}(-1)^{|B|-\min\{|B|,K\}}\\
=& \sum_{j=0}^K\sum_{B\in{\{1,\ldots, M\} \choose j}}1 +\sum_{j=K+1}^M\sum_{B\in {\{1,\ldots, M\}\choose j}}(-1)^{j-K} \\
=& \sum_{j=0}^K {M\choose j} +  (-1)^{M-K}\sum_{j=0}^{M-(K+1)} {M\choose j} (-1)^{j} \\
=&  \sum_{j=0}^K {M\choose j} + (-1) {M-1\choose K}\\
=&2\sum_{j=0}^{K-1}{M-1\choose j},
\end{align*}
which is what was claimed. 
\end{proof}

We now proceed with the proof of Theorem~\ref{thm:linear regions of MPSN}. 
We will use the following. 
\begin{proposition}
\label{prop:rank}
If $W, W'\in\mathbb{R}^{M\times N}$ are two matrices with $\rank(W_{B:})\geq\rank(W'_{B:})$ for all $B\subseteq \{1,\ldots, M\}$, then $r(\mathcal{A})\geq r(\mathcal{A}')$. 
\end{proposition}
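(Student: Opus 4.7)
My plan is to prove the inequality by induction on $M$ (for all $N$ simultaneously), using the deletion-restriction recursion
\begin{equation*}
r(\mathcal{A}) = r(\mathcal{A}\setminus i) + r(\mathcal{A}/i)
\end{equation*}
for central arrangements, which follows from the well-known factorisation $\chi_\mathcal{A}(t) = \chi_{\mathcal{A}\setminus i}(t) - \chi_{\mathcal{A}/i}(t)$ of the characteristic polynomial together with Zaslavsky's identity $r = (-1)^N \chi(-1)$. The base case $M=0$ is trivial: both arrangements are empty in $\mathbb{R}^N$, each with one region.

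For the inductive step I single out the $M$-th row of $W$ and $W'$. If row $M$ of $W$ vanishes, then the hypothesis $\rank(W_{\{M\}:}) \geq \rank(W'_{\{M\}:})$ forces row $M$ of $W'$ to vanish as well, so both arrangements are effectively determined by their first $M-1$ rows and the inductive hypothesis applies directly. Otherwise $H_M$ is a genuine hyperplane of $\mathcal{A}$ and deletion-restriction gives $r(\mathcal{A}) = r(\mathcal{A}\setminus M) + r(\mathcal{A}/M)$. The analogous decomposition holds for $\mathcal{A}'$ when row $M$ of $W'$ is also nonzero; when that row vanishes, $r(\mathcal{A}') = r(\mathcal{A}'\setminus M)$ and the extra nonnegative term $r(\mathcal{A}/M)$ only strengthens the target inequality.

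The crux is that rank-dominance is preserved under both operations. For deletion this is transparent, since the rank function on subsets of $[M-1]$ is unchanged. For contraction (when both $M$-th rows are nonzero, so $r_M(\{M\}) = r_{M'}(\{M\}) = 1$), the matroid of $\mathcal{A}/M$ is realised concretely by projecting the rows $W_{1:}, \ldots, W_{M-1:}$ onto the orthogonal complement of $W_{M:}$, yielding an $(M-1)\times(N-1)$ matrix whose rank on $B \subseteq [M-1]$ equals $r_M(B \cup \{M\}) - 1$; the same construction for $W'$ together with dominance of $r$ over $r'$ at $B \cup \{M\}$ gives the pointwise inequality between the contracted rank functions. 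Applying the inductive hypothesis separately to the pairs $(\mathcal{A}\setminus M, \mathcal{A}'\setminus M)$ and $(\mathcal{A}/M, \mathcal{A}'/M)$ — each involving strictly fewer than $M$ rows — and summing yields the desired inequality. The main obstacle I anticipate is careful bookkeeping for degenerate configurations, namely zero rows and parallel rows producing identical hyperplanes, where one must verify that the Whitney formula of Lemma~\ref{lem:whitney} and the deletion-restriction identity remain internally consistent and that the contracted arrangement is a realizable central arrangement in $\mathbb{R}^{N-1}$.
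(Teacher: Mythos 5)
Your argument is correct in substance but takes a genuinely different route from the paper's. The paper proves Proposition~\ref{prop:rank} by an algebraic--geometric perturbation argument: rank deficiency of $W'_{B:}$ is a closed condition (vanishing of minors), so an arbitrarily small perturbation of $W'$ raises the ranks, while the number of regions of a central arrangement is a lower semi-continuous function of the normal vectors (via the vertex count of an associated polytope), so such a perturbation cannot decrease $r(\mathcal{A}')$. You instead run a purely combinatorial induction through deletion--restriction, $r(\mathcal{A}) = r(\mathcal{A}\setminus M) + r(\mathcal{A}^{H_M})$, with the key observation that pointwise rank dominance is inherited both by the deleted pair (trivially) and by the contracted pair, via $\rank_{/M}(B) = \rank(B\cup\{M\})-1$ applied at $B\cup\{M\}$. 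What your route buys is a self-contained, matroid-level proof that needs nothing beyond Zaslavsky/Whitney (already invoked in Lemma~\ref{lem:whitney}) and avoids the semi-continuity input, which the paper only cites; it also sidesteps the question, left implicit in the paper, of why a small perturbation of $W'$ can be arranged to realise a rank function dominated by that of $W$. The paper's route is shorter and requires no case analysis.

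The degenerate cases you flag are the one real obstacle, and they do close up, but not entirely for free. Two points are worth recording. First, the hypothesis itself forces every degeneracy of $W$ to be mirrored in $W'$: if $W_{M:}=0$ then $\rank(W'_{\{M\}:})=0$, and if rows $i,M$ of $W$ are parallel then $\rank(W'_{\{i,M\}:})\leq 1$, so rows $i,M$ of $W'$ are parallel or one of them vanishes. In the latter situation you cannot always delete index $M$ from both matrices (if $W'_{i:}=0$ while $W'_{M:}\neq 0$, removing row $M$ would genuinely shrink $\mathcal{A}'$); instead delete whichever of $i,M$ is redundant in \emph{both} matrices, which always exists, leaving both arrangements and the dominance on the remaining $M-1$ indices unchanged. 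Second, in the asymmetric case where $H_M$ is a genuine new hyperplane of $\mathcal{A}$ but $H'_M$ is zero or a repeat in $\mathcal{A}'$, you do not need the contraction comparison at all: $r(\mathcal{A}') = r(\mathcal{A}'\setminus M) \leq r(\mathcal{A}\setminus M) \leq r(\mathcal{A})$. With the induction hypothesis stated for matrices that may contain zero or parallel rows, and $r$ interpreted throughout as the region count of the set of distinct genuine hyperplanes (the Whitney sum of Lemma~\ref{lem:whitney} literally vanishes on a matrix with a zero row, so it cannot be used as the definition in the induction), your argument is complete.
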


\begin{proof}[Proof of Proposition~\ref{prop:rank}]
Notice that if $W'_{B:}$ is not full rank, then $W'$ solves a polynomial system. More precisely, some minors (determinants of sub-matrices) of $W'_{B:}$ vanish. 
Hence, increasing the rank corresponds to stepping outside of the solution set to a polynomial system. 
This can be accomplished by an arbitrarily small perturbation of the matrix. 
On the other hand, the number of regions of a central arrangement with normals $W$ corresponds to the number of vertices of a polytope which is the convex hull of points parametrized by the entries of $W$. 
The number of vertices is a lower semi-continuous function of the considered polytope \citep[see][Section~5.3]{Gruenbaum2003}, and hence the number of regions is a lower semi-continuous function of the entries of $W$. This means that for sufficiently small perturbations of the entries, the number of regions of the corresponding hyperplane arrangement does not decrease. 
\end{proof}

Further, we will use an inequality for the rank of a block matrix \citep[see, e.g.][Theorem~19]{doi:10.1080/03081087408817070}. 
\begin{lemma}
\label{lem:block-matrix-rank}
Let $A\in\mathbb{R}^{k\times l}$, $B\in\mathbb{R}^{m\times l}$, $C\in\mathbb{R}^{m\times n}$ be matrices. Then 
$\rank(\left[\begin{smallmatrix} A & 0\\B & C \end{smallmatrix}\right]) \geq \rank(A) + \rank(C)$. 
\end{lemma}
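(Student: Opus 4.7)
The plan is to prove the inequality directly by exhibiting $\rank(A)+\rank(C)$ explicit linearly independent columns in the block matrix, leveraging the zero block in the upper right to decouple the linear relations. Write $r_A=\rank(A)$ and $r_C=\rank(C)$, and pick index sets $I\subseteq\{1,\ldots,l\}$ with $|I|=r_A$ such that the columns $\{A_{:i}\}_{i\in I}$ of $A$ are linearly independent, and $J\subseteq\{1,\ldots,n\}$ with $|J|=r_C$ such that the columns $\{C_{:j}\}_{j\in J}$ of $C$ are linearly independent.

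In the block matrix $W=\left[\begin{smallmatrix}A&0\\B&C\end{smallmatrix}\right]$, consider the $r_A+r_C$ columns indexed by $I$ (from the left block) and by $l+J$ (from the right block), namely the vectors
\[
u_i=\begin{pmatrix}A_{:i}\\B_{:i}\end{pmatrix}\quad(i\in I),\qquad v_j=\begin{pmatrix}0\\C_{:j}\end{pmatrix}\quad(j\in J).
\]
Suppose $\sum_{i\in I}\alpha_i u_i+\sum_{j\in J}\beta_j v_j=0$. Projecting onto the first $k$ coordinates gives $\sum_{i\in I}\alpha_i A_{:i}=0$, and by independence of $\{A_{:i}\}_{i\in I}$ we get $\alpha_i=0$ for all $i\in I$. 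Substituting back and projecting onto the last $m$ coordinates yields $\sum_{j\in J}\beta_j C_{:j}=0$, and independence of $\{C_{:j}\}_{j\in J}$ forces $\beta_j=0$. So the chosen $r_A+r_C$ columns of $W$ are linearly independent, hence $\rank(W)\geq r_A+r_C$.

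This argument is essentially the only step required; the main (minor) subtlety is simply making sure one projects onto the top block first so that the zero in the upper right causes the $\beta_j$'s to drop out cleanly, isolating the $\alpha_i$'s. An entirely symmetric row-based argument works as well, choosing independent rows of $A$ and $C$ and using the zero block to decouple coefficients, which could be used as an alternative should a row-rank perspective be preferred.
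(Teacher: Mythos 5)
Your argument is correct and complete: the zero block forces the projection onto the top $k$ coordinates to kill the $\beta_j$ terms, so independence of the chosen columns of $A$ isolates the $\alpha_i$'s, and the remaining relation in the bottom block then kills the $\beta_j$'s. The paper itself offers no proof of this lemma — it simply cites it as a known rank inequality for block matrices (Marsaglia--Styan, Theorem~19) — so your elementary column-selection argument is a perfectly good self-contained substitute, and there is nothing to compare it against beyond noting that it establishes exactly the cited inequality.
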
 

\begin{proof}[Proof of Theorem~\ref{thm:linear regions of MPSN}] The proof of the first upper bound is analogous to Theorem~\ref{thm:linear regions of SCNN}. 
The difference is now we consider also the boundary and co-boundary simplices that interact with an $n$-simplex in the MPSN. 
We use the expression \eqref{eq:mpsn n}. 
The difference compared with Theorem~\ref{thm:linear regions of SCNN} lies in the number of input features for each $n$, which here results in 
\begin{equation}\label{eq:number of linear regions, MPSN}
    \prod_{n=0}^p\left(2\sum_{i=0}^{d_{n-1}+d_n+d_{n+1}-1}{m-1\choose i}\right)^{S_n},
\end{equation}
which is the first upper bound. 
%
The second upper bound is the trivial upper bound, which is the maximum possible number of regions of a central arrangement of $M$ hyperplanes in $\mathbb{R}^N$ from Theorem~\ref{thm:zaslavsky-central-genpos}. 
%

The lower bound follows from Proposition~\ref{prop:rank}. 
We verify that the hypothesis of the proposition is satisfied. 
Note that the matrix $W$ from \eqref{eq:MPSNdimkron} is 
lower block triangular, of the form 
$$
W=\left[\begin{array}{cccccc}
\cline{1-2}
\multicolumn{1}{|c}{\ast}& \multicolumn{1}{c|}{\ast}  & 0 & 0 & 0 \\
\cline{1-3}
\ast     &\ast  & \multicolumn{1}{|c|}{\ast} & 0& 0\\
\cline{3-4}
\ast & \ast & \ast & \multicolumn{1}{|c|}{\ast} & 0 \\
\cline{4-5}
\ast & \ast & \ast &\ast& \multicolumn{1}{|c|}{\ast} \\
\ast&\ast&\ast&\ast& \multicolumn{1}{|c|}{\ast}\\
\cline{5-5}     
\end{array}\right]. 
$$
Applying Lemma~\ref{lem:block-matrix-rank} recursively, 
for the network with outputs $H_0^{\rm out}, \ldots, H_{p-1}^{\rm out}$ we find that  $\operatorname{rank}(W)
\geq 
\operatorname{rank}([W_0^\top \otimes M_0 | W_1^\top\otimes O_0 ])
+\sum_{n=1}^{p-1} \operatorname{rank}(W^\top_{n+1}\otimes O_{n})
$. 
A similar expression holds for any selection of rows, $W_{B:}$. 

On the other hand, the corresponding matrix $W'$ for an SCNN is block diagonal with blocks $W_n^\top \otimes M_n$ and hence 
$\operatorname{rank}(W')
= \sum_{n=0}^{p-1} \operatorname{rank}(W^\top_{n}\otimes M_{n})$. 
A similar expression holds for any selection of rows, $W'_{B:}$. 

%
Hence, if $d_{n+1} \geq d_n$ and $\rank((O_n)_{C:})\geq \rank((M_n)_{C:})$ for any subsets $C$ of rows, then, choosing full rank matrices $W_n$, we have that the overall matrix satisfies 
$\rank(W_{B:}) \geq \rank(W'_{B:})$ for any subset $B$ of rows. 
Hence, applying Proposition~\ref{prop:rank} gives the desired result. 
\end{proof}

It is not difficult to obtain case by case improvements of the bounds in Theorem~\ref{thm:linear regions of MPSN} by conducting a more careful analysis of the row independencies in matrix $W$ for specific values of the input feature dimensions $d_0,\ldots,d_p$, output feature dimension $m$, numbers of simplices $S_0,\ldots, S_p$, and the structure of the matrices $U_n,M_n,O_n$, $n=0,\ldots,p$. 

\paragraph{MPSN with Populated Higher-Features}
Finally, we consider the populated higher features for a situation when we are given a simplicial complex but only vertex features are available. 
%
This strategy can increase the network complexity, as proved by Proposition~\ref{prop:ghost} below.

\begin{proposition}[MPSN with populated higher-features] 
\label{prop:ghost}
Consider an arbitrary simplicial complex and an MPSN layer mapping $\mathbb{R}^{S_0\times d_0} \to \mathbb{R}^{S_0\times m}; H_0^{\rm in}\mapsto H_{0}^{\rm out}$, 
whereby higher-dimensional input features
are populated as linear functions of the input vertex features.  
%
Consider further the corresponding SCNN layer which computes $H_0^{\rm out}=\psi(L_0 H_0^{\rm in} W_0)$. 
Then, $R_{\rm MPSN} 
\geq R_{\rm SCNN}$. Furthermore, for certain simplicial complexes and feature dimensions $d_0,\ldots, d_p,m$, the inequality is strict. 
\end{proposition}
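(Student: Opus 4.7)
The plan is to rewrite both the MPSN-with-populated-features layer and the SCNN layer in standard vectorized form as single ReLU layers $\psi(W\cdot)$, show that the SCNN's weight family is contained in the MPSN's (giving $R_{\rm MPSN}\geq R_{\rm SCNN}$), and then exhibit a concrete complex/dimension setting where decoupling the weights enriches the submatrix-rank profile and hence, via Proposition~\ref{prop:rank}, strictly increases the region count.

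\textbf{Step 1 (standard form).} Since each $H_n$ with $n\geq 1$ is a linear function of $H_0^{\rm in}$, there exist matrices $P_n$ with $\operatorname{vec}(H_n)=P_n\operatorname{vec}(H_0^{\rm in})$. Specializing the MPSN update \eqref{eq:m} to $n=0$ and applying Roth's lemma gives $\operatorname{vec}(H_0^{\rm out})=\psi(W_{\rm MPSN}\operatorname{vec}(H_0^{\rm in}))$ with
\[
W_{\rm MPSN}=(W_0^\top\otimes M_0)+(W_1^\top\otimes O_0)P_1,
\]
while the SCNN layer reads $\psi(W_{\rm SCNN}\operatorname{vec}(H_0^{\rm in}))$ with $W_{\rm SCNN}=W_0^\top\otimes L_0$.

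\textbf{Step 2 ($\geq$).} I would use the Hodge identity $L_0=L_0^\downarrow+B_1 B_1^\top$. Taking $M_0=L_0^\downarrow$, $O_0=B_1$, populating $H_1:=B_1^\top H_0^{\rm in}$ (so $P_1=I_{d_0}\otimes B_1^\top$), and tying $W_1=W_0$ gives $W_{\rm MPSN}=W_0^\top\otimes L_0^\downarrow+W_0^\top\otimes B_1 B_1^\top=W_0^\top\otimes L_0=W_{\rm SCNN}$. Hence every function realizable by the SCNN is also realizable by the MPSN-with-populated-features, and maximizing over parameters on both sides yields $R_{\rm MPSN}\geq R_{\rm SCNN}$.

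\textbf{Step 3 (strictness).} Untying $W_1$ from $W_0$ produces a strictly larger parametric family $\{(W_0^\top\otimes L_0^\downarrow)+(W_1^\top\otimes B_1 B_1^\top)\,:\,W_0,W_1\text{ independent}\}$. On a small complex such as the triangle of Figure~\ref{fig:regions} (with $S_0=S_1=3$, $S_2=1$, $d_0=1$, $m=3$), I would pick $W_0,W_1$ so that the two Kronecker summands of $W_{\rm MPSN}$ contribute linearly independent rows to some submatrix $W_{\rm MPSN,B:}$ whose rank exceeds $\operatorname{rank}(W_0^\top\otimes L_0)_{B:}$ for every admissible $W_0$; Proposition~\ref{prop:rank} then upgrades this rank surplus into a strict increase in the number of linear regions, matching the qualitative picture in Figure~\ref{fig:regions}.

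\textbf{Main obstacle.} The first two steps are essentially bookkeeping once the vectorized form is in place; the hard part is Step~3, namely certifying that the decoupling of $W_1$ from $W_0$ genuinely enlarges the attainable row-rank pattern rather than being absorbed by the rigid Kronecker structure. I expect this to reduce to an explicit linear-algebraic verification on a small complex whose boundary operator $B_1$ has sufficient spectral structure, rather than to a general abstract argument.
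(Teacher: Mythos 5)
Your Steps 1--2 are fine and give a clean route to the non-strict inequality: since the MPSN-with-populated-features family contains the SCNN family as a special case (most simply, $W_1=0$ with $M_0=L_0$), and $R$ is a maximum over representable functions, $R_{\rm MPSN}\geq R_{\rm SCNN}$ follows immediately. This is a legitimate alternative to the paper's argument, which instead compares the submatrix-rank profiles of $W_0^\top\otimes M_0$ and $[\,W_0^\top\otimes M_0\mid W_1^\top\otimes O_0\,]C$. A minor remark: the ``Hodge identity'' you invoke is degenerate at dimension zero --- vertices have no boundaries, so $B_0$ is empty, $L_0^{\downarrow}=B_0^\top B_0=0$, and $L_0=B_1B_1^\top$; your weight-tying is therefore just the trivial specialization in disguise.

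Step 3 has two genuine problems. First, the parametric family you exhibit, $\{(W_0^\top\otimes L_0^{\downarrow})+(W_1^\top\otimes B_1B_1^\top)\}$, is \emph{not} strictly larger than the SCNN family: because $L_0^{\downarrow}=0$, it collapses to $\{W_1^\top\otimes L_0\}$, which is exactly the SCNN family up to renaming $W_1$ as $W_0$. Untying $W_1$ from $W_0$ buys nothing in this parametrization, so no choice of weights can produce the rank surplus you need. To get a genuinely larger family you must keep a nonzero within-dimension term (e.g.\ $M_0=L_0$) and populate $H_1=CH_0^{\rm in}$ so that $O_0C$ is not a scalar multiple of $M_0$; this is what the paper's matrix $[\,W_0^\top\otimes M_0\mid W_1^\top\otimes O_0\,]C$ encodes, and its illustration (row supports of size $l$ versus $l+l'$) is precisely the mechanism by which small row-subsets gain rank. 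Second, even granting a correct rank surplus, Proposition~\ref{prop:rank} only delivers a \emph{non-strict} inequality; it cannot ``upgrade'' a strictly larger submatrix rank into strictly more regions. To certify strictness you must actually evaluate the two region counts --- for instance by showing both rank functions have the form $\min\{|B|,K\}$ with $K_{\rm MPSN}>K_{\rm SCNN}$ and applying Proposition~\ref{prop:simplecase}, whose value $2\sum_{j=0}^{K-1}\binom{M-1}{j}$ is strictly increasing in $K$ for $K<M$, or by a direct computation via Lemma~\ref{lem:whitney} on your chosen small complex.
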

The regions for the two cases are illustrated in Figure~\ref{fig:regions_pop}. It shows MPSN (Right) has more regions than GNN/SCNN (Left) and has a higher complexity for populated case.

\begin{proof}[Proof of Proposition~\ref{prop:ghost}]
Focusing on the $\mathcal{S}_0$ output features, the relevant matrices are $[W_0^\top \otimes M_0]$ for the simplicial network and $[W_0^\top \otimes M_0 | W_1^\top \otimes O_0] C$ for the MPSN with populated higher-dimensional features. 
Both matrices have format $m S_0\times d S_0$ and rank $\min\{m,d_0\} \rank(M_0)$. 
However, subsets of rows have different ranks in both cases. 
For illustration, if $d=1$ and $l$ is the smallest number of nonzero entries of any row in $M_0$, then $[W_0^\top M_0]$ has an $m$ row submatrix of rank $\min\{m,l\}$. 
In contrast, an $m$ row submatrix of the augmented matrix will have rank $\min\{m,l+l'\}$, where $l'$ is the smallest number of nonzero entries of a row in $O_0$.
\end{proof}

\begin{figure}[t]
\centering
\begin{tabular}{cc}
GNN / SCNN & MPSN \\ 
\!\!\!
\includegraphics[clip=true,trim=5cm 8.2cm 4.5cm 7.5cm,width=.3\columnwidth]{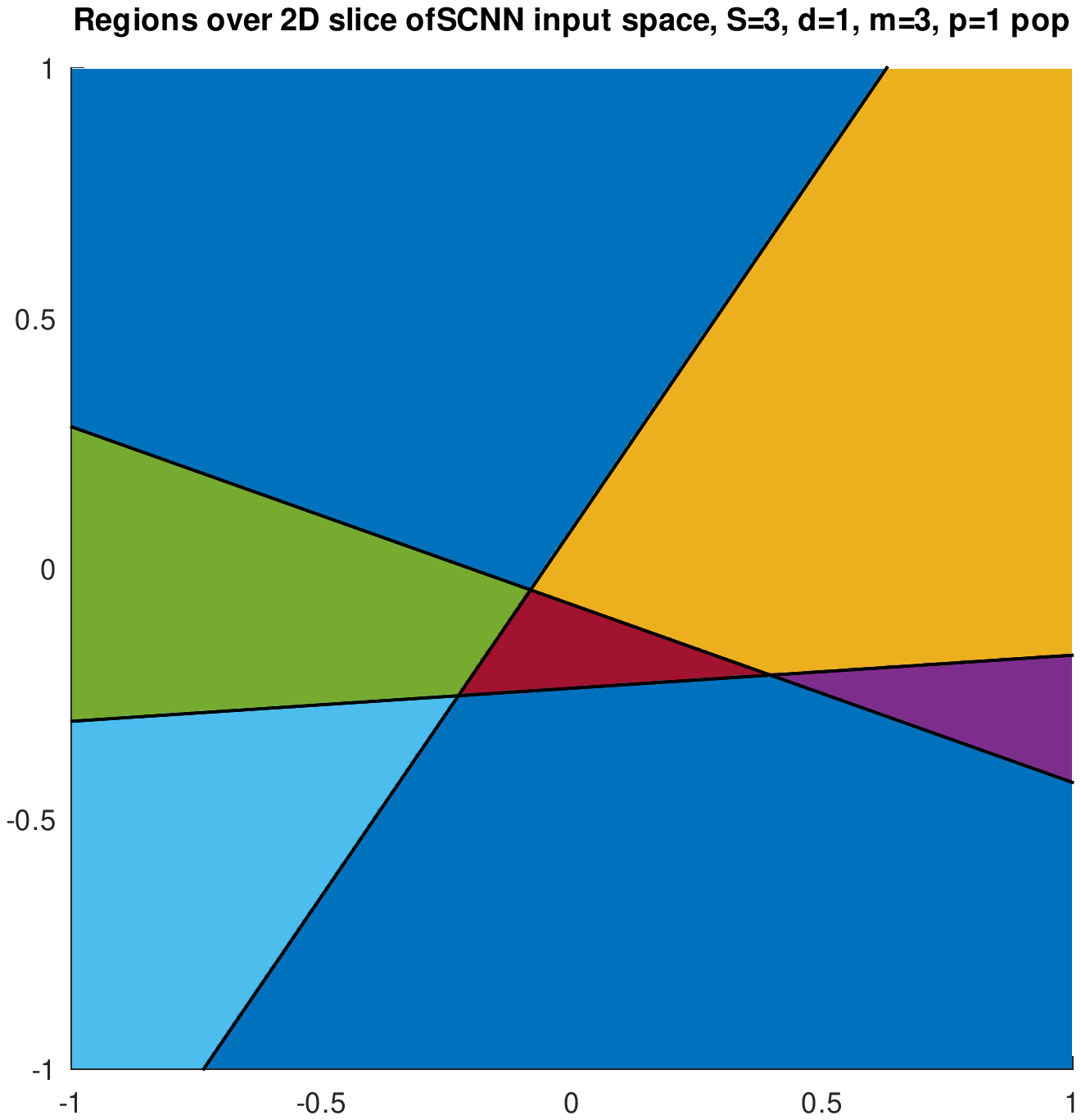}
&\!\!\! 
\includegraphics[clip=true,trim=5cm 8.2cm 4.5cm 7.5cm,width=.3\columnwidth]{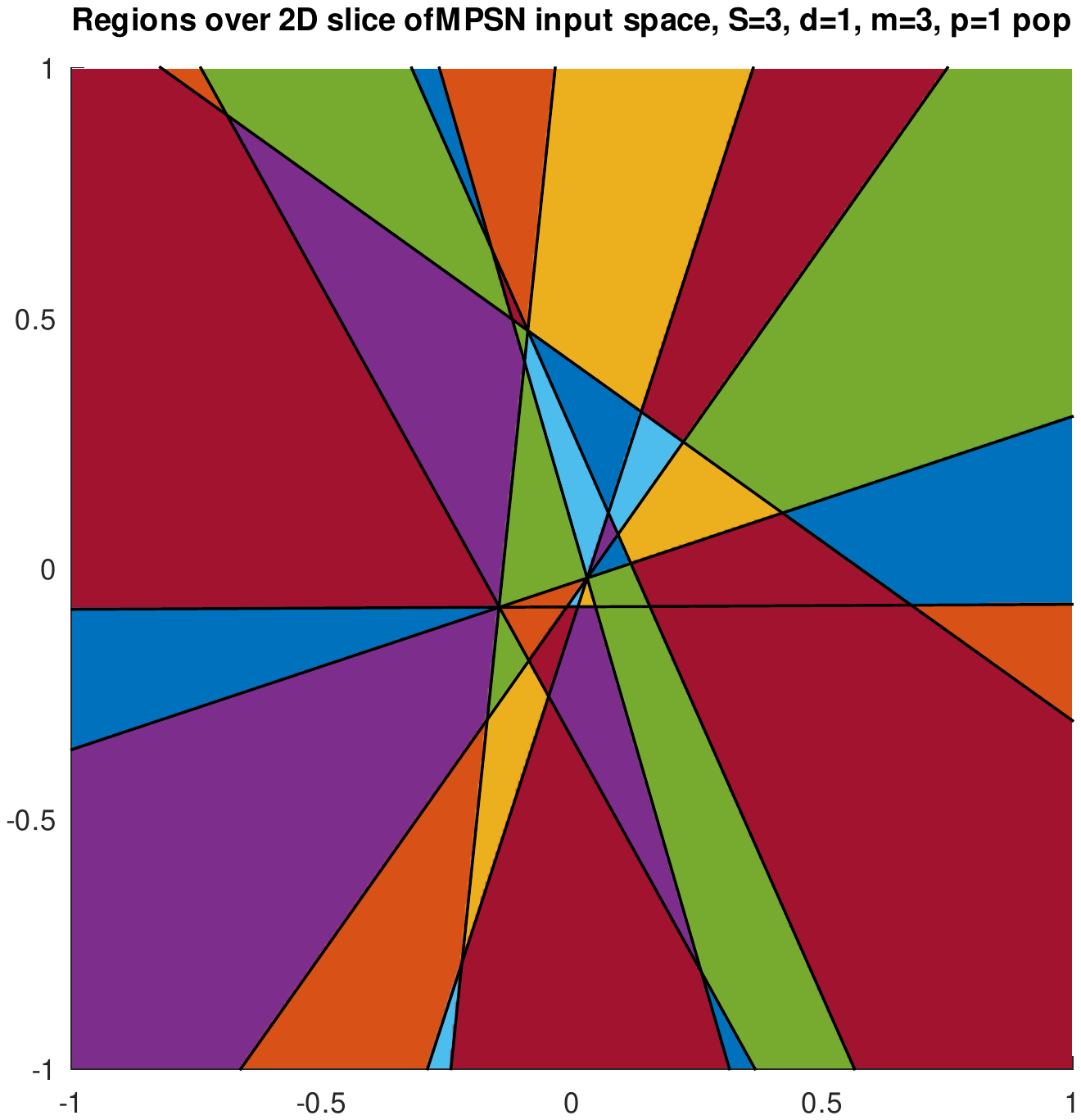} 
\end{tabular}
\vspace{-3pt}
\caption{Shown is a 2D slice of the input vertex feature space of SCNN and MPSN layers with $S_0=S_1=3$, $d_0=d_1=1$, $m=3$, computing output vertex features, with input edge features 
set as a random linear function of the input vertex features.}
    \label{fig:regions_pop}
    \vspace{-1mm}
\end{figure}

\section{Relationship to Convolutions}
\label{app:convolutions}

\paragraph{Background on Hodge Laplacian} The simplicial convolutions described in this section rely on the Hodge Laplacians of the simplicial complex. Such a Laplacian operator $L_p$ is associated with each dimension $p$ of the complex. The operator $L_p$ has a special structure that depends on the boundary operators of the corresponding dimensions $B_p$ and $B_{p+1}$. These matrices are the discrete equivalent of the boundary operators encountered in algebraic topology (see \citet{schaub2020random} for more details). They essentially describe things such as the fact that boundary of a $2$-simplex is formed by its edges with some relative orientation (positive or negative). The Laplacian can be written as a function of these boundary matrices as $L_p = B_p^\top B_p + B_{p+1} B_{p+1}^\top$. We denote the first term by $L_p^{\da}$ and the second by $L_p^{\ua}$, because they encode up and down adjacencies between the $p$-simplicies of the complex and the relative orientation between them. Additionally, let us recall the important relation $B_{p} B_{p+1} = \mathbf{0}$ (i.e. the zero map). This equation, which is fundamental in homology theory and differential geometry, formally specifies that the boundary of a simplex has no boundary (e.g. the boundary of a $2$-simplex has no boundary because it forms a loop and, therefore, it has no endpoints).   

We emphasise that the boundary matrices and the Hodge Laplacian depend on an arbitrary choice of orientation for the simplicial complex. Although these linear operators are orientation-equivariant, more general convolutional layers based on these operators are not guaranteed to be orientation-equivariant. When originally introduced, the orientation equivariance properties of the convolutional operators from \citet{ebli2020simplicial} and \citet{bunch2020simplicial} were not considered. Therefore, when the orientation of the input complex is changed, these models could produce completely different outputs. This is also emphasised by the MPSN ReLU model in Section \ref{sec:experiments}. In Appendix~\ref{app:equiv_inv} we analyse these aspects in more detail.

\paragraph{Simplicial Neural Networks}
\citet{ebli2020simplicial} presented Simplicial Neural Networks (SNNs), neural network models extending convolutional layers to attributed simplicial complexes. The theoretical construction closely resembles the one by \citet{defferrard2016convolutional}, where the standard graph Laplacian is simply replaced by the more general Hodge $p$-Laplacian $L_p$, i.e.\ the generalization of the Laplacian operator to simplices of order $p$. The $p$-th order SNN convolutional layer is defined as 
\begin{equation}\label{eq:SNN}
   \mathcal{F}_p^{-1}(\phi_W) *_p c = \psi \left( \sum_{r=0}^{R} W_r L_p^r c \right) , 
\end{equation}
\noindent where $\phi_W$ is a convolutional filter with learnable parameters $W$, $c \in C^{p}(K)$ is a $p$-cochain on input simplicial complex $K$ (i.e.\ a real valued function over the set of $p$-simplices in $K$) and $\psi$ accounts for the application of bias and non-linearity. In particular, the convolutional filter $\phi_W$ is parameterized as an $R$-degree polynomial of the Hodge $p$-Laplacian $L_p$. By imposing a small degree $R$, it is possible to guarantee spatial filter localization similarly as in graphs.

This proposed convolutional layer can easily be be rewritten in terms of our message passing scheme. Let us first conveniently introduce the following Lemma:
\begin{lemma}
\label{lemma:power}
    The $r$-th power of the Hodge $p$-Laplacian, $L_p^{r}$, is equivalent to the sum of the $r$-th powers of its constituent upper and lower components, that is: $L_p^{r} = (L_p^{\downarrow} + L_p^{\uparrow})^{r} = (L_p^{\downarrow})^{r} + (L_p^{\uparrow})^{r}$.
\end{lemma}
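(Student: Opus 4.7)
The plan is to reduce the lemma to the fundamental algebraic-topology identity $B_p B_{p+1} = \mathbf{0}$ (the boundary of a boundary is zero), which is recalled in the excerpt. Recall that $L_p^{\downarrow} = B_p^\top B_p$ and $L_p^{\uparrow} = B_{p+1} B_{p+1}^\top$. Taking the transpose of $B_p B_{p+1} = \mathbf{0}$ also yields $B_{p+1}^\top B_p^\top = \mathbf{0}$, so I have the identity available in both forms.

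First, I would show that the two pieces of the Hodge Laplacian annihilate each other: computing directly,
\[
L_p^{\downarrow} L_p^{\uparrow} = B_p^\top B_p B_{p+1} B_{p+1}^\top = B_p^\top \,\mathbf{0}\, B_{p+1}^\top = \mathbf{0},
\]
and symmetrically $L_p^{\uparrow} L_p^{\downarrow} = B_{p+1} (B_{p+1}^\top B_p^\top) B_p = \mathbf{0}$. In particular $L_p^{\downarrow}$ and $L_p^{\uparrow}$ commute trivially, both of their products being the zero matrix.

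Next, I would expand $(L_p^{\downarrow} + L_p^{\uparrow})^r$ by distributivity. Every resulting summand is an ordered product of $r$ factors, each equal to $L_p^{\downarrow}$ or $L_p^{\uparrow}$. If such a product contains at least one occurrence of each operator, then somewhere in the word two consecutive factors must be of opposite type, producing an internal subfactor $L_p^{\downarrow} L_p^{\uparrow}$ or $L_p^{\uparrow} L_p^{\downarrow}$; by the previous step, that subfactor vanishes, hence the whole product vanishes. The only surviving summands are the pure word $(L_p^{\downarrow})^r$ and the pure word $(L_p^{\uparrow})^r$, giving the claimed identity $(L_p^{\downarrow} + L_p^{\uparrow})^r = (L_p^{\downarrow})^r + (L_p^{\uparrow})^r$.

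The main obstacle is essentially nonexistent: once the annihilation $L_p^{\downarrow} L_p^{\uparrow} = L_p^{\uparrow} L_p^{\downarrow} = \mathbf{0}$ is established from $B_p B_{p+1} = \mathbf{0}$, the binomial/word-expansion argument is routine. The only point worth stating carefully is why mixed words vanish without appealing to commutativity in a stronger sense than we actually have — commutativity here is degenerate (both orders give zero), so I should phrase the final step in terms of a zero adjacent pair rather than a naive binomial theorem, which presupposes commutativity to collect like terms.
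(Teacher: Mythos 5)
Your proof is correct and rests on exactly the same key fact as the paper's: the cross terms $L_p^{\downarrow}L_p^{\uparrow}$ and $L_p^{\uparrow}L_p^{\downarrow}$ vanish because $B_pB_{p+1}=\mathbf{0}$. The paper packages this as an induction on $r$ (killing the two mixed terms in $L_p\cdot L_p^{r-1}$ at each step) whereas you expand $(L_p^{\downarrow}+L_p^{\uparrow})^r$ directly into words and observe that every mixed word contains an adjacent opposite pair; this is only a difference in bookkeeping, and your care in avoiding a naive appeal to the binomial theorem is well placed.
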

\begin{proof}
    We prove the lemma by induction on the power exponent $r$.
    As the base case, we consider exponent $r=1$, for which the equivalence clearly holds as $L_p^{1} = (L_p^{\downarrow} + L_p^{\uparrow})^{1} = L_p^{\downarrow} + L_p^{\uparrow} = (L_p^{\downarrow})^{1} + (L_p^{\uparrow})^{1}$.
    
    For the induction step, we assume that $L_p^{r-1} = (L_p^{\downarrow})^{r-1} + (L_p^{\uparrow})^{r-1}$ holds true and prove that $L_p^{r} = (L_p^{\downarrow})^{r} + (L_p^{\uparrow})^{r}$. $L_p^{r}$ is defined as:
    $$
        L_p^{r} = L_p L_p^{r-1} = (L_p^{\downarrow} + L_p^{\uparrow}) (L_p^{\downarrow} + L_p^{\uparrow})^{r-1} 
    $$
    
    By the induction hypothesis: 
    \begin{align*}
        (L_p^{\downarrow} &+ L_p^{\uparrow}) (L_p^{\downarrow} + L_p^{\uparrow})^{r-1} = \\
        &= (L_p^{\downarrow} + L_p^{\uparrow}) \bigl((L_p^{\downarrow})^{r-1} + (L_p^{\uparrow})^{r-1}\bigr) = \\
        &= L_p^{\downarrow} (L_p^{\downarrow})^{r-1} + L_p^{\downarrow} (L_p^{\uparrow})^{r-1} \\
        &\quad+ L_p^{\uparrow} (L_p^{\downarrow})^{r-1} + L_p^{\uparrow} (L_p^{\uparrow})^{r-1} = \\
        &= (L_p^{\downarrow})^{r} + L_p^{\uparrow} (L_p^{\downarrow})^{r-1} + L_p^{\downarrow} (L_p^{\uparrow})^{r-1} + (L_p^{\uparrow})^{r}
    \end{align*}
    The second term $L_p^{\uparrow} (L_p^{\downarrow})^{r-1}$ can be rewritten as $B_{p+1} B_{p+1}^\top (B_p^\top B_p)^{r-1} =  B_{p+1} B_{p+1}^\top \underbrace{ B_p^\top B_p }_\text{$(r-1)$ times} = \mathbf{0}$, since $B_{p+1}^\top B_p^\top = (B_p B_{p+1} )^\top$, which equals $\mathbf{0}^\top$ by definition.
    
    Similarly, the third term $ L_p^{\downarrow} (L_p^{\uparrow})^{r-1}$ can be rewritten as $B_p^\top B_p (B_{p+1} B_{p+1}^\top)^{r-1} = B_p^\top B_p \underbrace{ B_{p+1} B_{p+1}^\top }_\text{$(r-1)$ times} = \mathbf{0}$, since, again, $B_p B_{p+1} = \mathbf{0}$.
\end{proof}

We now proceed to prove Theorem~\ref{thm:gen_workshop} that our MPSN can be reduced to SCNN of \citet{ebli2020simplicial} or \citet{bunch2020simplicial}. We split the proofs in two individuals.
\begin{proof}[Proof of Theorem~\ref{thm:gen_workshop} in Reference to \citet{ebli2020simplicial}]
We first rephrase Equation \eqref{eq:SNN} for a generic layer and multi-dimensional input and output $p$-simplicial representations:
\begin{align*}
   H^{t+1} &= \psi 
  \Bigl(\sum_{r=0}^{R} L_p^r H^{t} W^{t+1}_{r}\Bigr)\\ 
  &= \Bigl( H^{t} W^{t+1}_{0} + \sum_{r=1}^{R} L_p^r H^{t} W^{t+1}_{r} \Bigr).
\end{align*}
The convolutional operation for $p$-simplex $\sigma$ can be rewritten as
\begin{align*}
   h_{\sigma}^{t+1} &= \psi \Bigl( h_{\sigma}^{t} W^{t+1}_{0} + \sum_{r=1}^{R} (L_p^r)_\sigma H^{t} W^{t+1}_{r} \Bigr) \\
   &= \psi \Bigl(h_{\sigma}^{t} W^{t+1}_{0} + \sum_{r=1}^{R} \sum_{\tau \in \gS_p} (L_p^r)_{\sigma,\tau} h_{\tau}^{t} W^{t+1}_{r}\Bigr),
\end{align*}
\noindent where $h_{\sigma}^{t+1}$ denotes the $\sigma$-th row of matrix $H^{t+1}$, $(L_p^r)_\sigma$ denotes the $\sigma$-th row of operator $L_p^r$ and $(L_p^r)_{\sigma,\tau}$ its entry at position $\sigma,\tau$. We now leverage on Lemma~\ref{lemma:power} to rewrite the convolution operation on simplex $\sigma$ as
\begin{align*}
   h_{\sigma}^{t+1}
    &= \psi\Bigl( 
        \sum_{r=1}^{R} \sum_{\tau \in \gS_p} \left( ((L_p^{\downarrow})^r)_{\sigma,\tau} + ((L_p^{\uparrow})^r)_{\sigma,\tau}\right) h_{\tau}^{t} W^{t+1}_{r}\\
        &\quad\qquad+ h_{\sigma}^{t} W^{t+1}_{0}
    \Bigr) \\
    &= \psi\Bigl( 
        \sum_{\tau \in \gS_p} \sum_{r=1}^{R} ((L_p^{\downarrow})^r)_{\sigma,\tau} h_{\tau}^{t} W^{t+1}_{r} \\
        &\quad\qquad+ \sum_{\tau \in \gS_p} \sum_{r=1}^{R} ((L_p^{\uparrow})^r)_{\sigma,\tau} h_{\tau}^{t} W^{t+1}_{r} 
        + h_{\sigma}^{t} W^{t+1}_{0}
    \Bigr).
\end{align*}


Considering that matrices $L_p^{\downarrow}$ and $L_p^{\uparrow}$ only convey the notions of, respectively, lower and upper simplex adjacency, the equation above is easily interpreted in terms of our message passing scheme by setting
\begin{align*}
    M_{\uparrow}^{t+1}\big( h_{\sigma}^{t}, h_{\tau}^{t}, h_{\sigma \cup \tau}^{t} \big) &= 
        \sum_{r=1}^{R} ((L_p^{\uparrow})^r)_{\sigma,\tau} h_{\tau}^{t} W^{t+1}_{r}\\
    M_{\downarrow}^{t+1}\big( h_{\sigma}^{t}, h_{\tau}^{t}, h_{\sigma \cap \tau}^{t} \big) &= 
        \sum_{r=1}^{R} ((L_p^{\downarrow})^r)_{\sigma,\tau} h_{\tau}^{t} W^{t+1}_{r}\\
    U^{t+1}\Big( h_{\sigma}^{t}, \{m_{i}^{t}(\sigma)\}_{i=\downarrow,\uparrow} \Big) &= 
        \psi \Bigl( h_{\sigma}^{t} W^{t+1}_{0} \\
        &\quad \qquad+{m_{\uparrow}}_\sigma^{t+1} + {m_{\downarrow}}_\sigma^{t+1} \Bigr),
\end{align*}
\noindent and by letting $\text{AGG}$ be the summation over the extended notion of upper and lower $R$-neighborhoods, that is neighborhoods comprising $p$-simplices at a distance from $\sigma$ which is at most $R$ ($\tau$ is at distance $d$ from $\sigma$ if there exists a sequence of upper- (respectively, lower-) adjacent $p$-simplices $[\nu_0, \nu_1, \mathellipsis, \nu_d]$ such that $\nu_0 = \sigma, \nu_d = \tau$).
\end{proof}
It is noteworthy that, contrary to our general proposed framework, the two message functions $M_{\uparrow}^{t+1}$ and $M_{\downarrow}^{t+1}$ share the same learnable parameters $\{W^{t+1}_{r}\}_{r=1}^{R}$, and that no signal of order lower or higher than $p$ is involved in computation.

\paragraph{SC-Conv} \citet{bunch2020simplicial} proposed a convolutional operator that can be applied on 2-dimensional simplicial complexes. The construction is based on the canonical normalised Hodge Laplacians defined by \citet{schaub2020random}; starting from the operators, the authors generalize the Graph Convolutional Network model proposed in \citet{KiWe2017} by defining the corresponding adjacency matrices with added self-loops:
\begin{align*}
    H_0^{t+1} &= \psi(\mD_1^{-1} \mB_1 H_1^t W^t_{0, 1} + \tilde{\mA}_0^u H_0^t W_{0, 0}^t) \\ 
    H_1^{t+1} &= \psi(\mB_2 \mD_3 H_2^t W^t_{1, 2} \\
    &\qquad+ (\tilde{\mA}_1^d + \tilde{\mA}_1^u) H_1^t W_{1, 1}^t \\ 
    &\quad\qquad+ \mD_2 \mB_1^\top \mD_1^{-1} H_0^t W_{1, 0}^t) \\
    H_2^{t+1} &= \psi(\tilde{\mA}_2^d H_2^t W_{2, 2}^t + \mD_4 \mB_2^\top \mD_5^{-1} H_1^t W_{2, 1}^t).
\end{align*}
We defer readers to Section 2.1 of the original paper for the definitions of the $\{\tilde{A}_{i}^\alpha\}_{i=0}^{2}$ and $\{\mD\}_{i=1}^{5}$ matrices in the above equations. Differently from \citet{ebli2020simplicial}, this scheme models the interactions between signals defined at different dimensions. It can, nonetheless, be rewritten in terms of our message passing framework.
\begin{proof}[Proof of Theorem~\ref{thm:gen_workshop} in Reference to \citet{bunch2020simplicial}]
We report here the derivation for message passing on $1$-simplices (edges) as it is the most general. The derivation on $0$- and $2$-simplices can simply be obtained as a special case of this last.

First, let us conveniently denote:
\begin{align*}
    \Delta_{1,1} = \tilde{\mA}_1^d + \tilde{\mA}_1^u,\;\; 
    \Delta_{1,0} = \mD_2 \mB_1^\top \mD_1^{-1},\;\;
    \Delta_{1,2} = \mB_2 \mD_3.
\end{align*}
We note that the convolutional operation for a generic $1$-simplex $\delta$ can be rewritten as:
\begin{align*}
    h & _{1,\delta}^{t+1} = \\
    &= \psi \Bigl(
        (\Delta_{1,2})_\delta H_{2}^{t} W_{1,2}^{t} + (\Delta_{1,1})_\delta H_{1}^{t} W_{1,1}^{t} \\
        &\enspace+ (\Delta_{1,0})_\delta H_{0}^{t} W_{1,0}^{t}
    \Bigr) \\
    &= \psi \Bigl(
        \sum_{\tau \in \mathcal{S}_2}(\Delta_{1,2})_{\delta,\tau} h_{2,\tau}^{t} W_{1,2}^{t} + \sum_{\gamma \in \mathcal{S}_1}(\tilde{\mA}_1^u)_{\delta,\gamma} h_{1,\gamma}^{t} W_{1,1}^{t} \\
        &\enspace+ \sum_{\gamma \in \mathcal{S}_1}(\tilde{\mA}_1^d)_{\delta,\gamma} h_{1,\gamma}^{t} W_{1,1}^{t} + \sum_{\sigma \in \mathcal{S}_0}(\Delta_{1,0})_{\delta,\sigma} h_{0,\sigma}^{t} W_{1,0}^{t}
    \Bigr) \\
    &= \psi \Bigl(
        \sum_{\tau \in \gC(\delta)}(\Delta_{1,2})_{\delta,\tau} h_{2,\tau}^{t} W_{1,2}^{t} + \bigl( 
            \sum_{\gamma \in \gN_{\uparrow}(\delta)}(\tilde{\mA}_1^u)_{\delta,\gamma} h_{1,\gamma}^{t} \\
            &\enspace+ \sum_{\gamma \in \gN_{\downarrow}(\delta)}(\tilde{\mA}_1^d)_{\delta,\gamma} h_{1,\gamma}^{t} + (\Delta_{1,1})_{\delta,\delta} h_{1,\delta}^{t} \bigr) W_{1,1}^{t} \\
        &\enspace+ \sum_{\sigma \in \gB(\delta)}(\Delta_{1,0})_{\delta,\sigma} h_{0,\sigma}^{t} W_{1,0}^{t}
    \Bigr).
\end{align*}


This equation is interpreted in terms of our message passing scheme by setting:
\begin{align*}
    M_{1,\uparrow}^{t+1}\big( h_{1,\delta}^{t}, h_{1,\gamma}^{t}, h_{2,\delta \cup \gamma}^{t} \big) 
        &= (\tilde{\mA}_1^u)_{\delta,\gamma} h_{1,\gamma}^{t}\\
    M_{1,\downarrow}^{t+1}\big( h_{1,\delta}^{t}, h_{1,\gamma}^{t}, h_{0,\delta \cap \gamma}^{t} \big) 
        &= (\tilde{\mA}_1^d)_{\delta,\gamma} h_{1,\gamma}^{t}\\
    M_{1,\gC}^{t+1}\big( h_{1,\delta}^{t}, h_{2,\tau}^{t} \big)
        &= (\Delta_{1,2})_{\delta,\tau} h_{2,\tau}^{t} \\
    M_{1,\gB}^{t+1}\big( h_{1,\delta}^{t}, h_{0,\sigma}^{t} \big)
        &= (\Delta_{1,0})_{\delta,\sigma} h_{0,\sigma}^{t} \\
    U_{1}^{t+1}\Big( h_{1,\delta}^{t}, \{m_{i}^{t}(\delta)\}_{i=\gB,\gC,\downarrow,\uparrow} \Big) \\
        \quad\quad = \psi \Bigl(
            {W_{1,1}^{t}}^\top \bigl( (\Delta_{1,1})_{\delta,\delta} h_{1,\delta}^{t} 
            &+ {m_{\uparrow}}(\delta)^{t+1} + {m_{\downarrow}}(\delta)^{t+1} \bigr) \\
            &+ {W_{1,2}^{t}}^\top {m_{\gC}}(\delta)^{t+1} \\
            &+ {W_{1,0}^{t}}^\top {m_{\gB}}(\delta)^{t+1}
        \Bigr), \\
\end{align*}
\noindent and $\text{AGG} = \sum$.
\end{proof}

\paragraph{Simplicial Complex Pooling} Typically, CNNs interleave convolutional layers with pooling layers, which spatially downsample the input features. This problem has proven to be much more challenging on graph domains, where no obvious graph coarsening strategy exists. While we have not employed any simplicial complex pooling operators in this work, this is likely going to be an exciting direction of future work. Some of the previously proposed graph pooling operators can readily be extended to simplicial complexes. For instance, TopK pooling \citep{cangea2018towards} can be performed over the vertices of the complex and the higher-order structures associated with the pooled vertices can be maintained at the next layer. Other operators, such as the topologically-motivated pooling proposed by \citet{bodnar2020deep} can already handle simplicial complexes, but it has only been employed in a graph setting. 

\section{Equivariance and Invariance}
\label{app:equiv_inv}

One would expect MPSNs to be aware of the two symmetries of a simplicial complex: relabeling of the simplicies in the complex and, optionally, changes in the orientation of the complex if the complex is oriented. We address these two below.

\paragraph{Permutation Equivariance} Let $\gK$ be simplicial $p$-complex described by a sequence $\rmB = (B_1, \ldots, B_p)$ of boundary matrices  with a sequence $\rmH = (H_0, \ldots, H_p)$ of feature matrices. Let $\rmP$ be a sequence of permutation matrices $(P_0, \ldots, P_p)$ with $P_i \in \sR^{S_i \times S_i}$. Denote by $\rmP \rmH$ the sequence of permuted features $(P_0 H_0, \ldots, P_p H_p)$ and by $\rmP \rmB \rmP^\top$, the sequence of permuted boundary matrices $(P_0 B_1 P_{1}^\top, \ldots, P_{p-1} B_p P_{p}^\top)$. 

\begin{definition}[Permutation equivariance] A function $f: (\rmH^{\rm in}, \rmB) \mapsto \rmH^{\rm out}$ is (simplex) permutation equivariant if $f(\rmP \rmH^{\rm in}, \rmP \rmB \rmP^\top) = \rmP f(\rmH^{\rm in}, \rmB)$ for any sequence of permutation operators $\rmP$. 
\end{definition}

\begin{definition}[Permutation invariance]
Similarly, we say that a function $f$ is (simplex) permutation invariant if $f(\rmP \rmH^{\rm in}, \rmP \rmB \rmP^\top) = f(\rmH^{\rm in}, \rmB)$ for any sequence of permutation operators $\rmP$. 
\end{definition}

\begin{proof}[Proof of Theorem \ref{theo:mpsn_equiv}]
We abuse the notation slightly and use $P_i(a)$ to denote the corresponding permutation function of $P_i$ acting on indices. 

We focus on a single simplex $\sigma$ of an arbitrary dimension $n$ and the corresponding $\tau = P_n(\sigma)$. Let $h_\sigma$ be the output feature of simplex $\sigma$ for an MPSN layer taking $(\rmH, \rmB)$ as input and $\bar{h}_\tau$ the output features of simplex $\tau$ for the same MPSN layer taking $(\rmP \rmH, \rmP \rmB \rmP^\top)$ as input. We will now show they are equal by showing that the multi-set of features being passed to the message, aggregate and update functions are the same for the two simplicies. 

The boundary of the $n$-simplicies in $\rmB$ are given by the non-zero elements of $B_n$. Similarly, in $\rmP \rmB \rmP^\top$, these are given by the non-zero elements of $P_{n-1} B_n P_{n}^\top$. That is the same boundary matrix but with the rows and columns permuted according to $P_{n}$. This then gives
\begin{equation*}
   (B_n)_{a,b} = (P_{n-1} B_n P_{n}^\top)_{P_{n-1}(a), P_n(b)}.
\end{equation*}
In particular, this holds for $b = \sigma, P_n(b) = \tau$.  Because the feature matrices for the $(n-1)$-simplices in $\rmP \rmH$ are also accordingly permuted with $P_{n-1} H_{n-1}$, $\sigma$ and $\tau$ receive the same message from their boundary simplices. The proof follows similarly for co-boundary adjacencies. 

The lower adjacenices of the $n$-simplicies in $\rmB$ are given by the non-zero entries of $B_n^\top B_n$. Similarly, the lower adjacencies in $\rmP \rmB$ are given by the non-zero elements of 
\begin{align*}
   (P_{n-1} B_n P_{n}^\top)^\top (P_{n-1} B_n P_{n}^\top)  
   &= P_{n} B_n^\top P_{n-1}^\top P_{n-1} B_n P_{n}^\top \\
   &= P_{n} B_n^\top B_n P_{n}^\top.  
\end{align*}
That is, the same adjacencies as in $\rmB$, but with the rows and columns are accordingly permuted. Therefore, we obtain
\begin{equation*}
   (B_n^\top B_n)_{a,b} = (P_{n} B_n^\top B_n P_{n}^\top)_{P_n(a), P_n(b)},
\end{equation*}
which, in particular, holds for $a = \sigma,  P_n(a) = \tau$. Since the feature matrices for the $n$-simplices in $\rmP \rmB$ are also permuted with $P_n H_n$, $\sigma$ and $\tau$ receive the same message from the lower adjacenct simplicies. This can be similarly shown for upper adjacencies. 
\end{proof}

Permutation invariance for an MPSN model can be obtained by stacking multiple permutation equivariant layers followed by a permutation invariant readout function. The readout must be permutation invariant in each of the multi-sets of simplices of different dimensions it takes as input. Any commonly used GNN readout functions such as sum or mean could be employed. 

\paragraph{Orientation Equivariance} Another symmetry that we would like to preserve is orientation. For instance, we know that the homology of the complex is invariant to the particular orientation that was chosen. Therefore, for certain applications where orientations are of interest, we would like to design MPSN layers that are orientation equivariant and MPSN networks that are orientation invariant.

When a simplex $\sigma$ changes its orientation, it changes its relative orientation (i.e. $+1$ becomes $-1$ and vice-versa) with respect to all the neighbours in the complex, and the signature of its own features is flipped. Overall, this can be modelled by multiplying the rows and columns of the boundary matrices where $\sigma$ appears and the corresponding row of the feature matrix by $-1$. We formalise this intuition below.   

Consider a simplicial $p$-complex $\gK$. Let $\rmT = (T_0, \ldots, T_p)$ be a sequence of diagonal matrices with $T_i(j, j) = \pm 1$ for all $i > 0$ and any $j$ and $T_0 = I$. The latter constraint is due to the fact that vertices have trivial orientation and it cannot be changed. Using the same notation as for permutation equivariance, define $\rmT \rmH = (T_0 H_0, \ldots, T_p H_p)$ and $\rmT \rmB \rmT = (T_0 B_1 T_1, \ldots, T_{p-1} B_p T_p)$. 

\begin{definition}[Orientation equivariance] A function $f$ mapping $(\rmH^{\rm in}, \rmB) \mapsto \rmH^{\rm out}$ is
orientation equivariant if 
$f(\rmT \rmH^{\rm in}, \rmT \rmB \rmT) = \rmT f(\rmH^{\rm in}, \rmB)$ for any sequence of operators $\rmT$.
\end{definition}

\begin{definition}[Orientation invariance] A function $f$ is orientation invariant if 
$f(\rmT \rmH^{\rm in}, \rmT \rmB \rmT) =  f(\rmH^{\rm in}, \rmB)$ for any sequence of operators $\rmT$.
\end{definition}

\begin{remark}
Orientation invariance can be trivially achieved by considering the absolute value of the features and by treating the complex as an unoriented one. 
\end{remark}

However, it is (in general) desirable to use equivariance at the intermediate layers and make the network invariant with a final transformation (readout). Constructing such an MPSN layer requires imposing additional constraints on the structure of the message, update and aggregate functions. We will start with a simple MPSN layer that can be expressed in matrix form to develop an intuition, and then we will generalise this example. 

For instance, we can consider the model from \eqref{eq:m} used in our linear regions analysis, with a convenient vectorised form
\begin{align}
\label{eq:mpsn_single_level}
    H_i^{\rm{out}} &= \psi \big(B_i^\top B_i H_i^{\rm{in}} W_1 +  H_i^{\rm{in}} W_2 + B_{i+1} B_{i+1}^\top  H_i^{\rm{in}} W_3 \nonumber \\
    &\quad\quad +B_i^\top H^{\rm{in}}_{i-1}W_4 + B_{i+1} H^{\rm{in}}_{i+1}W_5 \big),
\end{align}
where we have separated the upper and lower adjacencies in two and included the feature matrix $H_i$ as an additional term. This corresponds to an MPSN with a message function that multiplies the feature of the neighbour by the relative orientation ($\pm 1$), sum-based aggregation and an update function that adds the incoming messages to its linearly transformed features and passes the output through $\psi$. 

\begin{proposition}
When $\psi$ is an odd activation function, the MPSN layer from Equation \eqref{eq:mpsn_single_level} is orientation equivariant. 
\end{proposition}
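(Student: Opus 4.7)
The plan is a direct term-by-term computation of how the pre-activation argument transforms under the orientation change $\rmH \mapsto \rmT\rmH$, $\rmB \mapsto \rmT\rmB\rmT$, followed by a one-line argument that moves $T_i$ through the odd entry-wise activation $\psi$. The crucial algebraic facts I will use repeatedly are that each $T_j$ is diagonal with $\pm 1$ entries, hence $T_j^2 = I$ and $T_j^\top = T_j$.

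First, I will compute how each of the five summands inside $\psi$ in \eqref{eq:mpsn_single_level} transforms. For the Laplacian-like lower term, $B_i^\top B_i \mapsto (T_{i-1}B_i T_i)^\top (T_{i-1}B_i T_i) = T_i B_i^\top B_i T_i$ since the middle $T_{i-1}^2$ collapses to the identity; combined with $H_i \mapsto T_i H_i$ and another $T_i^2=I$ cancellation, this gives $B_i^\top B_i H_i W_1 \mapsto T_i\,B_i^\top B_i H_i W_1$. The self-term $H_i W_2 \mapsto T_i H_i W_2$ is immediate. The upper Laplacian term transforms analogously, with $B_{i+1}B_{i+1}^\top \mapsto T_i B_{i+1} B_{i+1}^\top T_i$ and the surviving $T_{i+1}^2 = I$ cancelling. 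For the two cross-dimension boundary terms, $B_i^\top H_{i-1}^{\rm in} W_4 \mapsto T_i B_i^\top T_{i-1}\,T_{i-1} H_{i-1}^{\rm in} W_4 = T_i B_i^\top H_{i-1}^{\rm in} W_4$, and symmetrically $B_{i+1} H_{i+1}^{\rm in} W_5 \mapsto T_i B_{i+1} H_{i+1}^{\rm in} W_5$. Factoring $T_i$ out of the sum shows that the entire pre-activation argument $X$ is sent to $T_i X$.

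The final step is to push $T_i$ through $\psi$. Since $T_i$ is diagonal with entries $\pm 1$, multiplication by $T_i$ on the left just flips the signs of selected rows of $X$. For an entry-wise activation $\psi$, oddness gives $\psi(-s) = -\psi(s)$, so $\psi(T_i X) = T_i \psi(X)$. Combining with the previous step yields $H_i^{\rm out} \mapsto T_i H_i^{\rm out}$, i.e.\ orientation equivariance.

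I do not anticipate a real obstacle: the result is essentially bookkeeping, and the only non-trivial observations are (i) that the $T_{j}^2 = I$ cancellations line up so that exactly one factor of $T_i$ survives on the left of each summand, and (ii) that oddness (not merely sign-symmetry of $\psi'$) is exactly what is needed to commute the diagonal sign flip $T_i$ past the nonlinearity. I would also note in passing that the argument generalizes verbatim from this specific closed-form layer to the general MPSN layer of Theorem~\ref{theo:mpsn_orient_equiv}, provided the message function multiplies neighbour features by their relative orientation and the message/aggregate/update functions are odd, since then each neighbour's contribution carries the same $T_i$ factor and summation preserves it.
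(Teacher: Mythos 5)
Your proof is correct and follows essentially the same route as the paper's: substitute the transformed boundary and feature matrices, use $T_j^2 = I$ and $T_j^\top = T_j$ to cancel everything except a single left factor of $T_i$ on the pre-activation, and then commute the diagonal $\pm 1$ matrix past the entry-wise odd activation. The paper states the cancellation in one line where you spell it out term by term, but the argument is identical.
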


\begin{proof}
Let $H_i^{\rm{out}} = f_i(B_{i}, B_{i+1}, H_i^{\text{in}}, H_{i-1}^{\rm{in}}, H_{i+1}^{\text{in}})$ be the application of one such MPSN layer on the $i$-dimensional simplices. Note that the output depends only on the simplices of dimension $i-1, i$, and $i+1$. 

For this MPSN layer to be equivariant, we need to show that 
\begin{align}
\label{eq:orient_equiv_one_level}
f_i&(T_{i-1} B_i T_i, T_i B_{i+1} T_{i+1}, T_i H_i^{\text{in}}, T_{i-1}H_{i-1}^{\text{in}}, T_{i+1} H_{i+1}^{\rm{in}}) \nonumber \\ 
&= T_i H_i^{\rm{out}}
\end{align}
Because $T_i T_i = I$ and $T_i^\top = T_i$ for all $i$, we can easily rewrite LHS as
\begin{align} \label{eq:mpsn_orient_equiv}
    \psi&\Bigl(T_{i}
    \bigl(B_i^\top B_i H_i^{\text{in}} W_1 +  H_i^{\text{in}} W_2 + B_{i+1} B_{i+1}^\top H_i^{\text{in}} W_3 \nonumber \\
    &\quad\quad + B_i^\top H^\text{in}_{i-1}W_4 + B_{i+1} H^\text{in}_{i+1}W_5 \bigr)\Bigr).
\end{align}
Notice that if $\psi$ and $T_i$ commute, then this becomes $T_i H_i^{\text{out}}$. We remark that they commute when $\psi$ is an odd function. \end{proof}

We can generalise this particular architecture to obtain a more general MPSN layer that is orientation equivariant. Based on Equation \eqref{eq:orient_equiv_one_level}, we can see that for an arbitrary simplex $\sigma$, the output features are invariant with respect to changes in the orientation of the neighbours and it is only affected by changes in its own orientation.

Let us rewrite the equivalent constraint from Equation \eqref{eq:orient_equiv_one_level} for a local aggregation of the neighbourhood of a simplex $\sigma$ in the complex. Let us denote by $o_{\sigma, \tau}$, the relative orientation between $\sigma$ and $\tau$. Consider an abstract local neighbourhood aggregation function $A$ taking as input the features $h_\sigma$ and a multi-set of tuples containing the features of the neighbours of $\sigma$ and the relative orientations $\ldblbrace (o_{\sigma, \tau}, h_\tau) \mid \tau \in \gN(\sigma) \rdblbrace$. For brevity, we consider a single generic multi-set of neighbours denoted here by $\gN(\sigma)$, but multiple types can be easily integrated into $A$.  

Based on Equation \eqref{eq:orient_equiv_one_level}, we know $A$ is invariant to changes in the orientations of the neighbours. This means that it is invariant in changes in the signature of the relative orientations and of the features of the neighbours:
\begin{align}
\label{eq:orient_equiv_local_1}
A(h_\sigma, \ldblbrace (o_{\sigma, \tau}, h_\tau) \rdblbrace) = A(h_\sigma, \ldblbrace \pm(o_{\sigma, \tau}, h_\tau) \rdblbrace).
\end{align}
At the same time, we know that the output of $A$ must change its sign when the  orientation of $\sigma$ changes. That is, when all the relative orientations change their signs together with the features of $\sigma$. This leads to a second equation:
\begin{align}
\label{eq:orient_equiv_local_2}
A(h_\sigma, \ldblbrace (o_{\sigma, \tau}, h_\tau) \rdblbrace) = -A(-h_\sigma, \ldblbrace (-o_{\sigma, \tau}, h_\tau) \rdblbrace).
\end{align}
In other words, $A$ must be even in each $(o_{\sigma, \tau}, h_\tau)$ and also odd in $(h_\sigma, \ldblbrace o_{\sigma, \tau} \rdblbrace)$. 

Consider an auxiliary function $A^*$ taking as input the feature vector $h_\sigma$ and a multi-set, such that $A^*$ is an odd function:
$$A^*(h_\sigma, \ldblbrace h_\tau \rdblbrace) = - A^*(-h_\sigma, \ldblbrace -h_\tau \rdblbrace).$$

\begin{lemma}
\label{lemma:local_orient_equiv}
Let $A^*$ be an odd local aggregator as above. Then the function $A$ defined by:
$$A(h_\sigma, \ldblbrace (o_{\sigma, \tau}, h_\tau) \rdblbrace) := A^*(h_\sigma, \ldblbrace o_{\sigma, \tau} \cdot  h_\tau \rdblbrace),$$
where $\cdot$ denotes scalar-vector multiplication, satisfies Equations \ref{eq:orient_equiv_local_1} and \ref{eq:orient_equiv_local_2}. 
\end{lemma}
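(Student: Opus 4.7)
The plan is to verify both required identities directly by unfolding the definition $A(h_\sigma, \ldblbrace (o_{\sigma,\tau}, h_\tau) \rdblbrace) := A^*(h_\sigma, \ldblbrace o_{\sigma,\tau} \cdot h_\tau \rdblbrace)$ and exploiting the bilinearity of scalar multiplication together with the assumed oddness of $A^*$. The two identities are essentially independent, so I would handle them in sequence.

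First, for \eqref{eq:orient_equiv_local_1}, I would observe that flipping the sign of any individual tuple $(o_{\sigma,\tau}, h_\tau)$ to $(-o_{\sigma,\tau}, -h_\tau)$ leaves the scalar-vector product $o_{\sigma,\tau} \cdot h_\tau$ untouched, since $(-o_{\sigma,\tau}) \cdot (-h_\tau) = o_{\sigma,\tau} \cdot h_\tau$. Applying this to an arbitrary subset of tuples in the multi-set, the argument fed to $A^*$ is identical on both sides of \eqref{eq:orient_equiv_local_1}, so equality is immediate.

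Next, for \eqref{eq:orient_equiv_local_2}, I would compute
\[
A(-h_\sigma, \ldblbrace (-o_{\sigma,\tau}, h_\tau) \rdblbrace) = A^*\bigl(-h_\sigma, \ldblbrace (-o_{\sigma,\tau}) \cdot h_\tau \rdblbrace\bigr) = A^*\bigl(-h_\sigma, \ldblbrace -(o_{\sigma,\tau} \cdot h_\tau) \rdblbrace\bigr),
\]
and then invoke the oddness of $A^*$ (that is, $A^*(h_\sigma, \ldblbrace h_\tau \rdblbrace) = -A^*(-h_\sigma, \ldblbrace -h_\tau \rdblbrace)$ with $h_\tau$ replaced by $o_{\sigma,\tau} \cdot h_\tau$) to conclude this equals $-A^*(h_\sigma, \ldblbrace o_{\sigma,\tau} \cdot h_\tau \rdblbrace) = -A(h_\sigma, \ldblbrace (o_{\sigma,\tau}, h_\tau) \rdblbrace)$, as required.

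There is no real obstacle here: the proof is a one-line calculation in each case, and the construction $o_{\sigma,\tau}\cdot h_\tau$ was precisely designed so that a single sign-flip in either the orientation or the feature is absorbed into the product, while a joint sign-flip of $h_\sigma$ and all $o_{\sigma,\tau}$ is pushed through $A^*$ by oddness. The only minor point to be careful about is that the multi-set on the right of \eqref{eq:orient_equiv_local_1} allows the sign to be chosen independently per element; the argument above handles this since the invariance holds term by term.
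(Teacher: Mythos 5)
Your proof is correct and follows essentially the same route as the paper's: unfolding the definition of $A$, absorbing per-element sign flips into the product $o_{\sigma,\tau}\cdot h_\tau$ for the first identity, and pushing the joint sign flip through the oddness of $A^*$ for the second. Your remark that the sign may be chosen independently per element of the multi-set is a worthwhile point of care that the paper's $\pm$ shorthand glosses over.
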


\begin{proof}
First we prove $A$ satisfies Equation \eqref{eq:orient_equiv_local_1}. Substituting the definition of $A$: 
\begin{align}
    A(h_\sigma, \ldblbrace \pm(o_{\sigma, \tau}, h_\tau) \rdblbrace) &= A^*(h_\sigma, \ldblbrace \pm o_{\sigma, \tau} \cdot \pm h_\tau \rdblbrace) \nonumber \\
    &= A^*(h_\sigma, \ldblbrace o_{\sigma, \tau} \cdot h_\tau \rdblbrace) \nonumber \\
    &= A(h_\sigma, \ldblbrace (o_{\sigma, \tau}, h_\tau) \rdblbrace).
\end{align}
For proving $A$ satisfies Equation \eqref{eq:orient_equiv_local_2} we substitute the definition again: 
\begin{align}
    A(-h_\sigma, \ldblbrace (-o_{\sigma, \tau}, h_\tau) \rdblbrace) &= A^*(-h_\sigma, \ldblbrace  -o_{\sigma, \tau} \cdot h_\tau \rdblbrace) \nonumber \\
    &= -A^*(h_\sigma, \ldblbrace o_{\sigma, \tau} \cdot h_\tau \rdblbrace) \nonumber \\
    &= -A(h_\sigma, \ldblbrace (o_{\sigma, \tau}, h_\tau) \rdblbrace).
\end{align}
\end{proof}

\begin{proof}[\textbf{Proof of Theorem \ref{theo:mpsn_orient_equiv}}]
The proof follows immediately from Lemma \ref{lemma:local_orient_equiv} since the composition of odd functions is odd. Therefore, the function $A^*$ can be implemented using a combination of odd message, aggregate and update functions.  
\end{proof}

It is useful to note that any MLP with odd activation functions and without bias units is an odd function. Additionally, all linear aggregators (mean, sum) are odd. Therefore, most commonly used GNN layers could be adopted by simply using an odd activation function. 

Another important remark is that if the local aggregator ignores the relative orientations, then Equations \ref{eq:orient_equiv_local_1} and \ref{eq:orient_equiv_local_2} simply imply that the local aggregator of the MPSN layer must be odd in the features of $\sigma$ and even in the features of the neighbours. If the features of $\sigma$ are not used in the message function, then an even message function combined with an update function that is odd in $h_\sigma$ also satisfies the equations. 

\begin{remark}
A (permutation invariant) aggregation-based readout layer first applying an element-wise even function $\psi$ to the elements of its input multiset is orientation invariant since
\begin{equation}
    AGG( \ldblbrace \psi(x_i) \rdblbrace) = AGG( \ldblbrace \psi(\pm x_i) \rdblbrace).
\end{equation}
\end{remark}

A flexible way to implement such a layer is to consider a function $\psi(x) = f(x) + f(-x)$, where $f$ is an arbitrary non-odd function that could be parametrised as an MLP. It is easy to see that such a function is even since summation commutes: 
$$\psi(x) = f(x) + f(-x) = f(-x) + f(x) = \psi(-x).$$

To conclude this section, we note that the concurrent work of \citet{glaze2021principled} has also analysed the equivariance properties of simplicial networks in the context of a  specific convolutional operator that is similar to the one in Equation \eqref{eq:mpsn_single_level}. However, our simplicial message passing scheme is more general, and effectively subsumes the convolutional operator presented in their work. Therefore, the equivariance analysis performed here also applies to a much larger family of models. 

\section{Cubical Complexes}

\label{app:cubical}
A message passing approach for cell complexes, a generalisation of simplicial complexes, has  been proposed by \citet{hajij2020cell}. 
 Cell Complexes have a similar hierarchical structure to simplicial complexes, and the approach of \citet{hajij2020cell} is very close to ours. At the same time, arbitrary cell complexes might be too general for certain applications. Here we discuss how our approach could be extended to cubical complexes,  a type of cell complex that from a theoretical point of view is similar to simplicial complexes \cite{kac04}, and which is used in applications. We plan to extend our approach to cubical complexes  in future work.
\begin{figure}[th]
    \centering
     \includegraphics[scale=0.4]{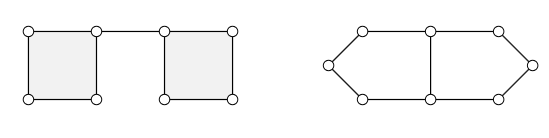}
    \caption{Example of two cubical complexes whose underlying graphs  cannot be distinguished by 1-WL, but are not isomorphic as  cubical complexes.}
    \label{fig:cub_expresiveness}
\end{figure}

\begin{figure}[th]
    \centering
\includegraphics[width=.8\columnwidth]{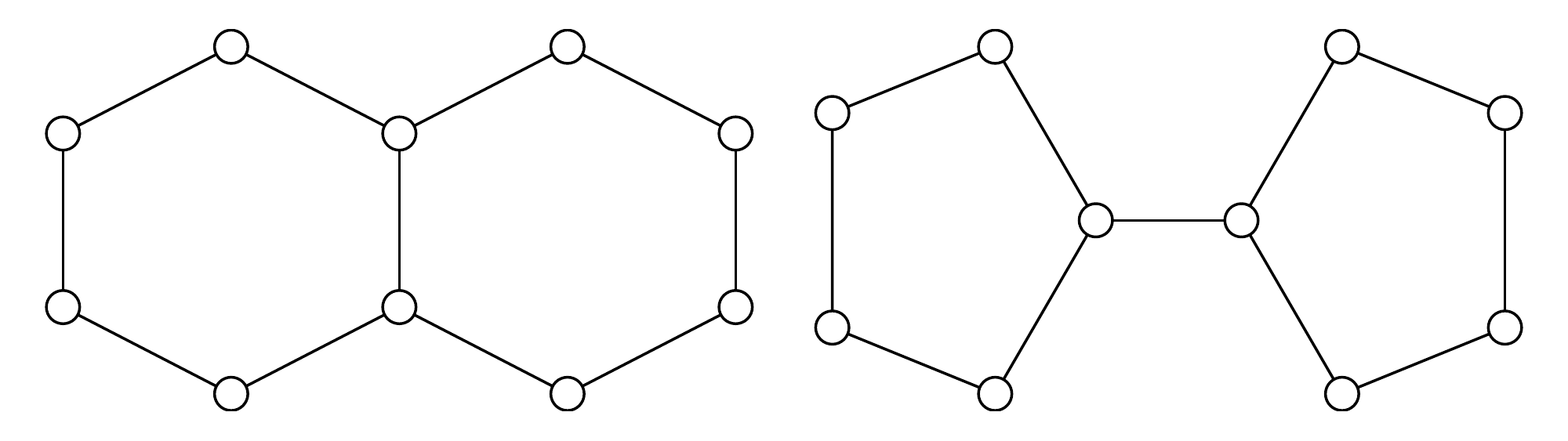}
    \vspace{-5pt}
    \caption{\emph{Decalin} and \emph{Bicyclopentyl}: Non-isomorphic molecular graphs that cannot be distinguished neither by WL nor by SWL, when based on clique complexes (here the nodes represent carbon atoms, and edges are chemical bonds).}
    \label{fig:cyclo}
\end{figure}

Cubical complexes are  cell complexes consisting of unions of points, edges, squares, cubes, and higher-dimensional hypercubes \cite{kac04}. While they  are less well-known than simplicial complexes, they can nevertheless  be used in applications, and  are  better suited than simplicial complexes to the study of certain types of data sets, see, e.g. \citet{WCV11}. Our approach to message passing can be directly implemented also for cubical complexes. One could similarly define a WL test for cubical complexes.

 In Figure \ref{fig:cub_expresiveness} we provide an example of two cubical complexes that are not isomorphic, while their underlying graphs are isomorphic. We note that the clique complexes of the underlying graphs are also isomorphic. 
 The examples in Figures \ref{fig:cc_expresiveness} and \ref{fig:cub_expresiveness} raise the question of whether we could design tests that are better suited to take into account the topological information encoded in simplicial and cubical complexes, such as homeomorphism tests, which have already been studied, see e.g. \citet{BM90}. In particular, such tests should be able to distinguish the graphs in Figure~\ref{fig:cyclo}, a pair of real world molecular graphs that cannot be distinguished by the SWL test based on clique complexes. 

A natural next step in our work is to perform tests on data sets that are more naturally modelled by cubical complexes, such as digital images, for which they can provide computational speed-ups compared to simplicial complexes \cite{WCV11,kac04}.

\section{Additional Experimental Details}\label{app:exps}

\subsection{Strongly Regular Graphs}
The original SR datasets can be found at \url{http://users.cecs.anu.edu.au/˜bdm/data/graphs.html}. Families in Figure~\ref{fig:exp_sr} are referred to as per the SR($v, k, \lambda, \mu$) notation (see Section~\ref{app:higher_order_wl_and_srgs}). We set $\varepsilon=0.01$ as the Euclidean distance threshold for which graphs are deemed non-isomorphic. Graphs are embedded in a $16$-dimensional space by running an untrained, $5$-layer, MPSN model on the associated clique complexes, obtained by considering any $(p+1)$-clique as a $p$-simplex in the complex. Nodes are initialised with a constant, scalar, unitary signal, while higher-order simplices with the sum of the features of the constituent nodes. As already mentioned in the main text, the specific MPSN architecture is dubbed `SIN' given its resemblance to the GIN model \citep{GIN}. The following message passing operations are employed to compute the $t+1$ intermediate representation for a $p$-simplex $\sigma$:
\begin{align}\label{eq:sin_mp_equations_sr}
    h_{\sigma}^{t+1} &= \mathrm{MLP}^{t}_{U,p} \Big( 
        \mathrm{MLP}^{t}_{\gB,p} \big( (1+\eps_{\gB}) h_{\sigma}^{t} + \sum_{\delta \in \gB(\sigma)} h_{\delta}^{t} \big) \parallel \nonumber \\
    &\enspace \mathrm{MLP}^{t}_{\uparrow,p} \big( (1+\eps_{\uparrow}) h_{\sigma}^{t} + \sum_{\tau \in \gN_{\uparrow}(\sigma)} M_{\uparrow,p}^{t} ( h_{\tau}^{t}, h_{\sigma \cup \tau}^t ) \big) \Big) \nonumber \\
    M_{\uparrow,p}^{t} &( h_{\tau}^{t}, h_{\sigma \cup \tau}^t ) = \mathrm{MLP}^{t}_{M,p} \big( h_{\tau}^{t} \parallel h_{\sigma \cup \tau}^t \big),
\end{align}
\noindent where $\parallel$ indicates concatenation, $\mathrm{MLP}^{t}_{\gB,p}$ and $\mathrm{MLP}^{t}_{\uparrow,p}$ are 2-Layer Perceptrons and $\mathrm{MLP}^{t}_{U,p}$, $\mathrm{MLP}^{t}_{M,p}$ consist of a dense layer followed by a non-linearity. Parameters $\eps_{\gB}$, $\eps_{\uparrow}$ are set to zero. Notice how, in accordance with Theorem~\ref{thm:sparse swl}, messages are only aggregated from boundary and upper-adjacent simplices (for which we also include the representations of the shared coboundary simplices).
After the $L=5$ message passing layers, for a $d$-complex $\gK$ we perform readout operations as follows:
\begin{align}
    h_\gK = \mathrm{MLP}^{\gK}\Big( \sum_{p=0}^d \mathrm{MLP}^{\gS}_{p}\big( \sum_{\sigma \in \gS_p} h^{L}_\sigma \big) \Big),
\end{align}
\noindent where $\gS_p$ is the set of simplices at dimension $p$, and $\mathrm{MLP}^{\gK}$, $\mathrm{MLP}^{\gS}_{p}$ are dense layers followed by non-linearities, set to ELU \citep{ELU} throughout the whole architecture. All layer sizes are set to $16$, except for the one of $\mathrm{MLP}^{\gS}_{p}$ layers, which is set to $2 \cdot 16 = 32$.

The `MLP-sum' baseline replaces the $5$ message passing layers by applying a independent dense layer, followed by non-linearity, at each complex dimension. The readout scheme replicates the one in SIN we just described above. In this model, the initial dense layers have size $256$, $\mathrm{MLP}^{\gS}_{p}$ layers have size $2 \cdot 256= 512$ and $\mathrm{MLP}^{\gK}$ has size $16$ as in SIN. We apply ELU non-linearities in this model as well.

Finally, we report here that we ran a comparable GIN architecture and empirically verified its inability to distinguish any pair, theoretically justified by its expressive power being upper-bounded by $1$-WL~\citep{GIN}.

\begin{table*}[!t]
    \centering
    \caption{Hyperparameter configurations on TUDatasets.}
    \label{tab:tu_hyper}
    \vspace{1mm}
    \resizebox{0.6\textwidth}{!}{
    \begin{tabular}{l|cccccc}
        \toprule
        Hyperparameter &
            PROTEINS &
            NCI1 &
            IMDB-B &
            IMDB-M &
            RDT-B &
            RDT-M5K \\
        \midrule
        Batch Size &
            128 &
            128 &
            32 &
            128 &
            32 &
            32 \\
        Initial LR &
            0.01 &
            0.001 &
            0.0005 &
            0.003 &
            0.001 &
            0.001 \\
        LR Dec. Steps &
            20 &
            50 &
            20 &
            50 &
            50 &
            20 \\
        LR Dec. Strength &
            0.5 &
            0.5 &
            0.5 &
            0.5 &
            0.5 &
            0.9 \\
        Hidden Dim. &
            32 &
            32 &
            64 &
            64 &
            64 &
            64 \\
        Drop. Rate &
            0.5 &
            0.5 &
            0.5 &
            0.5 &
            0.0 &
            0.0 \\
        Drop. Pos. &
            a &
            a &
            a &
            a &
            b &
            b \\
        Num. Layers &
            3 &
            4 &
            4 &
            4 &
            4 &
            4 \\
        \bottomrule
    \end{tabular}
    }
\end{table*}

\subsection{Trajectory Prediction}

\paragraph{Dataset Details} To generate the synthetic complex, we uniformly sample 1,000 points in the unit square, we perform a Delaunay triangulation of these points and then remove the triangles (and points) intersecting with two pre-defined regions of the plane to create the two holes. To generate the trajectories, we first randomly sample a random point from the top-left corner of the complex and an end point from the bottom-right corner of the complex. We then perform a random walk on the edges of the complex. With a probability of $0.9$, the neighbour closest to the end-point is chosen, and with a probability $0.1$, a random neighbour is chosen. To generate the two classes, we set random points either from the bottom-left corner or the top-right corner as an intermediate checkpoint. We generate 1,000 train trajectories and 200 test trajectories.

The ocean drifter benchmark has been designed in light of a study conducted by~\citet{schaub2020random} showing that clock- and counterclockwise drifter trajectories around Madagascar can be disentangled when projected on the harmonic subspace of a normalised Hodge $1$-Laplacian. Such operator is the one associated with the simplicial complex representing the underlying geographical map. The original drifter measurements have been collected by the Global Drifter Program, and have been made publicly available by the U.S. National Oceanic and Atmospheric Administration (NOAA) at \url{http://www.aoml.noaa.gov/envids/gld/}. The simplicial complex structure around the Madagascar island is constructed in accordance with~\citet{schaub2020random} and~\citet{glaze2021principled}, i.e. by considering an underlying hexagonal tiling and by generating $2$-simplices from the triangles induced by the local adjacency between tiles. Edge-flows are also determined consistently with these works, that is by drawing trajectories based on the position of measurement buoys relative to the underlying hexagonal tiles.
We readapted the preprocessing script from~\citet{glaze2021principled}, which is publicly available at \url{https://github.com/nglaze00/SCoNe_GCN/tree/master/ocean_drifters_data}. The final dataset we use consists of 160 train trajectories and 40 test trajectories. 

For both benchmarks, the simplicial complexes in the training dataset use a fixed arbitrary orientation, while each test trajectory uses a random orientation obtained by multiplying the boundary and feature matrices with a random diagonal matrix $T_1$ with $\pm1$ entries as in Appendix \ref{app:equiv_inv}. 

\paragraph{Architecture} All the evaluated models use a similar architecture. The MPSN Id, Tanh and ReLU models use layers of the form:
\begin{align}
    \psi \Big( W_0 h^t_\sigma +
        \sum_{\tau \in \ndown(\sigma)} W_1 h_{\tau}^{t} o_{\sigma, \tau} + \sum_{\delta \in \gN_{\uparrow}(\sigma)} W_2 h_{\delta}^{t} o_{\sigma, \delta}
    \Big), \nonumber
\end{align}
where $\psi$ represents the non-linearity from the model's name and $o_{\sigma, \tau} = \pm 1$ represents the relative orientation between $\sigma$ and $\tau$. Based on Theorem \ref{theo:mpsn_orient_equiv}, this layer is orientation equivariant if $\psi$ is an odd function. Thus, MPSN Id and MPSN Tanh are equivariant, whereas MPSN ReLU is not. 

The $L_0$-inv MPSN uses a similar layer, but drops the relative orientations: 
\begin{equation*}
    \psi \Big( W_0 h^t_\sigma +
        \sum_{\tau \in \ndown(\sigma)} W_1 h_{\tau}^{t} + \sum_{\delta \in \gN_{\uparrow}(\sigma)} W_2 h_{\delta}^{t}
    \Big). 
\end{equation*}
The GNN $L_0$-inv model uses the same layer, but without upper adjacencies:
\begin{equation*}
    \psi \Big( W_0 h^t_\sigma +
        \sum_{\tau \in \ndown(\sigma)} W_1 h_{\tau}^{t} 
    \Big).
\end{equation*}
Additionally, the $L_0$-inv models are made orientation invariant by taking the absolute value of the input features before passing them through the network. 

All models use a sum-readout followed by an MLP. The MPSN Id, Tanh and ReLU models use the absolute value of the features before the readout. 

\paragraph{Hyperparameters \& Evaluation Procedure} All the models use the same hyperparameters. We use $4$ layers, the hidden size is set to $64$, the batch size is $64$, and the initial learning rate is set to $0.001$ and decayed by a factor of $0.5$ with a dataset-depending frequency. On the synthetic dataset we train for $100$ epochs and reduce the learning rate every $20$ epochs. On the ocean drifter dataset we train for $250$ epochs and reduce the learning rate every $50$ epochs. We report the final training and test accuracy at the end of training for all models over five seeds.

\subsection{Real-World Graph Classification}

The graph classification tasks from this set of experiments are commonly used to benchmark GNNs and are available at \url{https://chrsmrrs.github.io/datasets/}. We lift each graph to a clique $2$-complex, where $0$-simplices are initialised with the original node signals as prescribed in~\citet{GIN}. Higher dimensional simplices are initialised with the mean of the constituent nodes. We employ a SIN model which applies the following message passing scheme to compute the $t+1$ intermediate representation for $p$-simplex $\sigma$:
\begin{align}\label{eq:sin_mp_equations_tu}
    h_{\sigma}^{t+1} &= 
        \mathrm{MLP}^{t}_{U,p} \Big( 
            \mathrm{MLP}^{t}_{\gB,p}\big( (1+\eps_{\gB}) h_{\sigma}^{t} + \sum_{\delta \in \gB(\sigma)} h_{\delta}^{t} \big) \parallel \nonumber \\
    &\qquad \mathrm{MLP}^{t}_{\uparrow,p}\big( (1+\eps_{\uparrow}) h_{\sigma}^{t} + \sum_{\tau \in \gN_{\uparrow}(\sigma)} h_{\tau}^{t} \big)\Big),
\end{align}
\noindent where $\parallel$ indicates concatenation, $\mathrm{MLP}^{t}_{\gB, p}$ and $\mathrm{MLP}^{t}_{\uparrow, p}$ are 2-Layers Perceptrons endowed with Batch Normalization \citep{BN} (BN) and ReLU activations, $\mathrm{MLP}^{t}_{U,p}$ is a dense layer followed by the application of BN and ReLU. The only exception is represented by Reddit datasets, on which BN was observed to cause instabilities in the training process and was not applied. Parameters $\eps_{\gB}$ and $\eps_{\uparrow}$ are set to zero and are not optimised. As it is possible to notice in Equation~(\ref{eq:sin_mp_equations_tu}), upper message $m_{\uparrow, p}(\sigma)$ is computed as $m_{\uparrow, p}(\sigma) = \sum_{\tau \in \gN_{\uparrow}(\sigma)} h_{\tau}^{t}$, thus explicitly disregarding the representation of shared co-boundary simplices $h_{\sigma \cup \tau}^t$: this choice showed to yield better performance on these benchmarks. We follow~\citet{GIN} and apply a Jumping Knowledge (JK) scheme \citep{JK}: at dimension $p$, readout is performed on the concatenation of the $p$-simplex representations obtained at each message passing iteration, projected by the non-linear dense layer $\mathrm{MLP}^{\gS}_{p}$. Such a readout operation is dataset specific: we apply either averaging or summation as prescribed by~\citet{GIN}. Final complex representations are obtained by summing the obtained $p$-embeddings at dimensions $0$ (`nodes') and $2$ (`triangles'). One last dense layer is applied to output class predictions. Training is performed with Adam optimiser~\cite{kingma2014adam}, starting from an initial learning rate decayed with a certain frequency.

We employ grid-search to tune batch size, hidden dimension, dropout rate, initial learning rate along with its decay steps and strengths, number of layers and the dropout position (before the final readout on the complex (`b'), or after it (`a'). We report the hyperparameter configurations in Table~\ref{tab:tu_hyper}. As in experiments on SR graphs, the size of $\mathrm{MLP}^{\gS}_{p}$ layers is doubled w.r.t.\ the other layers in the architecture.

\subsection{Implementation \& Availability}

We employed PyTorch~\citep{NEURIPS2019_9015} for all our experiments and we built on top of the PyTorch Geometric library~\citep{fey2019fast} to implement our simplicial message passing scheme. 

As for clique-lifting procedures, we relied on the simplex tree data-structure \citep{Boissonnat2014TheST} implemented in the topological data analysis library Gudhi \citep{gudhi:urm}. Empirically, in large-scale experiments involving graphs with $10^6$ nodes, simplex trees are able to generate the clique complex up to a constant desired dimension in a computational cost that is linear in the number of simplicies in the complex \citep{Boissonnat2014TheST}.

We refer readers to \url{https://arxiv.org/abs/2103.03212} 
for a link to our official code repository. We provide support for SC datasets, simplicial message passing networks, oriented SCs, (higher-order) batching, clique complexes and other additional features.

\end{document}